\documentclass[11pt]{article}
\usepackage[preprint]{acl}
\usepackage{times}
\usepackage{latexsym}
\usepackage[T1]{fontenc}
\usepackage[utf8]{inputenc}
\usepackage{microtype}
\usepackage{inconsolata}
\usepackage{graphicx}
\usepackage{booktabs}
\usepackage{array}
\usepackage{amsmath}
\usepackage{amssymb}
\usepackage{amsthm}
\usepackage{bm}
\usepackage{multirow}
\usepackage{enumitem}
\usepackage{float}
\usepackage{tikz}
\usetikzlibrary{arrows.meta,positioning,fit,calc,patterns,decorations.pathreplacing}
\usepackage{pgfplots}
\usepgfplotslibrary{groupplots}
\pgfplotsset{compat=1.17}

\makeatletter\g@addto@macro\UrlBreaks{\do\-}\makeatother

\newcommand{\fromcode}[1]{\textcolor{green!45!black}{#1}}
\newcommand{\fitted}[1]{\textcolor{red!75!black}{#1}}
\newfloat{algorithm}{tbp}{loa}
\floatname{algorithm}{Algorithm}

\theoremstyle{plain}
\newtheorem{theorem}{Theorem}
\newtheorem{lemma}{Lemma}
\theoremstyle{remark}
\newtheorem*{remark}{Remark}

\newcommand{\gib}{\,GiB}
\newcommand{\bytes}{\,\mathrm{bytes}}

\newcommand{\system}{\mbox{DisaggKD}}
\newcommand{\sqlk}{\mbox{SQ-L+K}}
\newcommand{\ballk}{\mbox{Bal-L+K}}
\newcommand{\dt}{d_{t}}
\newcommand{\ds}{d_{s}}
\newcommand{\Vloc}{V_{\mathrm{loc}}}
\newcommand{\ind}[1]{\hspace{1.6em}\parbox[t]{5.0cm}{\raggedright #1}}
\newcolumntype{L}[1]{>{\raggedright\arraybackslash}p{#1}}

\title{DisaggKD: Efficient Knowledge Distillation through Teacher–Student Disaggregation and a Fused Loss}

\author{Ruixuan Tu\thanks{Work was done during the internship at SB Intuitions Corp.} \\
  The University of Tokyo, RIKEN AIP \\
  \texttt{turx2003@gmail.com}
  \And
  Yuyang Dong \\
  SB Intuitions Corp. \\
  \texttt{yuyang.dong@sbintuitions.co.jp}}

\begin{document}
\maketitle

\begin{abstract}
Knowledge distillation (KD) frameworks place teacher and student on the same GPUs, so per-GPU memory limits the teachers that fit. \textbf{\system{}} separates them across nodes: teacher replicas stream hidden states to the student, a KD loss over row windows and vocabulary shards equals the full-vocabulary loss in exact arithmetic without forming the full-vocabulary logits, and a prefetch ring paces producer and trainer. Closed-form memory and traffic models locate the memory limits of each layout. On the Qwen3.5-35B-A3B to 9B pair, the disaggregated layout matches the LM loss of the colocated run with the same loss and is faster than that run; our fused loss also lets a colocated run train on as few nodes at faster speed, while the split is memory efficient when the student spans several nodes. On the larger pairs, colocated runs with a fused loss still run OOM where the split trains, because teacher weights and student states fill the budget. Code will be released upon acceptance.
\end{abstract}

\section{Introduction}

Knowledge distillation (KD) trains a student on its teacher's output distribution \citep{hinton2015distilling}, and at pretraining scale SlimQwen uses it to recover the quality of compressed students \citep{tang2026slimqwen}.

The 3 open frameworks we ran place the teacher on the student's GPUs, keeping its weights beside the student's weights, gradients, and optimizer state \citep{megatronbridge,modelopt} or offloading it between phases \citep{nemorl,zhang2026kdflow}, as KDFlow optionally does. Colocated, the 2 models share 1 per-GPU memory budget, and the KD loss adds a logits tensor over the full vocabulary for every token the student processes. That budget is the limit: with ModelOpt's loss the colocated stack on the Qwen3.5 pair at 3 nodes runs out of memory (OOM) where the separated one fits (Figure~\ref{fig:overview}), and our fused loss moves it but not the weights of larger teachers (Table~\ref{tab:pairs}).

We separate the 2 roles: the teacher runs forward-only on its own nodes, so no GPU holds the layers of both models. The teacher sends its final hidden states, which are narrower than its vocabulary-wide logits, and the student recovers the logits with a copy of the teacher's output head. A fused kernel then evaluates the KD loss over vocabulary shards from per-row statistics that combine across tensor-parallel (TP) ranks, without forming a full-vocabulary tensor, so trainer memory stays low.

Whether the split pays depends on per-GPU memory and on NVLink and InfiniBand traffic, so we derive closed-form estimates of both.

Our contributions are:
\begin{itemize}[itemsep=2pt, topsep=3pt, leftmargin=*]
\item \textbf{Estimation models} (\S\ref{sec:estimation}) of per-GPU memory (\S\ref{sec:memory}) and of traffic inside the student, inside the teacher, and between them (\S\ref{sec:movement}), for both layouts (\S\ref{sec:predictions}); the memory model is fitted on measured peaks and OOM events, the traffic model checked against node counters (Appendices~\ref{app:memory},~\ref{app:traffic}).
\item \textbf{A disaggregated KD algorithm}, \system{} (\S\ref{sec:algorithm}): teacher replicas stream hidden states over NVIDIA Collective Communications Library (NCCL) point-to-point (P2P) links (\S\ref{sec:roles}), the student evaluates a fused loss over row windows and vocabulary shards, equal to the full-vocabulary loss in exact arithmetic (\S\ref{sec:loss}, Appendix~\ref{app:loss-equiv}), and a prefetch ring paces both (\S\ref{sec:pacing}).
\item \textbf{A measured evaluation} (\S\ref{sec:experiments}). On the Qwen3.5 pair the language-modeling (LM) losses of the 2 layouts match, the disaggregated run is faster than the colocated runs with ModelOpt's loss balancer and with the same loss, and a colocated run with our fused loss is as fast or faster on as few nodes but holds more memory per GPU when the student spans several nodes (\S\ref{sec:results}); on 3 larger pairs the disaggregated layout trains where the colocated stack runs OOM, also with a fused loss, and is faster on Llama.
\end{itemize}

\begin{figure}[t]
\centering
\begin{tikzpicture}[>={Stealth[length=5pt]}]
\begin{axis}[
  width=6.0cm, height=4.1cm, scale only axis, font=\small,
  ybar stacked, bar width=0.8cm, ymin=0, ymax=192,
  ytick={0,79.11,160}, yticklabels={0,79,160},
  ylabel={GiB per GPU},
  axis line style={line width=0.5pt}, tick style={line width=0.5pt},
  ymajorgrids, grid style={gray!25, line width=0.3pt},
  symbolic x coords={col3,colf,ours_pr,ours_tr}, xtick=data,
  xticklabels={ModelOpt, fused-W, producer, trainer},
  enlarge x limits=0.15, clip=false,
  legend style={at={(0.5,1.04)}, anchor=south, draw=none, inner sep=1pt, column sep=4pt,
    legend columns=3, legend cell align=left},
  legend image post style={scale=0.9},
]
\addplot[fill=blue!35, draw=black!50, line width=0.3pt] table[x=cell, y=S, col sep=comma] {figs/mem_overview.csv};
\addplot[fill=cyan!25, draw=black!50, line width=0.3pt] table[x=cell, y=A, col sep=comma] {figs/mem_overview.csv};
\addplot[fill=orange!45, draw=black!50, line width=0.3pt] table[x=cell, y=T, col sep=comma] {figs/mem_overview.csv};
\addplot[fill=red!40, draw=black!50, line width=0.3pt] table[x=cell, y=K, col sep=comma] {figs/mem_overview.csv};
\addplot[fill=gray!35, draw=black!50, line width=0.3pt] table[x=cell, y=O, col sep=comma] {figs/mem_overview.csv};
\addplot+[stack plots=false, only marks, mark=diamond*, mark size=3pt, black, mark options={fill=white, draw=black, line width=0.7pt}]
  coordinates {(col3,88.0) (colf,78.9) (ours_pr,39.4) (ours_tr,57.1)};
\legend{$S$, $A$, teacher, $\kappa$, other, measured}
\draw[dashed, line width=0.8pt, black!85] (rel axis cs:0,0.4120) -- (rel axis cs:1,0.4120);
\node[anchor=south east, black!85] at (rel axis cs:1,0.4120) {$M$};
\draw[->, line width=1.2pt] ($(axis cs:ours_pr,110)+(0.2cm,0cm)$) -- ($(axis cs:ours_tr,110)-(0.2cm,0cm)$);
\node[anchor=south] at ($(axis cs:ours_pr,110)!0.5!(axis cs:ours_tr,110)$) {$\bm{H}_t^{(q)}$};
\draw[black!60, line width=0.5pt] ($(axis cs:col3,0)+(-0.5cm,-0.88cm)$) -- ($(axis cs:colf,0)+(0.5cm,-0.88cm)$);
\node[anchor=north] at ($(axis cs:col3,0)!0.5!(axis cs:colf,0)+(0cm,-0.92cm)$) {colocated};
\draw[black!60, line width=0.5pt] ($(axis cs:ours_pr,0)+(-0.5cm,-0.88cm)$) -- ($(axis cs:ours_tr,0)+(0.5cm,-0.88cm)$);
\node[anchor=north] at ($(axis cs:ours_pr,0)!0.5!(axis cs:ours_tr,0)+(0cm,-0.92cm)$) {disaggregated};
\end{axis}
\end{tikzpicture}
\caption{Per-GPU memory of the Qwen3.5-35B-A3B to 9B pair on 3 nodes: bars stack student states $S$, activations and buffers $A$, teacher weights, logits buffers (colocated) or the memory $\kappa$ retained by the streamed loss (trainer), and other memory; diamonds are measured peaks (ModelOpt bar: allocated plus requested bytes at the OOM error). The first colocated bar (ModelOpt's loss) is the model's demand; the fused-W bar (Table~\ref{tab:pairs}; fused loss, model cross-entropy fusion off) splits a measured peak, with $\kappa$ the remainder. Trainer and producer nodes hold separate stacks and exchange hidden-state slots $\bm{H}_t^{(q)}$ over NCCL P2P.}
\label{fig:overview}
\end{figure}

\section{Related Work}
\label{sec:related}

\paragraph{KD for language models.}
At LLM scale the usual objective is the forward KL to the teacher's next-token distribution, used alone or with hidden-state losses in compression pipelines and model releases \citep{muralidharan2024compact,bercovich2025puzzle,gemmateam2024gemma2,nvidia2025nemotronnano2,liu2026ministral3} or studied as a mixture with the LM loss \citep{tang2026slimqwen,peng2025pretraining,busbridge2025distillation,lu2026strong}. Caching the teacher's top-$K$ logits stores $K$ values and indices per token \citep{ryskulov2026topk}; streaming avoids it.

\paragraph{KD frameworks.}
Table~\ref{tab:systems} compares the open frameworks we ran against \system{}; all 3 keep teacher and student on the same GPUs in the modes we ran: Megatron-Bridge ModelOpt \citep{megatronbridge,modelopt}, KDFlow \citep{zhang2026kdflow}, and NeMo-RL \citep{nemorl}. Like KDFlow and Maestro \citep{yuan2026maestro}, we send hidden states instead of logits, but stream the states from teacher replicas by NCCL P2P to every tensor-parallel rank of the student under a prefetch ring, with a vocabulary-sharded loss and memory and traffic models for both placements (\S\ref{sec:estimation}, \S\ref{sec:algorithm}).

\paragraph{Disaggregation and fused losses.}
Separating phases that contend for one device has paid off in LLM serving, with prefill and decode on separate pools \citep{zhong2024distserve,patel2024splitwise,qin2025mooncake}. RLHF systems place the actor, reference, and reward models flexibly \citep{sheng2025hybridflow} and search plans with a cost estimator \citep{mei2025real}; we separate the teacher of KD. On the student side, chunked and fused losses avoid materializing the full logits tensor in cross-entropy \citep{wijmans2025cutlosses,hsu2025liger} and in offline KD \citep{ryskulov2026topk}.

\section{Estimation Models}
\label{sec:estimation}

Whether a teacher--student pair can be distilled on a given set of nodes depends on 3 per-GPU quantities: peak memory, bytes moved over the intra-node fabric (NVLink), and bytes moved over the inter-node fabric (InfiniBand). We model these quantities for each layout at its optimal setting.

\paragraph{Notation.}
The teacher has $D_t$ dense and $E_t$ routed-expert parameters ($P_t=D_t{+}E_t$), $\ell_t$ layers, and hidden width $\dt$. The student has $D_s$, $E_s$, $\ell_s$, and $\ds$. Both share the vocabulary size $V$. A layout assigns the student tensor (TP), pipeline (PP), context (CP), data (DP), and expert (EP) parallel degrees $(\mathrm{tp},\mathrm{pp},\mathrm{cp},\mathrm{dp},\mathrm{ep})$ on $N$ nodes of 8 GPUs, each with budget $M{=}79.11$\gib{}. A sequence has $L$ tokens and an iteration processes $\mathrm{gbs}$ sequences, so $N_{\mathrm{tok}}=\mathrm{gbs}\cdot L$, each GPU handles $\tau=N_{\mathrm{tok}}/(\mathrm{dp}\,\mathrm{cp})$ tokens, and each DP replica runs $n_{\mathrm{mb}}=\mathrm{gbs}/\mathrm{dp}$ microbatches of one sequence. Activations take $b{=}2$ bytes per element, so $x$ elements occupy $x\,b\bytes$, and $\Vloc=V/\mathrm{tp}$ is the vocabulary shard of one TP rank (Appendix~\ref{app:glossary}).

\paragraph{Optimal setting.}
For a layout, let $\mathcal{G}$ be its legal parallelism grids (degrees divide layer, head, expert, and sequence counts and the world size). The optimal setting is the node-minimal grid $g^\star=\arg\min_{g\in\mathcal{G}} N(g)$ whose memory constraint holds, ties broken toward the smaller per-GPU peak. In the colocated layout the constraint couples both models to the student grid; in the disaggregated layout it splits into trainer and producer inequalities, and $R$ follows from the rate constraint.

\subsection{Memory}
\label{sec:memory}

A student rank stores bf16 weights, fp32 gradients, and a distributed AdamW state, which is sharded over the DP$\times$CP group of $\mathrm{dp}\cdot\mathrm{cp}$ ranks for dense parameters and over the expert data-parallel group of $\mathrm{edp}=\mathrm{dp}\,\mathrm{cp}\,\mathrm{tp}/\mathrm{ep}$ ranks for routed experts:
\begin{equation}
\label{eq:statics}
\begin{aligned}
S(g)&=(\fromcode{2}{+}\fromcode{4})\,P_{\mathrm{loc}}\bytes\\
&\quad+\fromcode{12}\Bigl(\tfrac{D_s}{\mathrm{tp}\,\mathrm{pp}\,\mathrm{dp}\,\mathrm{cp}}
 +\tfrac{E_s}{\mathrm{pp}\,\mathrm{ep}\,\mathrm{edp}}\Bigr)\bytes,\\[3pt]
P_{\mathrm{loc}}&=\tfrac{D_s}{\mathrm{tp}\,\mathrm{pp}}+\tfrac{E_s}{\mathrm{pp}\,\mathrm{ep}},\quad
\mathrm{edp}=\tfrac{\mathrm{dp}\,\mathrm{cp}\,\mathrm{tp}}{\mathrm{ep}}
\end{aligned}
\end{equation}
where $P_{\mathrm{loc}}$ counts the parameters of one rank (for the trainer, of the last pipeline stage, which holds the loss; the formula gives the stage average and Appendix~\ref{app:memory} adds the output layer $V\ds/\mathrm{tp}$) and routed experts are sharded over $\mathrm{pp}\cdot\mathrm{ep}$ only; coefficients are \fromcode{green} when fixed by the code, \fitted{red} when fitted to measurements, and black when set as an allowance (Table~\ref{tab:modules}).

\paragraph{Colocated.}
The colocated teacher is built at the student's grid in bf16, so its shard is $T_{\mathrm{col}}=\fromcode{2}\bigl(\tfrac{D_t}{\mathrm{pp}\,\mathrm{tp}}+\tfrac{E_t}{\mathrm{pp}\,\mathrm{ep}}\bigr)\bytes$ per rank. The ModelOpt logits-KL loss runs eagerly at the last pipeline stage. At its high-water mark the loss holds 7 fp32-equivalent logits-sized buffers $\in\mathbb{R}^{L/\mathrm{cp}\times V/\mathrm{tp}}$ (enumerated in Appendix~\ref{app:memory}), 1 for the loss-averaging all-reduce, and 1.5 for the backward pass:
\begin{equation}
\label{eq:colbuf}
\kappa_{\mathrm{col}}(g)=(\fitted{7}{+}1{+}\fitted{1.5})\cdot\frac{L}{\mathrm{cp}}\cdot\frac{V}{\mathrm{tp}}\cdot \fromcode{4}\bytes.
\end{equation}
The peak at the last pipeline rank is
\begin{equation}
\label{eq:colocate}
M_{\mathrm{col}}(g)=S+T_{\mathrm{col}}+\kappa_{\mathrm{col}}+A_{\mathrm{col}}+\fitted{\sigma_{\mathrm{col}}}\le M,
\end{equation}
where $A_{\mathrm{col}}=\fromcode{1}\tfrac{\ell_s}{\mathrm{pp}}\tfrac{L}{\mathrm{cp}\,\mathrm{tp}}\ds\, b\bytes$ holds the stored layer inputs of the 1 microbatch in flight under full activation recomputation (mixture-of-experts (MoE) students add the dispatch term of Table~\ref{tab:modules}), and $\fitted{\sigma_{\mathrm{col}}}{=}12.27$\gib{} is the device peak minus the torch-allocated peak, the mean of the 2 cp4 runs (Table~\ref{tab:modules}). The peak is the larger of this loss point and the teacher's forward-pass point. For the Qwen3.5 pair the logits buffers decide the grid: at tp 2 the loss alone needs 144\gib{} at $\mathrm{cp}{=}1$, so $\mathrm{cp}>1$ is required. The \ballk{} recipe (ModelOpt's balancer) holds 1 more.

\paragraph{Disaggregated.}
The 2 roles have separate budgets. The trainer's peak is on the loss stage, which holds the student stack \eqref{eq:statics}, the streamed-loss retention $\kappa_{\mathrm{str}}$ of \S\ref{sec:loss}, and backward-pass memory:
\begin{equation}
\label{eq:trainer}
M_{\mathrm{tr}}(g)=S+\kappa_{\mathrm{str}}+A_{\mathrm{str}}+\fitted{\sigma_{\mathrm{str}}}\le M,
\end{equation}
with $\kappa_{\mathrm{str}}$ of \eqref{eq:kstr}, which holds no full-row logits, the activation and temporary memory $A_{\mathrm{str}}$ (reference-cycle garbage, loss-backward and recompute buffers, and an expert-layer block), and the fitted remainder $\fitted{\sigma_{\mathrm{str}}}$, the mean of 8 trainers (Table~\ref{tab:modules}). The producer holds the teacher's checkpoint as a shard (routed experts in fp8 or bf16, the rest bf16 but Nemotron's shared experts; Appendix~\ref{app:fp8}), a fitted remainder $\varphi_p$ (forward-pass workspace, CUDA context, allocator), and NCCL buffers $\omega_p$:
\begin{equation}
\label{eq:producer}
\begin{aligned}
M_{\mathrm{pr}}(\mathrm{tp}_p,\mathrm{ep}_p)={}&\Bigl(\fromcode{2}\tfrac{D_t}{\mathrm{tp}_p}+\beta_e\tfrac{E_t}{\mathrm{tp}_p\,\mathrm{ep}_p}\Bigr)\bytes\\
&+\fitted{\varphi_p}+\omega_p\le M,
\end{aligned}
\end{equation}
where $\beta_e$ is 1 for fp8 routed experts and 2 for bf16 and $\fitted{\varphi_p}$ is calibrated (Table~\ref{tab:modules}). The producers keep pace with the trainer when $R\,\rho_{\mathrm{rep}}\ge N_{\mathrm{tok}}/T_{\mathrm{iter}}$ ($\rho_{\mathrm{rep}}$: replica forward rate, $T_{\mathrm{iter}}$: iteration time).

\begin{table*}[t]
\centering
\footnotesize
\setlength{\tabcolsep}{4pt}
\renewcommand{\arraystretch}{1.05}
\begin{tabular}{L{2.1cm}L{6.6cm}L{6.2cm}}
\toprule
 & Colocated (4 nodes, tp2/pp4/cp4/dp1) & Disaggregated (3 nodes: 2 trainer + 1 producer) \\
\midrule
\textbf{Memory}\newline per GPU (GiB) &
$S{+}T_{\mathrm{col}}{+}\kappa_{\mathrm{col}}{+}A_{\mathrm{col}}{+}\sigma_{\mathrm{col}}\le M$ \newline
11.5 + 15.8 + 36.0 + 0.25 + 12.3 $\approx$ predicted 75.8 (fit), measured 76.1 &
trainer $S{+}\kappa_{\mathrm{str}}{+}A_{\mathrm{str}}{+}\sigma_{\mathrm{str}}\le M$: predicted 55.8 (fit), measured 57.1 \newline
producer $\bigl(2\tfrac{D_t}{\mathrm{tp}_p}{+}\tfrac{E_t}{\mathrm{tp}_p}\bigr){+}\varphi_p{+}\omega_p\le M$: predicted 41.8 (fit), measured 39.4 \\
\bottomrule
\end{tabular}

\vspace{1pt}
\begin{tabular}{L{2.9cm}L{4.1cm}L{2.7cm}L{4.7cm}}
\toprule
GB per GPU (s at the limit) & Expression & Colocated & Disaggregated \\
\midrule
Student, NVLink & $\Phi^{\mathrm{tp}}_s+\Phi^{\mathrm{cp}}_s+\Phi^{\mathrm{dp}}_s$ & 701.7 (1.56\,s) & 1179.5 (2.62\,s) \\
Teacher, NVLink & $\Phi^{\mathrm{tp}}_t+\Phi^{\mathrm{cp}}_t+\Phi^{\mathrm{ep}}_t$ & 191.0 (0.42\,s) & 2405 (5.34\,s), producer \\
Student, InfiniBand & $\Phi^{\mathrm{pp}}_s$ per direction & 4.29 (0.09\,s) & 4.29 (0.09\,s) \\
Teacher, InfiniBand & $\Phi^{\mathrm{pp}}_t$ per direction & 2.15 (0.04\,s) & 0 \\
Transfer, InfiniBand & $\Phi^{\mathrm{xfer}}$ per last-stage rank & 0 & 4.29 (0.09\,s), 34.4 in total \\
NVLink, measured & all dimensions & 932.1 (2.07\,s) & 1360 (3.02\,s), trainer; 2485 (5.52\,s), producer \\
Last-stage node, Gbps received & & 4.56 calculated, 4.61 measured & 7.23 calculated, 7.35 measured \\
\bottomrule
\end{tabular}
\caption{Estimation models per layout at its optimal setting for the Qwen3.5-35B-A3B to 9B pair ($L{=}32768$, $\mathrm{gbs}{=}128$; other dimensions in Appendix~\ref{app:traffic}). Memory rows: predictions against measured device peaks (GiB). Traffic rows (\eqref{eq:move}, \eqref{eq:xfer}): GB sent by a GPU per iteration and, in parentheses, the time at the nominal limits (NVLink 450\,GB/s per GPU and direction; InfiniBand 400\,Gb/s per GPU); measured NVLink: counter bytes per GPU, iterations 3 to 12 (colocated) and 3 to 10 (disaggregated).}
\label{tab:estimates}
\end{table*}

\subsection{Data movement}
\label{sec:movement}

A GPU moves bytes for 3 reasons: the parallelism of the student, that of the teacher's forward pass (on the student's grid when colocated, on the producer's span when disaggregated), and the teacher-to-student transfer. For a model $x\in\{t,s\}$ with hidden width $d_x$, TP all-reduces \citep{shoeybi2019megatron}, pipeline activation transfers, and the student's gradient and parameter collectives send per GPU and iteration
\begin{equation}
\label{eq:move}
\begin{aligned}
\Phi^{\mathrm{tp}}_x&=\nu_x\,\tfrac{\ell_x}{\mathrm{pp}}\,\tfrac{2(\mathrm{tp}-1)}{\mathrm{tp}}\,\tau\, d_x b\bytes,\\
\Phi^{\mathrm{pp}}_x&=n_{\mathrm{mb}}\,\tfrac{L}{\mathrm{cp}}\,\tfrac{d_x b}{\mathrm{tp}}\bytes,\\
\Phi^{\mathrm{dp}}_s&=\tfrac{\mathrm{dp}\,\mathrm{cp}-1}{\mathrm{dp}\,\mathrm{cp}}\,P_{\mathrm{loc}}\,(4{+}2)\bytes,
\end{aligned}
\end{equation}
where $\nu_t{=}2$ and $\nu_s{=}6$ are 2 all-reduces per layer times the teacher's 1 forward pass and the student's 3 passes (forward, recompute, backward), and $\Phi^{\mathrm{pp}}_x$ is per stage boundary and direction. CP exchanges and EP all-to-all are in Table~\ref{tab:traffic}. Megatron's rank order (tp, cp, ep, dp, pp) with 8 GPUs per node decides the link: a dimension of degree $n$ whose ranks are $\Delta$ apart spans $n\Delta$ ranks, so the dimension stays on NVLink if $n\Delta\le8$ (one node) and otherwise uses InfiniBand at its node boundaries, so each dimension that spans nodes adds inter-node traffic. The transfer sends each slot once to every last-stage rank of the receiving group,
\begin{equation}
\label{eq:xfer}
\Phi^{\mathrm{xfer}}=n_{\mathrm{mb}}\,\tfrac{L}{\mathrm{cp}}\,\dt\, b\bytes
\end{equation}
per last-stage rank, or $\mathrm{gbs}\cdot\mathrm{tp}\cdot L\,\dt\, b\bytes$ in total; the lm\_head ($\dt\, V\, b\bytes$) is written once to a file.

\subsection{Predictions at the optimal settings}
\label{sec:predictions}

Table~\ref{tab:estimates} evaluates the models for the Qwen3.5-35B-A3B to 9B pair. At 4 nodes 2 grids with tp$\le$2 fit the budget and the colocated optimum has the lowest predicted peak; at 3 nodes no tp$\le$2 grid is predicted under the budget for ModelOpt's loss. The disaggregated configuration uses the (2+1) split of Table~\ref{tab:runs}, the most GPU-efficient split we ran (Table~\ref{tab:gpusec}), and both roles fit.

With ModelOpt's loss the colocated layout is capacity-bound: its logits buffers force $\mathrm{cp}\ge4$ and hence $\mathrm{dp}{=}1$ on 4 nodes at tp$\le$2. On the Qwen3.5 pair neither layout is bandwidth-bound: at the link limits the busiest GPU needs under 6 seconds of NVLink and 0.2 seconds of InfiniBand per iteration, and a hidden-state row carries $2\dt$ bytes where a logits row carries $2V$, 8$\times$ fewer for the Llama pair and 121$\times$ fewer for the Qwen3.5 pair.

\paragraph{Validation.}
Table~\ref{tab:validation} compares the models with measurements on the Qwen3.5 pair; Table~\ref{tab:accuracy} summarizes the memory accuracy on all 5 pairs and the traffic accuracy on the Qwen3.5 pair (Appendix~\ref{app:memory}). The colocated model classifies 17 of the 18 configurations outside its fit and 5 of the 6 that set constants correctly, and predicts the peaks of 9 runs outside its fit within 3.0\gib{}; the 2 errors are the Qwen3.5 \ballk{} fit row, at the budget, and the Nemotron run on 4 nodes (OOM in the teacher's forward pass). The trainer estimate is within 4.0\gib{} on 13 runs of the fitted families and the producer estimate within 8.4\gib{} on its 8 fitting runs. Inter-node traffic is within 2\% of the counters (Table~\ref{tab:nodes}), NVLink within 15\% (Table~\ref{tab:accuracy}).

\begin{figure*}[t]
\centering
\begingroup\medmuskip=2mu plus 1mu minus 1mu\thickmuskip=3mu plus 1mu minus 1mu
\begin{tikzpicture}[font=\small, >={Stealth[length=4pt]},
  blk/.style={draw=black!65, rounded corners=2pt, inner sep=2pt, align=center},
  band/.style={draw=black!55, line width=0.3pt, inner sep=0pt},
  rk0/.style={fill=gray!22}, rkj/.style={fill=orange!55}, rkL/.style={fill=pink!35},
  htk/.style={fill=blue!45!cyan!45}, hsk/.style={fill=green!45!black!35},
  proc/.style={draw=black!60, rounded corners=2pt, fill=gray!6, inner sep=2pt, align=center},
  hdr/.style={font=\small\bfseries, anchor=base},
  red/.style={red!60!black}]

\begin{scope}[yshift=0.3cm]
\draw[blk, fill=gray!8] (0.0,5.25) rectangle (2.1,7.05);
\node[font=\small\bfseries] at (1.05,6.15) {producer};
\draw[->, line width=1.1pt] (2.1,6.15) -- (3.85,6.15);
\node[anchor=south] at (2.98,6.2) {$\bm{H}_t^{(q)}$};

\node[anchor=east] at (5.0,7.50) {tp rank};
\foreach \x/\lab in {5.85/{$0$}, 7.65/{$j$}, 9.45/{$\mathrm{tp}{-}1$}}
  \node at (\x,7.50) {\lab};
\node[rotate=90] at (4.15,6.15) {cp rank};
\foreach \y/\lab in {7.05/{$0$}, 6.15/{$q$}, 5.25/{$\mathrm{cp}{-}1$}}
  \node[anchor=east] at (5.0,\y) {\lab};
\foreach \y in {7.05, 6.15, 5.25} {
  \draw[band, rk0] (5.25,\y-0.21) rectangle +(1.2cm,0.42cm);
  \draw[band, rkj] (7.05,\y-0.21) rectangle +(1.2cm,0.42cm);
  \draw[band, rkL] (8.85,\y-0.21) rectangle +(1.2cm,0.42cm);
  \foreach \d in {-1,0,1} {\fill (6.75+0.11*\d,\y) circle (0.02); \fill (8.55+0.11*\d,\y) circle (0.02);}
}
\foreach \y in {6.60, 5.70}
  \foreach \x in {5.85, 7.65, 9.45}
    \foreach \d in {-1,0,1} \fill (\x,\y+0.11*\d) circle (0.02);
\draw[draw=black, line width=1.3pt, rkj] (7.05,5.94) rectangle +(1.2cm,0.42cm);
\node at (7.65,6.15) {$\bm{Z}_x^{(q,j)}$};

\draw[dashed, draw=orange!70!black, rounded corners=2pt, line width=0.5pt] (10.30,4.95) rectangle (15.6,7.0);
\node[anchor=south west, align=left, inner sep=0pt] at (10.30,7.05) {slots (columns),\\windows (bars)};
\foreach \x in {11.40,12.60} \foreach \q in {0,...,5} \draw[band, fill=cyan!12] (\x,6.72-0.30*\q) rectangle +(1.0cm,0.2cm);
\draw[band, htk] (11.40,6.72-0.30*3) rectangle +(1.0cm,0.2cm);
\draw[band, htk] (12.60,6.72-0.30*3) rectangle +(1.0cm,0.2cm);
\foreach \d in {-1,0,1} \fill (14.55+0.11*\d,5.97) circle (0.02);
\node[anchor=east] at (11.32,6.82) {$0$};
\node[anchor=east] at (11.32,5.92) {$k$};
\node[anchor=east] at (11.32,5.32) {$K{-}1$};

\end{scope}

\node[hdr] at (2.8,4.56) {(a) Window $k$};
\draw[band, htk] (0.0,2.85) rectangle +(1.5cm,0.8cm);   \node at (0.75,3.25) {$\bm{H}_{t,k}^{(q)}$};
\node at (1.77,3.25) {$\times$};
\draw[band, rkj] (2.05,2.85) rectangle +(1.35cm,0.8cm);  \node at (2.725,3.25) {$\bm{W}_t^{(j)}$};
\node at (3.68,3.25) {$=$};
\draw[band, rkj] (3.95,2.70) rectangle +(1.7cm,1.15cm);  \node at (4.80,3.55) {$\bm{Z}_{t,k}^{(q,j)}$};
\draw[fill=white, draw=black!60, line width=0.3pt] (3.99,2.74) rectangle +(1.62cm,0.42cm);  \node at (4.80,2.95) {$\bm{z}_t^{(j)}$};
\draw[band, hsk] (0.0,0.40) rectangle +(1.5cm,0.8cm);   \node at (0.75,0.80) {$\bm{H}_{s,k}^{(q)}$};
\node at (1.77,0.80) {$\times$};
\draw[band, rkj] (2.05,0.40) rectangle +(1.35cm,0.8cm);  \node at (2.725,0.80) {$\bm{W}_s^{(j)}$};
\node at (3.68,0.80) {$=$};
\draw[band, rkj] (3.95,0.25) rectangle +(1.7cm,1.15cm);  \node at (4.80,1.10) {$\bm{Z}_{s,k}^{(q,j)}$};
\draw[fill=white, draw=black!60, line width=0.3pt] (3.99,0.29) rectangle +(1.62cm,0.42cm);  \node at (4.80,0.50) {$\bm{z}_s^{(j)}$};

\node[hdr] at (8.40,4.56) {(b) Row $i$ of cell $(q,j)$};
\draw[rounded corners=2pt, draw=red!50!black, fill=red!4, line width=0.5pt] (6.10,-0.12) rectangle (10.80,4.25);
\node[anchor=north west, font=\small\bfseries] at (6.18,4.22) {fused kernel};
\node[anchor=north west] at (6.18,3.72) {$m_{x,u},\,a_{x,u},\,c_u$ \eqref{eq:running}};

\node[anchor=north west] at (6.18,3.25) {$\bm{z}_t^{(j)}$};
\foreach \q/\h in {0/0.22, 1/0.40, 2/0.28, 3/0.58, 4/0.34, 5/0.48, 6/0.52, 7/0.27}
  \draw[fill=gray!35, draw=black!45, line width=0.2pt] (6.62+0.30*\q,2.22) rectangle +(0.20cm,\h cm);
\draw[red, line width=0.9pt] (6.57,2.48) -- (6.92,2.48) -- (6.92,2.66) -- (7.52,2.66) -- (7.52,2.84) -- (9.07,2.84);
\node[anchor=west, red] at (9.14,2.84) {$m_{t,u}$};
\draw[->, line width=0.6pt, black!60] (6.57,2.17) -- (9.07,2.17);
\node[anchor=north west] at (6.57,2.14) {$u{=}0$};
\node[anchor=north east] at (10.65,2.14) {$u{=}V_{\mathrm{loc}}{-}1$};

\node[anchor=north west] at (6.18,1.55) {$\bm{z}_s^{(j)}$};
\foreach \q/\h in {0/0.24, 1/0.16, 2/0.35, 3/0.21, 4/0.44, 5/0.26, 6/0.19, 7/0.30}
  \draw[fill=gray!35, draw=black!45, line width=0.2pt] (6.62+0.30*\q,0.52) rectangle +(0.20cm,\h cm);
\draw[red, line width=0.9pt] (6.57,0.80) -- (7.22,0.80) -- (7.22,0.91) -- (7.82,0.91) -- (7.82,1.00) -- (9.07,1.00);
\node[anchor=west, red] at (9.14,1.00) {$m_{s,u}$};
\draw[->, line width=0.6pt, black!60] (6.57,0.47) -- (9.07,0.47);
\node[anchor=north west] at (6.57,0.44) {$u{=}0$};
\node[anchor=north east] at (10.65,0.44) {$u{=}V_{\mathrm{loc}}{-}1$};

\node[proc, minimum width=4.25cm, minimum height=0.95cm] (mx) at (13.52,3.45) {MAX all-reduce\\$m_x$ \eqref{eq:combine}};
\node[proc, minimum width=4.25cm, minimum height=0.95cm] (sm) at (13.52,1.95) {rescale, SUM all-reduce\\$a_x$, $c$ \eqref{eq:combine}};
\node[proc, fill=red!4, minimum width=4.25cm, minimum height=0.75cm] (loss) at (13.52,0.45) {row losses $\mathrm{KD}_i$, $\mathrm{CE}_i$ \eqref{eq:rowloss}};
\draw[->] (10.80,3.45) -- (mx.west);
\draw[->] (mx.south) -- (sm.north);
\draw[->] (sm.south) -- (loss.north);

\draw[->, dashed, black!60] (5.61,2.95) -- (5.85,2.95) -- (5.85,2.60) -- (6.50,2.60);
\draw[->, dashed, black!60] (5.61,0.50) -- (5.85,0.50) -- (5.85,0.85) -- (6.50,0.85);

\draw[draw=black!45, line width=0.3pt] (0.0,-0.95) rectangle (15.6,-0.45);
\draw[band, rkj] (0.15,-0.78) rectangle +(0.40cm,0.22cm);
\node[anchor=west] at (0.62,-0.67) {tp rank $j$};
\draw[band, htk] (3.05,-0.78) rectangle +(0.40cm,0.22cm);
\node[anchor=west] at (3.52,-0.67) {teacher};
\draw[band, hsk] (5.70,-0.78) rectangle +(0.40cm,0.22cm);
\node[anchor=west] at (6.17,-0.67) {student};
\draw[fill=white, draw=black!60, line width=0.3pt] (8.30,-0.78) rectangle +(0.40cm,0.22cm);
\node[anchor=west] at (8.77,-0.67) {row $i$};
\draw[red, line width=0.9pt] (10.50,-0.67) -- (10.90,-0.67);
\node[anchor=west] at (10.97,-0.67) {running max};
\draw[draw=black, line width=1.3pt, rkj] (13.05,-0.78) rectangle +(0.40cm,0.22cm);
\node[anchor=west] at (13.52,-0.67) {bold cell};
\end{tikzpicture}
\endgroup
\caption{The disaggregated algorithm (symbols as in \S\ref{sec:roles} and \S\ref{sec:loss}; $\cdots$: omitted ranks). \emph{Top}: the producer sends slot $\bm{H}_t^{(q)}$ to every tp rank of its cp rank; cell $(q,j)$ of the receiver group (bold) holds the logits block $\bm{Z}_x^{(q,j)}$, and its prefetch ring holds up to 4 slots of $K$ windows. (a)~Window $k$: $\bm{H}_{x,k}^{(q)}\bm{W}_x^{(j)}$ gives $\bm{Z}_{x,k}^{(q,j)}$; row $i$ (white) is what the kernel scans. (b)~The kernel walks the row in tiles with a running maximum (red); tp ranks combine the statistics by all-reduce.}
\label{fig:algorithm}
\end{figure*}

\section{The Disaggregated KD Algorithm}
\label{sec:algorithm}

\subsection{Roles and hidden-state transfer}
\label{sec:roles}

\system{} is built on Megatron-Bridge \citep{megatronbridge} and splits the 2 models across nodes: no GPU or node hosts both models' layers (Figure~\ref{fig:algorithm}). The \emph{trainer} is a standard training run on $N_{\mathrm{stu}}$ nodes with grid $(\mathrm{tp},\mathrm{pp},\mathrm{cp},\mathrm{dp},\mathrm{ep})$. The \emph{producer} runs $R$ replicas of the teacher on $N_{\mathrm{prod}}$ nodes, forward only, each spanning $\mathrm{tp}_p$ GPUs (times an EP degree $\mathrm{ep}_p$) chosen by \eqref{eq:producer} independently of the trainer.

For every microbatch the producer emits the teacher's final hidden states instead of its logits. CP divides a sequence's rows, one per token, over the $\mathrm{cp}$ ranks of the student. The \emph{slot} of cp rank $q$ is the matrix $\bm{H}_t^{(q)}\in\mathbb{R}^{r\times\dt}$ of its $r$ rows, so a sequence has $L=\mathrm{cp}\cdot r$ rows. The $\mathrm{tp}\times\mathrm{cp}$ GPUs of the student's last pipeline stage in one DP replica, the \emph{last-stage ranks}, form a \emph{receiver group}, and the producer sends $\bm{H}_t^{(q)}$ to every tp rank of cp rank $q$. We call the last-stage rank with cp rank $q$ and tp rank $j$ the cell $(q,j)$ (Figure~\ref{fig:algorithm}, top).

The teacher's lm\_head $\bm{W}_t\in\mathbb{R}^{\dt\times V}$ is not part of the producer's forward path. The producer writes this matrix once to a file that the trainers read. The vocabulary is sharded over the tp ranks: tp rank $j$ keeps the column shards $\bm{W}_t^{(j)}\in\mathbb{R}^{\dt\times\Vloc}$ and $\bm{W}_s^{(j)}\in\mathbb{R}^{\ds\times\Vloc}$ of the teacher's and the student's lm\_head, with $\Vloc=V/\mathrm{tp}$, identical on every cp rank. With $\bm{H}_s^{(q)}\in\mathbb{R}^{r\times\ds}$ the student's own final hidden states of the same rows, cell $(q,j)$ therefore evaluates the logits block $\bm{Z}_x^{(q,j)}=\bm{H}_x^{(q)}\bm{W}_x^{(j)}\in\mathbb{R}^{r\times\Vloc}$ for $x\in\{t,s\}$ (teacher, student): the logits $\bm{Z}_x\in\mathbb{R}^{L\times V}$ are partitioned over the receiver group, cp ranks owning different rows and tp ranks different columns. Statistics are reduced over the tp ranks of one cp rank only; cp ranks exchange no row statistics inside the loss. Appendix~\ref{app:pipeline} gives the row layout, protocol, and schedule.

\begin{table*}[t]
\centering
\footnotesize
\setlength{\tabcolsep}{2.5pt}
\renewcommand{\arraystretch}{1.0}
\begin{tabular}{L{2.9cm}L{2.5cm}L{2.2cm}L{3.1cm}L{2.2cm}L{2.0cm}}
\toprule
 & ModelOpt & ModelOpt fused-W & KDFlow & NeMo-RL & \system{} (ours) \\
\midrule
Teacher and student on & same GPUs & same GPUs & same GPUs & same GPUs & separate nodes \\
Idle teacher weights & kept & kept & kept (optional offload) & offloaded & lm\_head only \\
Teacher forward & in the student's step & in the student's step & phases & 1 phase per step & concurrent, ahead \\
Teacher output & logits, $V/\mathrm{tp}$ columns & logits, $V/\mathrm{tp}$ columns & hidden states & logits, $V$ columns & hidden states \\
KL taken over & all $V$ columns & all $V$ columns & all $V$ columns & top 8192 columns & all $V$ columns \\
Largest logits tensor & $L/\mathrm{cp}\times V/\mathrm{tp}$ & $L/\mathrm{cp}\times V/\mathrm{tp}$ & tokens $\times V$, chunks optional & step tokens $\times V$ & $C\times V/\mathrm{tp}$ \\
\midrule
\multicolumn{6}{l}{\emph{Qwen3.5 pair, 4 nodes (ModelOpt: \ballk{} / KD-only)}} \\
Tokens per step ($10^3$) & 4194 & 4194 & 4183 & 1047 & 4129 \\
Tokens/s per GPU & 741 / 1450 & 1303$^\ddagger$ & 390 & 645 & \textbf{1846} \\
\midrule
\multicolumn{6}{l}{\emph{Qwen3.5 pair, 3 nodes}} \\
Tokens per step ($10^3$) & OOM & 4129 & $-$ & $-$ & 4194 \\
Tokens/s per GPU & OOM & \textbf{2389} & $-$ & $-$ & 2352 \\
\midrule
\multicolumn{6}{l}{\emph{Qwen3-32B pair, 1 node (2 for \system{}); ModelOpt: \ballk{} / KD-only}} \\
Tokens per step ($10^3$) & 524 & $-$ & 515 & 524 & 524 \\
Tokens/s per GPU & 3500 / 3570 & $-$ & \textbf{6570} & 1780 & 2426 \\
\bottomrule
\end{tabular}
\caption{\system{} against ModelOpt \citep{megatronbridge,modelopt}, ModelOpt with the fused-W loss of \system{} (Appendix~\ref{app:stacks}), KDFlow \citep{zhang2026kdflow}, and NeMo-RL \citep{nemorl}. Measured rows are our runs (Table~\ref{tab:stacks}); NeMo-RL on Qwen3.5 ran OOM at 4.19M tokens per step; $^\ddagger$: fused, cross-entropy fusion on; $-$: not run. Bold: best tokens/s per GPU in each pair and node count.}
\label{tab:systems}
\end{table*}

\begin{table}[t]
\centering
\footnotesize
\setlength{\tabcolsep}{3pt}
\renewcommand{\arraystretch}{1.0}
\begin{tabular}{rL{1.9cm}L{1.6cm}L{1.9cm}}
\toprule
Nodes & \system{} (ours) & ModelOpt & ModelOpt fused-W \\
\midrule
\multicolumn{4}{l}{\emph{Qwen3.5: 35B/3B to 9B}} \\
2 & 284 & $-$ & 147 \\
3 & 74 & OOM & 72 \\
4 & 70 & 91 to 177 & 101$^\ddagger$ \\
\midrule
\multicolumn{4}{l}{\emph{Qwen3-235B: 235B/22B to 30B/3B}} \\
3 & 393 & OOM & OOM \\
8 & 109 & 75 & $-$ \\
\midrule
\multicolumn{4}{l}{\emph{Nemotron: 560B/55B to 30B/3B}} \\
4 & 630 & OOM & OOM \\
8 & 316 & OOM & $-$ \\
\midrule
\multicolumn{4}{l}{\emph{Llama: 405B to 70B, dense}} \\
6 & 818 & $-$ & OOM \\
8 & $-$ & 484 & $-$ \\
16 & 186 & 301 & $-$ \\
\bottomrule
\end{tabular}
\caption{Outcome of each layout by node count: median s/it or OOM; $-$: not run. ModelOpt fused-W (Appendix~\ref{app:stacks}): the fused loss of \system{} with the model's cross-entropy fusion off ($^\ddagger$: on); \system{} forms no model cross-entropy. Entries pool the grids and losses run (Tables~\ref{tab:brackets}, \ref{tab:runs}, and~\ref{tab:otherpairs}a).}
\label{tab:pairs}
\end{table}

\subsection{Windowed, fused loss over vocabulary shards}
\label{sec:loss}

The KD objective is a sum over rows, and within a row the coupling along the vocabulary axis is the reductions of \eqref{eq:stats}. The trainer therefore evaluates a slot in $K$ \emph{windows} $k=0,\dots,K{-}1$ of $C$ consecutive rows each (Algorithm~\ref{alg:step}, Appendix~\ref{app:pipeline}), so $r=K\cdot C$ ($C$ does not change the loss in exact arithmetic). Let $\bm{H}_{x,k}^{(q)}\in\mathbb{R}^{C\times d_x}$ be window $k$ of $\bm{H}_x^{(q)}$. Cell $(q,j)$ computes
\begin{equation}
\label{eq:window}
\bm{Z}_{x,k}^{(q,j)}=\bm{H}_{x,k}^{(q)}\,\bm{W}_x^{(j)}\in\mathbb{R}^{C\times\Vloc},\quad x\in\{t,s\},
\end{equation}
the window-$k$ rows of its block $\bm{Z}_x^{(q,j)}$. Fix a row $i$ of the window; we leave $i$ implicit in $\bm{z}_x$, $m_x$, $a_x$ and $c$, which all refer to row $i$. Its logits are $\bm{z}_x=[\bm{z}_x^{(0)},\dots,\bm{z}_x^{(\mathrm{tp}-1)}]\in\mathbb{R}^V$, where $\bm{z}_x^{(j)}\in\mathbb{R}^{\Vloc}$ is the $i$th row of $\bm{Z}_{x,k}^{(q,j)}$ with entries $z_{x,u}^{(j)}$, $u=0,\dots,\Vloc{-}1$ (Figure~\ref{fig:algorithm}a), and no rank forms $\bm{z}_x$. With $p_x=\mathrm{softmax}(\bm{z}_x)$ over the $V$ columns and label $y_i$, the trainer minimizes
\begin{equation}
\label{eq:objective}
\begin{aligned}
\mathcal{L}&=\sum_i\bigl[(1-\lambda)\,\mathrm{CE}_i+\lambda\,\mathrm{KD}_i\bigr],\\
\mathrm{CE}_i&=-\log p_{s,y_i},\quad \mathrm{KD}_i=-\sum_v p_{t,v}\log p_{s,v},
\end{aligned}
\end{equation}
with KD weight $\lambda$. $\mathrm{KD}_i$ is the soft cross-entropy, which has the gradient of $\mathrm{KL}(p_t\,\|\,p_s)$.

\raggedbottom
TP rank $j$ holds only $\bm{z}_x^{(j)}$, so the rank cannot normalize a softmax alone, and gathering $\bm{z}_x$ would undo the sharding. The rank need not gather. Define, for the full row,
\begin{equation}
\label{eq:stats}
\begin{aligned}
m_x&=\max_v z_{x,v},\qquad a_x=\sum_v e^{z_{x,v}-m_x},\\
c&=\sum_v e^{z_{t,v}-m_t}\,z_{s,v}.
\end{aligned}
\end{equation}
Then $p_{x,v}=e^{z_{x,v}-m_x}/a_x$, and the row losses are
\begin{equation}
\label{eq:rowloss}
\mathrm{KD}_i=\mathrm{lse}_s-\frac{c}{a_t},\quad \mathrm{CE}_i=\mathrm{lse}_s-z_{s,y_i},
\end{equation}
with $\mathrm{lse}_s=\log\sum_v e^{z_{s,v}}=m_s+\log a_s$,
so 5 numbers $(m_t,m_s,a_t,a_s,c)$ replace the 2 rows of logits. Each is a maximum or a sum over columns, so tp rank $j$ computes its local versions $m_x^{(j)}$, $a_x^{(j)}$, $c^{(j)}$, that is, \eqref{eq:stats} over the entries of $\bm{z}_x^{(j)}$ only. A fused kernel evaluates the statistics with 1 thread block per row, walking the entries in tiles with the online-softmax recurrence \citep{milakov2018online}; the tp ranks then combine the results by a MAX all-reduce followed, after each rank rescales its sums to the global maximum, by a SUM all-reduce (Appendix~\ref{app:loss-equiv}, \eqref{eq:running}, \eqref{eq:combine}).

The local statistics are taken relative to a maximum because $e^{z}$ overflows for large logits while $e^{z-m}\le1$. The maximum of a row is unknown until all entries have been read, and a separate pass to find the maximum would read the logits 2 times, so the fused kernel (Figure~\ref{fig:algorithm}b) reads the entries of $\bm{z}_t^{(j)}$ and $\bm{z}_s^{(j)}$ once and after entry $u$ keeps the running maximum $m_{x,u}$ and the sums $a_{x,u}$ and $c_u$ taken relative to the maximum; when the maximum rises, the sums are rescaled to the new one by a factor of at most 1. In the forward scan a rank holds at most 1 matrix $\in\mathbb{R}^{C\times\Vloc}$ per distribution, never one $\in\mathbb{R}^{r\times V}$, and the loss equals the full-vocabulary loss in exact arithmetic (Theorem~\ref{thm:exact}).

\begin{theorem}[Exactness]
\label{thm:exact}
For every window size $C$ dividing $r$, every grouping of the entries read per step, every $\mathrm{tp}$ dividing $V$, and every $\mathrm{cp}$ with $2\,\mathrm{cp}$ dividing $L$, the losses obtained from the scanned statistics and their combination over the tp ranks through \eqref{eq:rowloss} are the $\mathrm{CE}_i$ and $\mathrm{KD}_i$ of \eqref{eq:objective}, and the backward pass returns the gradient of the full-vocabulary softmax. Hence, in exact arithmetic, $\mathcal{L}$ and its gradients do not depend on these choices.
\end{theorem}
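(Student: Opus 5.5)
The proof goes in three layers. The first is the algebraic identity for one row. The second shows that the scanned and combined statistics equal the full-row statistics \eqref{eq:stats} for any tiling and sharding. The third shows that windowing, CP, and the backward pass do not change the result.

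\emph{Row identity.} I would start from $p_{x,v}=e^{z_{x,v}-m_x}/a_x$, which holds for any real $m_x$ and not only the maximum. Then $\log p_{s,v}=z_{s,v}-\mathrm{lse}_s$. Substituting into $\mathrm{KD}_i$ and using $\sum_v p_{t,v}=1$ gives $\mathrm{KD}_i=\mathrm{lse}_s-\sum_v p_{t,v}z_{s,v}=\mathrm{lse}_s-c/a_t$. Likewise $\mathrm{CE}_i=\mathrm{lse}_s-z_{s,y_i}$, where $z_{s,y_i}$ is read on the one tp rank owning column $y_i$ and then summed over tp. This is \eqref{eq:rowloss}.

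\emph{Scan and combine invariance.} For the scan I would prove by induction over the tiles read, in any grouping, the invariant that after reading a set $U$ of entries, $m_{x,U}=\max_{u\in U}z_{x,u}$, $a_{x,U}=\sum_{u\in U}e^{z_{x,u}-m_{x,U}}$ and $c_U=\sum_{u\in U}e^{z_{t,u}-m_{t,U}}z_{s,u}$. The inductive step is the merge identity: for disjoint $U,U'$ with $m=\max(m_U,m_{U'})$, we have $a_{U\cup U'}=e^{m_U-m}a_U+e^{m_{U'}-m}a_{U'}$, and the same holds for $c$ with the teacher maximum as the rescaling base. The identity is associative and commutative, so the grouping does not matter. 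The same merge, applied across tp ranks after the MAX all-reduce of \eqref{eq:combine}, recovers the full-row statistics. This holds because the column shards $\bm{z}_x^{(j)}$ partition the $V$ columns whenever $\mathrm{tp}\mid V$. In exact arithmetic the value of $m$ is irrelevant except as a common shift. Windowing only changes which rows are processed together, since each row $i$ of $\bm{Z}_{x,k}^{(q,j)}$ equals row $i$ of $\bm{H}_x\bm{W}_x^{(j)}$. CP only assigns rows to ranks, and no statistics cross rows, so $\mathcal{L}$ is the same sum over all rows. The condition $2\,\mathrm{cp}\mid L$ enters only through the row layout (load-balanced halves), which is a permutation of rows and leaves the sum unchanged.

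\emph{Backward pass.} This is the main obstacle, because it must match the gradient of the full softmax using only local data plus the combined scalars. I would derive $\partial\mathrm{KD}_i/\partial z_{s,v}=p_{s,v}-p_{t,v}$, $\partial\mathrm{KD}_i/\partial z_{t,v}=p_{t,v}(c/a_t-z_{s,v})$ when the teacher is not detached, and $\partial\mathrm{CE}_i/\partial z_{s,v}=p_{s,v}-\mathbb 1[v=y_i]$. Each of these is computable on rank $j$ from its local $z_{x,v}^{(j)}$ and the global $(m_x,a_x,c)$. I would then check that the backward kernel uses exactly these expressions, recomputing $\bm Z_{x,k}^{(q,j)}$ per window. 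The chain rule through $\bm Z=\bm H\bm W$ then gives $\partial/\partial\bm H_{s,k}$ as a sum over tp shards, which is the TP all-reduce, and $\partial/\partial\bm W_s^{(j)}$ as a sum over windows, which accumulates over $k$. Both are finite sums whose value is independent of partition order in exact arithmetic. Hence the gradients agree with those of the full-vocabulary loss for every admissible $C$, grouping, $\mathrm{tp}$ and $\mathrm{cp}$.
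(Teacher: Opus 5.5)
Your proposal is correct and follows essentially the paper's route. Your induction over tiles with the pairwise merge is the paper's Merge and Scan lemmas (Lemmas~\ref{lem:merge} and~\ref{lem:scan}), and the rest matches the paper step for step: the combination over the tp shards, the row identity $\mathrm{KD}_i=\mathrm{lse}_s-c/a_t$, the gradients $p_s-p_t$ and $p_s-\bm{e}_{y_i}$ rebuilt from recomputed windows and the all-reduced statistics, and the observation that windows and cp ranks only regroup a finite sum over rows. Two small points: the teacher-side derivative you add is not needed, since no gradient flows to the teacher, and the gradient of $\bm{W}_s^{(j)}$ sums over the cp ranks as well as the windows, as in the paper's $\sum_q\sum_k\bm{H}_{s,k}^{(q)\top}\bm{G}_k^{(q,j)}$.
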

The proof is in Appendix~\ref{app:loss-equiv}.
\flushbottom

With $\mu=\min(\mathrm{pp},2)$ microbatches in flight, the trainer retains
\begin{equation}
\label{eq:kstr}
\kappa_{\mathrm{str}}=\bigl(\mu\,r(\dt{+}\ds)\,b+\dt\Vloc\,(b{+}\fromcode{1}\,\mathbf{1}_{\mathrm{fp8}})\bigr)\bytes
\end{equation}
(hidden-state rows and the teacher's lm\_head shard, with a 1-byte fp8 copy when the teacher-logit product runs in fp8, $\mathbf{1}_{\mathrm{fp8}}{=}1$), where a colocated loss holds 9.5 fp32-equivalent buffers of $r\,\Vloc$ logits (Eq.~\ref{eq:colbuf}).

\subsection{Prefetch and pacing}
\label{sec:pacing}

Slots are received into a ring of up to 4 slots ahead within the iteration (Appendix~\ref{app:pipeline}). Iteration time is the larger of the trainer's compute time and $N_{\mathrm{tok}}/(R\rho_{\mathrm{rep}})$, so a trainer that outruns its replicas is \emph{producer-bound} (Table~\ref{tab:rate}).

\begin{figure*}[!t]
\centering
\begin{tikzpicture}
\pgfplotsset{
  compat=1.17,
  every axis/.append style={font=\small},
  parity/.style={
    width=2.9cm, height=2.1cm, scale only axis,
    grid=major, grid style={gray!25, line width=0.3pt},
    axis line style={line width=0.4pt}, tick style={line width=0.4pt},
    every axis plot/.append style={line cap=round},
    unbounded coords=jump, xmin=0, xmax=200, xtick={0,100,200},
    xlabel={iteration},
  },
  raw/.style={line width=0.35pt, opacity=0.28, forget plot},
  oursn/.style={blue!65!black, line width=1.15pt},
  colol/.style={orange!85!red, line width=1.15pt, dashed},
  colok/.style={teal!70!black, line width=1.15pt, dash pattern=on 1.2pt off 1.6pt},
  colos/.style={violet!80!black, line width=1.15pt, dash pattern=on 3pt off 1pt on 0.8pt off 1pt},
}
\begin{groupplot}[group style={group size=3 by 1, horizontal sep=1.75cm}]
\nextgroupplot[parity, ylabel={LM loss},
  ymin=2.05, ymax=2.36, ytick={2.1,2.2,2.3}, yticklabel style={/pgf/number format/fixed, /pgf/number format/precision=1},
  legend to name=legendparity, legend columns=6,
  legend style={draw=none, font=\small, /tikz/every even column/.append style={column sep=1.5pt}}]
\addplot[raw, oursn] table[x=it, y=lm, col sep=comma] {figs/p1_ours4n.csv};
\addplot[raw, colol] table[x=it, y=lm, col sep=comma] {figs/p1_colo4n.csv};
\addplot[raw, colos] table[x=it, y=lm, col sep=comma] {figs/p1_colo4n_sq.csv};
\addplot[oursn] table[x=it, y=lm_s, col sep=comma] {figs/p1_ours4n.csv};
\addlegendentry{disagg.}
\addplot[colol] table[x=it, y=lm_s, col sep=comma] {figs/p1_colo4n.csv};
\addlegendentry{Bal-L+K}
\addplot[colos] table[x=it, y=lm_s, col sep=comma] {figs/p1_colo4n_sq.csv};
\addlegendentry{SQ-L+K}
\addlegendimage{colok}
\addlegendentry{KD-only}
\addlegendimage{area legend, fill=violet!40, draw=none}
\addlegendentry{fused}
\addlegendimage{area legend, pattern=north east lines, pattern color=violet!85!black, draw=none}
\addlegendentry{fused-W}
\nextgroupplot[parity, ylabel={KD loss change},
  ymin=-0.06, ymax=0.26, ytick={0,0.1,0.2}, yticklabel style={/pgf/number format/fixed, /pgf/number format/precision=1}]
\addplot[raw, oursn] table[x=it, y=dkd, col sep=comma] {figs/p1_ours4n.csv};
\addplot[raw, colol] table[x=it, y=dkd, col sep=comma] {figs/p1_colo4n.csv};
\addplot[raw, colos] table[x=it, y=dkd, col sep=comma] {figs/p1_colo4n_sq.csv};
\addplot[raw, colok] table[x=it, y=dkd, col sep=comma] {figs/p1_colo4n_kd.csv};
\addplot[oursn] table[x=it, y=dkd_s, col sep=comma] {figs/p1_ours4n.csv};
\addplot[colol] table[x=it, y=dkd_s, col sep=comma] {figs/p1_colo4n.csv};
\addplot[colos] table[x=it, y=dkd_s, col sep=comma] {figs/p1_colo4n_sq.csv};
\addplot[colok] table[x=it, y=dkd_s, col sep=comma] {figs/p1_colo4n_kd.csv};
\nextgroupplot[width=4.7cm, height=2.1cm, scale only axis,
  ybar, bar shift=0pt, bar width=0.38cm, ymin=0, ymax=75, ytick={0,30,60},
  ylabel={$10^3$ tokens/s}, xlabel={nodes},
  axis line style={line width=0.4pt}, tick style={line width=0.4pt},
  ymajorgrids, grid style={gray!25, line width=0.3pt},
  symbolic x coords={n2f,n2d,n3f,n3d,n4b,n4s,n4k,n4u,n4d}, xtick={n2f,n2d,n3f,n3d,n4b,n4s,n4k,n4u,n4d},
  xticklabels={2,2,3,3,4,4,4,4,4},
  enlarge x limits=0.07, nodes near coords, nodes near coords align={vertical},
  every node near coord/.append style={font=\scriptsize, /pgf/number format/fixed, /pgf/number format/precision=0},
  ]
\addplot[pattern=north east lines, pattern color=violet!85!black, draw=none] coordinates {(n2f,28.6)};
\addplot[fill=blue!65!black, draw=none] coordinates {(n2d,14.8)};
\addplot[pattern=north east lines, pattern color=violet!85!black, draw=none] coordinates {(n3f,57.3)};
\addplot[fill=blue!65!black, draw=none] coordinates {(n3d,56.4)};
\addplot[fill=orange!85!red, draw=none] coordinates {(n4b,23.7)};
\addplot[fill=violet!80!black, draw=none] coordinates {(n4s,43.5)};
\addplot[fill=teal!70!black, draw=none] coordinates {(n4k,46.4)};
\addplot[fill=violet!40, draw=none] coordinates {(n4u,41.7)};
\addplot[fill=blue!65!black, draw=none] coordinates {(n4d,59.1)};
\end{groupplot}
\node[anchor=south] at ($(group c1r1.north west)!0.5!(group c3r1.north east)+(0,0.5cm)$) {\pgfplotslegendfromname{legendparity}};
\end{tikzpicture}
\caption{Loss parity and throughput on the Qwen3.5 pair. Colocated runs are ModelOpt with the loss in the legend; fused and fused-W use the fused loss of \system{} (hatched: fused-W). (a)~LM loss of the 4-node runs (thin: per iteration, thick: 17-iteration mean). (b)~KD loss minus its first-3-iteration mean. (c)~Median total throughput by node count (2-node, fused, and fused-W bars: 15 iterations).}
\label{fig:parity}
\end{figure*}

\section{Experiments}
\label{sec:experiments}

\subsection{Setup}
\label{sec:setup}

\begin{table}[!tb]
\centering
\footnotesize
\setlength{\tabcolsep}{2.6pt}
\begin{tabular}{@{}l|rr|r|rr@{}}
\toprule
 & \multicolumn{2}{c|}{s/it, seed} & Peak & \multicolumn{2}{c}{MMLU, seed} \\
Loss (nodes) & 42 & 43 & & 42 & 43 \\
\midrule
\multicolumn{6}{@{}l}{\emph{Colocated (ModelOpt)}} \\
KD-only (4) & 90.5 & 97.7 & 63.4 & 67.97 & 69.22 \\
\ballk{} (4) & 176.8 & 172.1 & 67.2 & 67.51 & 69.78 \\
\sqlk{} (4) & 96.3 & -- & 59.9 & -- & -- \\
\sqlk{} fused (4) & 100.6$^\dagger$ & -- & 40.7 & -- & -- \\
\sqlk{} fused-W (3) & 72.0$^\dagger$ & -- & 64.1 & -- & -- \\
\midrule
\multicolumn{6}{@{}l}{\emph{Disaggregated (ours)}} \\
\sqlk{} (3) & 74.3 & -- & 39.2 & \textbf{70.14} & -- \\
\sqlk{} (4) & \textbf{69.9} & 61.9$^\ddagger$ & \textbf{33.0} & 68.52 & \textbf{70.09} \\
\bottomrule
\end{tabular}
\caption{Primary pair at iteration 200, 1 run per cell; --: not run; $^\dagger$: 15 iterations (\sqlk{}: 110.3); $^\ddagger$: 30 iterations without cache release (Appendix~\ref{app:cache}; 99.4 with it); W: cross-entropy fusion off, batch 126; peak: torch GiB; best bold.}
\label{tab:main}
\end{table}

\paragraph{Pairs.}
We use 5 open-weight, same-tokenizer pairs (Tables~\ref{tab:systems} and~\ref{tab:pairs}) from the Qwen3.5 \citep{qwen2026qwen35}, Qwen3 \citep{yang2025qwen3}, Nemotron 3 \citep{nvidia2025nemotron3,nvidia2026nemotron3ultra}, and Llama 3.1 \citep{grattafiori2024llama3} families. The Qwen3.5 pair (35B-A3B teacher, 9B student) is primary: every colocated loss and disaggregated split runs on it, at several node counts; the Qwen3-32B pair fits 1 node and serves the comparison with other stacks.

\paragraph{Loss and its position.}
\emph{Offline} distillation stores teacher outputs as generated sequences \citep{kim2016sequence} or truncated top-K logits \citep{ryskulov2026topk}. \emph{On-policy} distillation (OPD) trains the student on its own samples and matches the teacher's token-level distributions on them \citep{agarwal2024onpolicy,gu2024minillm,yang2025qwen3}. Our objective \eqref{eq:objective} is neither. Its loss and $\lambda$ schedule follow SlimQwen \citep{tang2026slimqwen} without its multi-token-prediction (MTP) terms (\sqlk{}, SlimQwen's LM+KD mixture): the full-vocabulary forward Kullback--Leibler (KL) divergence between the teacher's and the student's next-token distributions, in its soft cross-entropy form, mixed with the LM loss as $(1-\lambda)\,\mathcal{L}_{\mathrm{LM}}+\lambda\,\mathcal{L}_{\mathrm{KD}}$ and $\lambda$ decaying from 1 to 0.75, where $\lambda{=}1$ gives the KD-only variant. ModelOpt and KDFlow minimize the same full-vocabulary forward KL, and NeMo-RL takes the KL over the top 8192 columns. ModelOpt's loss balancer (\ballk{}, its LM+KD balancer) rescales the KD term to the LM loss instead of weighting by $\lambda$, which gives the KD term a gradient weight of 0.89 (Table~\ref{tab:kdshare}).

\paragraph{Recipe and colocated stacks.}
All runs use nodes of 8 80\,GB H100 GPUs (79.11\gib{} usable), connected by NVLink and InfiniBand, and train on FineWeb-Edu \citep{penedo2024fineweb}. The ModelOpt runs are Megatron-Bridge runs with the KD-only (native), \ballk{}, and \sqlk{} losses; the stacks differ in loss weighting, learning-rate floor, teacher precision, and, at 4 nodes, global batch. KDFlow and NeMo-RL were run on the primary pair and on Qwen3-32B to Qwen3-4B, on the corpus packed to one sample length (Table~\ref{tab:stacks}). In most DisaggKD runs the trainer computes the teacher's logits with an fp8 matrix product (Appendix~\ref{app:fp8}). MMLU \citep{hendrycks2021mmlu} is scored with FlexEval \citep{flexeval} (Appendix~\ref{app:runs}).

\subsection{Results on the Qwen3.5 pair}
\label{sec:results}

\paragraph{Loss parity.}
Figure~\ref{fig:parity}a compares the LM loss of the 4-node disaggregated run with the 4-node colocated runs: their 17-iteration moving averages differ by 0.008 nats on average from the \sqlk{} run and 0.012 from the \ballk{} run (Table~\ref{tab:parity}); the runs read different document orders (Table~\ref{tab:dataorder}). The disaggregated runs and the colocated \sqlk{} run log soft cross-entropy and the other ModelOpt runs KL, which differ by the teacher's entropy, 2.00 nats in the logs, so Figure~\ref{fig:parity}b subtracts the initial level.

\paragraph{Speedup.}
At 4 nodes and 32768 tokens the disaggregated run is faster than the colocated \ballk{}, \sqlk{}, and KD-only runs at seed 42 (Tables~\ref{tab:main} and~\ref{tab:rate}); it uses an fp8 teacher and the colocated runs bf16 (Table~\ref{tab:fp8colo}). \ballk{} adds 86\,s per iteration to the KD-only run, 81\% of the gap, and the traced loss stage gains about 50\,s of event time in collectives, copy operators, and stream synchronizations (Table~\ref{tab:speedup}a); \sqlk{} adds 6\,s. The colocated stack runs the teacher forward pass (16\,s, estimated) and the student step in sequence; the disaggregated stack runs them concurrently (\S\ref{sec:pacing}). A colocated run with the fused loss of \system{} (\sqlk{} fused) peaks lower and, with the model's cross-entropy fusion off (fused-W), trains at a similar speed at 3 nodes and faster at 2 nodes, where \system{} splits 1+1 (Table~\ref{tab:pairs}); \system{} holds less memory per GPU at 3 nodes.

\paragraph{Quality.}
With seed 42 the 4-node disaggregated student is 1.0 point above the colocated \ballk{} student and 0.6 above KD-only, and the 3-node student is 1.6 above the 4-node one; with seed 43 each student with a run gains 1.3 to 2.3 points, so layout differences are of the size of run-to-run differences (Table~\ref{tab:seeds}).

\subsection{Other pairs}
\label{sec:others}

\paragraph{Qwen3-32B.}
Splitting a pair that fits 1 node colocated doubles the GPU count, which halves the rate per GPU at equal step time. The rest of KDFlow's 2.7$\times$ lead, 1.35$\times$, is the step: our trainer-limited 13.5\,s (Table~\ref{tab:rate}) against its 9.8\,s and ModelOpt's 18.4\,s (Table~\ref{tab:stacks}).

\paragraph{Qwen3-235B.}
Of the colocated grids, the 3 colocated runs of Table~\ref{tab:brackets} that fit memory use ep 16 or more, where the all-to-all dispatcher corrupts the student's forward pass, so training gives not-a-number (NaN) values or diverges (Table~\ref{tab:moeprobe}). With the all-gather dispatcher the colocated stack trains the pair on 8 nodes faster than \system{} (75\,s with \ballk{}, against 109\,s; Table~\ref{tab:pairs}): the disaggregated pace is the forward time of its 4 producer replicas (duty 1.0, Table~\ref{tab:rate}).

\paragraph{Nemotron.}
Memory and 2 NaN sources keep the colocated stack from training Nemotron: the teacher runs OOM at 4 nodes, and the 2 8-node grids that fit memory (ep 16) give NaN, in the forward loss with the all-to-all dispatcher and in the gradient norm with either. A Nano-teacher probe gives NaN at ep 16, not at ep 4 (Table~\ref{tab:nemoprobe}).

\paragraph{Llama.}
The model predicts every colocated tp$\le$8 grid over the budget, so the colocated runs use tp 32; 6 nodes have no grid at a global batch of 128 (Table~\ref{tab:brackets}). At 16 nodes disaggregated (tp 8) is faster than colocated (tp 32; Table~\ref{tab:otherpairs}a).

\section{Conclusion}

Memory and traffic models predict the colocated memory limits. On Qwen3.5 \system{} matches the colocated LM loss and is faster than colocated runs with the same loss, and the fused loss of \system{} lets a colocated run train on as few nodes at a similar or faster speed; \system{} is memory efficient when the student spans several nodes. On the larger pairs colocated runs with a fused loss still run OOM where \system{} trains.

\section*{Limitations}

DisaggKD needs separate teacher nodes, so where a colocated layout already fits the pair it uses more GPUs. Each framework runs under its own recipe; aligning hyperparameters across frameworks, isolating the cause of the NaN in the colocated Nemotron student, and fully sharded data parallelism (FSDP) at pp 1 are outside this paper. The estimation models are fitted to the parallelism ranges and sequence lengths we ran, and some of their terms rest on few runs. We study where the teacher runs and how its loss is computed, not how to raise downstream quality; the runs are short, with 2 seeds on 1 pair, on 1 node type and 1 English corpus.

\bibliography{refs}

\begin{thebibliography}{38}
\providecommand{\natexlab}[1]{#1}

\bibitem[{Agarwal et~al.(2024)Agarwal, Vieillard, Zhou, Stanczyk, Ramos~Garea,
  Geist, and Bachem}]{agarwal2024onpolicy}
Rishabh Agarwal, Nino Vieillard, Yongchao Zhou, Piotr Stanczyk, Sabela
  Ramos~Garea, Matthieu Geist, and Olivier Bachem. 2024.
\newblock \href
  {https://proceedings.iclr.cc/paper_files/paper/2024/file/5be69a584901a26c521c2b51e40a4c20-Paper-Conference.pdf}
  {On-policy distillation of language models: Learning from self-generated
  mistakes}.
\newblock In \emph{International Conference on Learning Representations},
  volume 2024, pages 21246--21263.

\bibitem[{Bercovich et~al.(2025)Bercovich, Ronen, Abramovich, Ailon, Assaf,
  Dabbah, Galil, Geifman, Geifman, Golan, Haber, Karpas, Koren, Levy,
  Molchanov, Mor, Moshe, Nabwani, Puny, Rubin, Schen, Shahaf, Tropp, Argov,
  Zilberstein, and El-Yaniv}]{bercovich2025puzzle}
Akhiad Bercovich, Tomer Ronen, Talor Abramovich, Nir Ailon, Nave Assaf,
  Mohammed Dabbah, Ido Galil, Amnon Geifman, Yonatan Geifman, Izhak Golan,
  Netanel Haber, Ehud~Dov Karpas, Roi Koren, Itay Levy, Pavlo Molchanov, Shahar
  Mor, Zach Moshe, Najeeb Nabwani, Omri Puny, and 7 others. 2025.
\newblock \href {https://proceedings.mlr.press/v267/bercovich25a.html} {Puzzle:
  Distillation-based {NAS} for inference-optimized {LLM}s}.
\newblock In \emph{Proceedings of the 42nd International Conference on Machine
  Learning}, volume 267 of \emph{Proceedings of Machine Learning Research},
  pages 3806--3830. PMLR.

\bibitem[{Busbridge et~al.(2025)Busbridge, Shidani, Weers, Ramapuram, Littwin,
  and Webb}]{busbridge2025distillation}
Dan Busbridge, Amitis Shidani, Floris Weers, Jason Ramapuram, Etai Littwin, and
  Russell Webb. 2025.
\newblock \href {https://proceedings.mlr.press/v267/busbridge25a.html}
  {Distillation scaling laws}.
\newblock In \emph{Proceedings of the 42nd International Conference on Machine
  Learning}, volume 267 of \emph{Proceedings of Machine Learning Research},
  pages 5977--6045. PMLR.

\bibitem[{{Gemma Team}(2024)}]{gemmateam2024gemma2}
{Gemma Team}. 2024.
\newblock \href {https://arxiv.org/abs/2408.00118} {{Gemma} 2: Improving open
  language models at a practical size}.
\newblock \emph{Preprint}, arXiv:2408.00118.

\bibitem[{Gu et~al.(2024)Gu, Dong, Wei, and Huang}]{gu2024minillm}
Yuxian Gu, Li~Dong, Furu Wei, and Minlie Huang. 2024.
\newblock \href
  {https://proceedings.iclr.cc/paper_files/paper/2024/file/8ac015d409635f196f9e3e9dcfb9a94e-Paper-Conference.pdf}
  {{MiniLLM}: Knowledge distillation of large language models}.
\newblock In \emph{International Conference on Learning Representations},
  volume 2024, pages 32694--32717.

\bibitem[{Hendrycks et~al.(2021)Hendrycks, Burns, Basart, Zou, Mazeika, Song,
  and Steinhardt}]{hendrycks2021mmlu}
Dan Hendrycks, Collin Burns, Steven Basart, Andy Zou, Mantas Mazeika, Dawn
  Song, and Jacob Steinhardt. 2021.
\newblock \href {https://openreview.net/forum?id=d7KBjmI3GmQ} {Measuring
  massive multitask language understanding}.
\newblock In \emph{International Conference on Learning Representations}.

\bibitem[{Hinton et~al.(2015)Hinton, Vinyals, and Dean}]{hinton2015distilling}
Geoffrey Hinton, Oriol Vinyals, and Jeff Dean. 2015.
\newblock \href {https://arxiv.org/abs/1503.02531} {Distilling the knowledge in
  a neural network}.
\newblock \emph{Preprint}, arXiv:1503.02531.

\bibitem[{Hsu et~al.(2025)Hsu, Dai, Kothapalli, Song, Tang, Zhu, Shimizu,
  Sahni, Ning, and Chen}]{hsu2025liger}
Pin-Lun Hsu, Yun Dai, Vignesh Kothapalli, Qingquan Song, Shao Tang, Siyu Zhu,
  Steven Shimizu, Shivam Sahni, Haowen Ning, and Yanning Chen. 2025.
\newblock \href {https://arxiv.org/abs/2410.10989} {{Liger} {Kernel}: Efficient
  {Triton} kernels for {LLM} training}.
\newblock \emph{Preprint}, arXiv:2410.10989.

\bibitem[{Kim and Rush(2016)}]{kim2016sequence}
Yoon Kim and Alexander~M. Rush. 2016.
\newblock \href {https://doi.org/10.18653/v1/D16-1139} {Sequence-level
  knowledge distillation}.
\newblock In \emph{Proceedings of the 2016 Conference on Empirical Methods in
  Natural Language Processing}, pages 1317--1327, Austin, Texas. Association
  for Computational Linguistics.

\bibitem[{Korthikanti et~al.(2023)Korthikanti, Casper, Lym, McAfee, Andersch,
  Shoeybi, and Catanzaro}]{korthikanti2023reducing}
Vijay~Anand Korthikanti, Jared Casper, Sangkug Lym, Lawrence McAfee, Michael
  Andersch, Mohammad Shoeybi, and Bryan Catanzaro. 2023.
\newblock \href
  {https://proceedings.mlsys.org/paper_files/paper/2023/file/80083951326cf5b35e5100260d64ed81-Paper-mlsys2023.pdf}
  {Reducing activation recomputation in large transformer models}.
\newblock In \emph{Proceedings of Machine Learning and Systems}, volume~5,
  pages 341--353. Curan.

\bibitem[{Liu et~al.(2024)Liu, Zaharia, and Abbeel}]{liu2024ringattention}
Hao Liu, Matei Zaharia, and Pieter Abbeel. 2024.
\newblock \href
  {https://proceedings.iclr.cc/paper_files/paper/2024/file/1119587863e78451f080da2a768c4935-Paper-Conference.pdf}
  {{RingAttention} with blockwise transformers for near-infinite context}.
\newblock In \emph{International Conference on Learning Representations},
  volume 2024, pages 3992--4008.

\bibitem[{{Llama Team}(2024)}]{grattafiori2024llama3}
{Llama Team}. 2024.
\newblock \href {https://arxiv.org/abs/2407.21783} {The {Llama} 3 herd of
  models}.
\newblock \emph{Preprint}, arXiv:2407.21783.

\bibitem[{Lu and Liu(2026)}]{lu2026strong}
Taiming Lu and Zhuang Liu. 2026.
\newblock \href {https://arxiv.org/abs/2605.23857} {Strong teacher not needed?
  {O}n distillation in {LLM} pretraining}.
\newblock \emph{Preprint}, arXiv:2605.23857.

\bibitem[{Mei et~al.(2025)Mei, Fu, Li, Wang, Zhang, and Wu}]{mei2025real}
Zhiyu Mei, Wei Fu, Kaiwei Li, Guangju Wang, Huanchen Zhang, and Yi~Wu. 2025.
\newblock \href
  {https://proceedings.mlsys.org/paper_files/paper/2025/file/3b3889d313ba9476c12c2d77ea66b24f-Paper-Conference.pdf}
  {{ReaL}: Efficient {RLHF} training of large language models with parameter
  reallocation}.
\newblock In \emph{Proceedings of Machine Learning and Systems}, volume~7.
  MLSys.

\bibitem[{Milakov and Gimelshein(2018)}]{milakov2018online}
Maxim Milakov and Natalia Gimelshein. 2018.
\newblock \href {https://arxiv.org/abs/1805.02867} {Online normalizer
  calculation for softmax}.
\newblock \emph{Preprint}, arXiv:1805.02867.

\bibitem[{{Mistral Team}(2026)}]{liu2026ministral3}
{Mistral Team}. 2026.
\newblock \href {https://arxiv.org/abs/2601.08584} {{Ministral} 3}.
\newblock \emph{Preprint}, arXiv:2601.08584.

\bibitem[{Muralidharan et~al.(2024)Muralidharan, Turuvekere~Sreenivas, Joshi,
  Chochowski, Patwary, Shoeybi, Catanzaro, Kautz, and
  Molchanov}]{muralidharan2024compact}
Saurav Muralidharan, Sharath Turuvekere~Sreenivas, Raviraj Joshi, Marcin
  Chochowski, Mostofa Patwary, Mohammad Shoeybi, Bryan Catanzaro, Jan Kautz,
  and Pavlo Molchanov. 2024.
\newblock \href {https://doi.org/10.52202/079017-1299} {Compact language models
  via pruning and knowledge distillation}.
\newblock In \emph{Advances in Neural Information Processing Systems},
  volume~37, pages 41076--41102. Curran Associates, Inc.

\bibitem[{{NeMo RL}(2025)}]{nemorl}
{NeMo RL}. 2025.
\newblock {NeMo RL}: A scalable and efficient post-training library.
\newblock \url{https://github.com/NVIDIA-NeMo/RL}.
\newblock GitHub repository.

\bibitem[{{NVIDIA}(2024--2026)}]{modelopt}
{NVIDIA}. 2024--2026.
\newblock {NVIDIA Model Optimizer}.
\newblock \url{https://github.com/NVIDIA/Model-Optimizer}.
\newblock GitHub repository.

\bibitem[{{NVIDIA}(2025{\natexlab{a}})}]{megatronbridge}
{NVIDIA}. 2025{\natexlab{a}}.
\newblock {NeMo Megatron Bridge}.
\newblock \url{https://github.com/NVIDIA-NeMo/Megatron-Bridge}.
\newblock GitHub repository.

\bibitem[{{NVIDIA}(2025{\natexlab{b}})}]{nvidia2025nemotron3}
{NVIDIA}. 2025{\natexlab{b}}.
\newblock \href {https://arxiv.org/abs/2512.20856} {{NVIDIA} {Nemotron} 3:
  Efficient and open intelligence}.
\newblock \emph{Preprint}, arXiv:2512.20856.

\bibitem[{{NVIDIA}(2025{\natexlab{c}})}]{nvidia2025nemotronnano2}
{NVIDIA}. 2025{\natexlab{c}}.
\newblock \href {https://arxiv.org/abs/2508.14444} {{NVIDIA} {Nemotron} {Nano}
  2: An accurate and efficient hybrid {Mamba-Transformer} reasoning model}.
\newblock \emph{Preprint}, arXiv:2508.14444.

\bibitem[{{NVIDIA}(2026)}]{nvidia2026nemotron3ultra}
{NVIDIA}. 2026.
\newblock \href {https://arxiv.org/abs/2606.15007} {{Nemotron} 3 {Ultra}: Open,
  efficient {Mixture-of-Experts} hybrid {Mamba-Transformer} model for agentic
  reasoning}.
\newblock \emph{Preprint}, arXiv:2606.15007.

\bibitem[{Patel et~al.(2024)Patel, Choukse, Zhang, Shah, Goiri, Maleki, and
  Bianchini}]{patel2024splitwise}
Pratyush Patel, Esha Choukse, Chaojie Zhang, Aashaka Shah, Íñigo Goiri, Saeed
  Maleki, and Ricardo Bianchini. 2024.
\newblock \href {https://doi.org/10.1109/ISCA59077.2024.00019} {{Splitwise}:
  Efficient generative {LLM} inference using phase splitting}.
\newblock In \emph{2024 ACM/IEEE 51st Annual International Symposium on
  Computer Architecture (ISCA)}, pages 118--132.

\bibitem[{Penedo et~al.(2024)Penedo, Kydl\'{\i}\v{c}ek, allal, Lozhkov,
  Mitchell, Raffel, Von~Werra, and Wolf}]{penedo2024fineweb}
Guilherme Penedo, Hynek Kydl\'{\i}\v{c}ek, Loubna~Ben allal, Anton Lozhkov,
  Margaret Mitchell, Colin Raffel, Leandro Von~Werra, and Thomas Wolf. 2024.
\newblock \href {https://doi.org/10.52202/079017-0970} {The {FineWeb} datasets:
  Decanting the web for the finest text data at scale}.
\newblock In \emph{Advances in Neural Information Processing Systems},
  volume~37, pages 30811--30849. Curran Associates, Inc.

\bibitem[{Peng et~al.(2025)Peng, Lv, Bai, Yao, Zhang, Hou, and
  Li}]{peng2025pretraining}
Hao Peng, Xin Lv, Yushi Bai, Zijun Yao, Jiajie Zhang, Lei Hou, and Juanzi Li.
  2025.
\newblock \href {https://doi.org/10.18653/v1/2025.acl-long.181} {Pre-training
  distillation for large language models: A design space exploration}.
\newblock In \emph{Proceedings of the 63rd Annual Meeting of the Association
  for Computational Linguistics (Volume 1: Long Papers)}, pages 3603--3618,
  Vienna, Austria. Association for Computational Linguistics.

\bibitem[{Qin et~al.(2025)Qin, Li, He, Cui, Ren, Zhang, Wu, Zheng, and
  Xu}]{qin2025mooncake}
Ruoyu Qin, Zheming Li, Weiran He, Jialei Cui, Feng Ren, Mingxing Zhang, Yongwei
  Wu, Weimin Zheng, and Xinran Xu. 2025.
\newblock \href {https://www.usenix.org/conference/fast25/presentation/qin}
  {Mooncake: Trading more storage for less computation {\textemdash} a
  {KVCache-centric} architecture for serving {LLM} chatbot}.
\newblock In \emph{23rd USENIX Conference on File and Storage Technologies
  (FAST 25)}, pages 155--170, Santa Clara, CA. USENIX Association.

\bibitem[{{Qwen Team}(2025)}]{yang2025qwen3}
{Qwen Team}. 2025.
\newblock \href {https://arxiv.org/abs/2505.09388} {{Qwen3} technical report}.
\newblock \emph{Preprint}, arXiv:2505.09388.

\bibitem[{{Qwen Team}(2026)}]{qwen2026qwen35}
{Qwen Team}. 2026.
\newblock \href {https://qwen.ai/blog?id=qwen3.5} {{Qwen3.5}: Towards native
  multimodal agents}.

\bibitem[{Ryskulov et~al.(2026)Ryskulov, Garc{\'i}a-Ferrero, Montero, Jansen,
  Hashemi, Garcia, Tiene, and Or{\'u}s}]{ryskulov2026topk}
Bakbergen Ryskulov, Iker Garc{\'i}a-Ferrero, David Montero, David Jansen, Ali
  Hashemi, Jezabel~R. Garcia, Antonio Tiene, and Rom{\'a}n Or{\'u}s. 2026.
\newblock \href {https://arxiv.org/abs/2608.03796} {Efficient knowledge
  distillation for {LLMs}: Offline {Top-K} logits and a fused chunked {KL}
  loss}.
\newblock \emph{Preprint}, arXiv:2608.03796.

\bibitem[{{SB Intuitions}(2026)}]{flexeval}
{SB Intuitions}. 2026.
\newblock {FlexEval}: Flexible evaluation tool for language models.
\newblock \url{https://github.com/sbintuitions/flexeval}.
\newblock Version 0.17.3, commit 8a07c3c, 23 July 2026.

\bibitem[{Sheng et~al.(2025)Sheng, Zhang, Ye, Wu, Zhang, Zhang, Peng, Lin, and
  Wu}]{sheng2025hybridflow}
Guangming Sheng, Chi Zhang, Zilingfeng Ye, Xibin Wu, Wang Zhang, Ru~Zhang,
  Yanghua Peng, Haibin Lin, and Chuan Wu. 2025.
\newblock \href {https://doi.org/10.1145/3689031.3696075} {{HybridFlow}: A
  flexible and efficient {RLHF} framework}.
\newblock In \emph{Proceedings of the Twentieth European Conference on Computer
  Systems}, EuroSys '25, page 1279–1297, New York, NY, USA. Association for
  Computing Machinery.

\bibitem[{Shoeybi et~al.(2020)Shoeybi, Patwary, Puri, LeGresley, Casper, and
  Catanzaro}]{shoeybi2019megatron}
Mohammad Shoeybi, Mostofa Patwary, Raul Puri, Patrick LeGresley, Jared Casper,
  and Bryan Catanzaro. 2020.
\newblock \href {https://arxiv.org/abs/1909.08053} {{Megatron-LM}: Training
  multi-billion parameter language models using model parallelism}.
\newblock \emph{Preprint}, arXiv:1909.08053.

\bibitem[{Tang et~al.(2026)Tang, Wang, Zheng, Wang, Men, Zhang, Yuan, Qiu,
  Shen, and Liu}]{tang2026slimqwen}
Shengkun Tang, Zekun Wang, Bo~Zheng, Liangyu Wang, Rui Men, Siqi Zhang, Xiulong
  Yuan, Zihan Qiu, Zhiqiang Shen, and Dayiheng Liu. 2026.
\newblock \href {https://openreview.net/forum?id=Rfm4WL8nTs} {{SlimQwen}:
  Exploring the pruning and distillation in large {MoE} model pre-training}.
\newblock In \emph{Third Conference on Language Modeling}.

\bibitem[{Wijmans et~al.(2025)Wijmans, Huval, Hertzberg, Koltun, and
  Kr\"{a}henb\"{u}hl}]{wijmans2025cutlosses}
Erik Wijmans, Brody Huval, Alexander Hertzberg, Vladlen Koltun, and Philipp
  Kr\"{a}henb\"{u}hl. 2025.
\newblock \href
  {https://proceedings.iclr.cc/paper_files/paper/2025/file/aa963ac256590bb7ad5fc26c68229a3a-Paper-Conference.pdf}
  {Cut your losses in large-vocabulary language models}.
\newblock In \emph{International Conference on Learning Representations},
  volume 2025, pages 68174--68193.

\bibitem[{Yuan et~al.(2026)Yuan, Chen, Peng, Zhou, Ruan, Wang, Zheng, Men,
  Wang, Zhang, Chen, Yuan, Gao, Qian, Lin, Li, Lin, Wang, and
  Zhou}]{yuan2026maestro}
Xiulong Yuan, Hongqing Chen, Jiaxuan Peng, Fan Zhou, Zhixiang Ruan, Zekun Wang,
  Bo~Zheng, Rui Men, Haiquan Wang, Zhipeng Zhang, Langshi Chen, Man Yuan, Jiaqi
  Gao, Zhengping Qian, Junyang Lin, Yong Li, Wei Lin, Junhua Wang, and Jingren
  Zhou. 2026.
\newblock \href {https://arxiv.org/abs/2605.10501} {Accelerating compound {LLM}
  training workloads with {Maestro}}.
\newblock \emph{Preprint}, arXiv:2605.10501.

\bibitem[{Zhang et~al.(2026)Zhang, Zhang, Zhang, Hu, Chen, and
  Xu}]{zhang2026kdflow}
Songming Zhang, Xue Zhang, Tong Zhang, Bojie Hu, Yufeng Chen, and Jinan Xu.
  2026.
\newblock \href {https://arxiv.org/abs/2603.01875} {{KDFlow}: A user-friendly
  and efficient knowledge distillation framework for large language models}.
\newblock \emph{Preprint}, arXiv:2603.01875.

\bibitem[{Zhong et~al.(2024)Zhong, Liu, Chen, Hu, Zhu, Liu, Jin, and
  Zhang}]{zhong2024distserve}
Yinmin Zhong, Shengyu Liu, Junda Chen, Jianbo Hu, Yibo Zhu, Xuanzhe Liu, Xin
  Jin, and Hao Zhang. 2024.
\newblock \href
  {https://www.usenix.org/conference/osdi24/presentation/zhong-yinmin}
  {{DistServe}: Disaggregating prefill and decoding for goodput-optimized large
  language model serving}.
\newblock In \emph{18th USENIX Symposium on Operating Systems Design and
  Implementation (OSDI 24)}, pages 193--210, Santa Clara, CA. USENIX
  Association.

\end{thebibliography}

\captionsetup{font=10pt}
\appendix
\raggedbottom

\section{Estimation Models in Detail}

\subsection{Memory Model}
\label{app:memory}
\paragraph{Per-module accounting.}
Table~\ref{tab:modules} lists the terms of \eqref{eq:statics}, \eqref{eq:colocate}, \eqref{eq:trainer}, and \eqref{eq:producer} per module. The red constants were fitted to measured OOM events and peak readings (Tables~\ref{tab:brackets}, \ref{tab:trainerruns}, and~\ref{tab:producerruns}), the green ones follow from the code, and the black ones are allowances.

\begin{table*}[t]
\centering
\footnotesize
\setlength{\tabcolsep}{4pt}
\renewcommand{\arraystretch}{1.1}
\begin{tabular}{L{4.6cm}L{10.4cm}}
\toprule
Module & Term \\
\midrule
Student weights, gradients, optimizer & $(\fromcode{2}{+}\fromcode{4})\,P_{\mathrm{loc}}\bytes+\fromcode{12}(D_s/(\mathrm{tp}\,\mathrm{pp}\,\mathrm{dp}\,\mathrm{cp})+E_s/(\mathrm{pp}\,\mathrm{ep}\,\mathrm{edp}))\bytes$, experts on $\mathrm{pp}\cdot\mathrm{ep}$, their optimizer state on $\mathrm{edp}$ ranks \\
Student activations, colocated & $\fromcode{1}\tfrac{\ell_s}{\mathrm{pp}}\tfrac{L}{\mathrm{cp}\,\mathrm{tp}}\ds\, b\bytes$ stored layer inputs of the 1 microbatch in flight at the loss stage (full recomputation) \\
Student attention scores & $(L/\mathrm{cp})^2(h/\mathrm{tp})\,b\bytes$ when attention is not fused (zero for the students of this paper) \\
Student MoE dispatch & $\fitted{2}\tfrac{L}{\mathrm{cp}}k_{\mathrm{e}}\ds\, b/\mathrm{tp}\bytes$ \\
Colocated teacher shard & $\fromcode{2}(D_t/(\mathrm{pp}\,\mathrm{tp})+E_t/(\mathrm{pp}\,\mathrm{ep}))\bytes$ (bf16) \\
Colocated logits buffers & $(\fitted{7}{+}1{+}\fitted{1.5})\,(L/\mathrm{cp})(V/\mathrm{tp})\,\fromcode{4}\bytes$ (7 buffers, 1 for the loss-averaging all-reduce, and 1.5 for the backward pass; 1 more with the LM loss) \\
Teacher forward-pass memory, colocated (the peak is the maximum with this term) & MoE expert-MLP tile (\fromcode{8} fp32 copies of the dispatched token pairs, divided by $\mathrm{tp}\cdot\mathrm{cp}$; \fitted{0.705} of it for a distributed-checkpoint teacher), Mamba activations (\fitted{4.99} bf16 copies of each Mamba layer's inner states, divided by $\mathrm{tp}\cdot\mathrm{cp}$), and \fitted{10.9}\gib{} of staging when the teacher is loaded from a Hugging Face checkpoint \\
Non-PyTorch memory, colocated $\sigma_{\mathrm{col}}$ & \fitted{12.27}\gib{}, the mean of the 2 cp4 runs; \fitted{3.1}\gib{} in the first iteration, the mean of 4 OOM events (3 Qwen3.5, 1 Nemotron) \\
Trainer stack, disaggregated (loss stage) & $S{+}\kappa_{\mathrm{str}}{+}A_{\mathrm{str}}{+}\sigma_{\mathrm{str}}$, with $A_{\mathrm{str}}=A_{\mathrm{gc}}{+}A_{\mathrm{bwd}}{+}A_{\mathrm{moe}}$; $\kappa_{\mathrm{str}}$ is \eqref{eq:kstr} (1.1\gib{} for the Qwen3.5 (2+1) trainer) \\
Trainer reference-cycle garbage $A_{\mathrm{gc}}$ & $\fitted{3}\,n_{\mathrm{mb}}\tfrac{r}{\mathrm{tp}}\,\ds\, b\bytes$ (12.0\gib{} at 64 microbatches) \\
Trainer loss-backward and recompute temporary memory $A_{\mathrm{bwd}}$ & $\bigl(\fitted{2.5}\cdot4\ds\Vloc+\fitted{6}\cdot4\,C\Vloc+34\,\ell_{\mathrm{rc}}\tfrac{r}{\mathrm{tp}}\ds\bigr)\bytes$ (8.6\gib{}); $\ell_{\mathrm{rc}}$: layers recomputed at once \\
Trainer expert-layer block $A_{\mathrm{moe}}$ (MoE students) & $\fitted{2.5}\,\ell^{\mathrm{moe}}_{\mathrm{last}}\,\tfrac{r}{\mathrm{tp}}\,\ds\, b\bytes$, $\ell^{\mathrm{moe}}_{\mathrm{last}}$ the expert layers of the loss stage (3.75\gib{} for Qwen3-30B-A3B) \\
Trainer remainder $\sigma_{\mathrm{str}}$ & \fitted{15.32}\gib{}, the mean remainder of 8 trainers \\
Producer shard & $(\fromcode{2}D_t/\mathrm{tp}_p+\beta_eE_t/(\mathrm{tp}_p\mathrm{ep}_p))\bytes$ of the whole checkpoint, $\beta_e{=}\fromcode{1}$ (fp8 experts) or \fromcode{2} (bf16) \\
Producer remainder $\varphi_p$ (forward-pass workspace and other memory) & \fitted{22.70}\gib{} (MoE: the mean residual of 6 producers), \fitted{4.4}\gib{} plus \fitted{12.8} bf16 buffers of $L\times d_t$ (dense: through the 2 dense producers) \\
Producer NCCL buffers $\omega_p$ & 1\gib{}, a flat allowance \\
\bottomrule
\end{tabular}
\caption{Per-module terms of the memory model ($k_{\mathrm{e}}$: routed experts per token, $h$: attention heads). \fromcode{Green} coefficients follow from the code or the framework; \fitted{red} coefficients are fitted to measurements; black numbers are allowances. Parameter terms are in exact GiB (Appendix~\ref{app:memory}).}
\label{tab:modules}
\end{table*}

\begin{table}[t]
\centering
\footnotesize
\setlength{\tabcolsep}{3pt}
\begin{tabular}{L{1.9cm}L{2.3cm}L{2.5cm}}
\toprule
Model & In the fit & Outside the fit \\
\midrule
Trainer device peak & 8 runs: RMS 1.2, largest 2.5 & 13 runs: RMS 1.7, largest 4.0 \\
Producer device peak & 8 runs: RMS 5.0, largest 8.4 & leave one MoE family out (6 runs): RMS 8.6, largest 12.3 \\
Colocated verdict & 6 configurations: 5 correct & 18 configurations: 17 correct \\
Colocated peak & 4 peaks, 4 OOM points: RMS 0.5, largest 1.2 & 9 peaks: RMS 1.5, largest 3.0; 2 OOM points, largest $-$15.8 \\
Inter-node traffic (Qwen3.5) & $-$ & within 2\% (3 and 4 nodes) \\
NVLink traffic (Qwen3.5) & $-$ & 4.4\% (colocated), 15\% (trainer), 3\% (producer) \\
\bottomrule
\end{tabular}
\caption{Accuracy of the estimation models: root-mean-square (RMS) and largest error (predicted minus measured, GiB) of the trainer and producer device peaks (Tables~\ref{tab:trainerruns} and~\ref{tab:producerruns}), where the producer's constants are fitted on all 8 runs and the second column predicts each MoE family from the others (the 2 dense runs fix their own constants); the colocated rows are verdicts (fits or OOM) and peaks or OOM points (Table~\ref{tab:brackets}), where the fit rows are those that set a constant; traffic is the deviation of the counters from the calculation (Tables~\ref{tab:nodes} and~\ref{tab:estimates}). The trainer's 13 runs outside the fit include the 2 runs with all 3 parts of the cache release (CR all; Appendix~\ref{app:cache}), of which the (2+1) reading, 79.6\gib{}, is at the device limit.}
\label{tab:accuracy}
\end{table}

\paragraph{The colocated loss buffers.}
ModelOpt's logits-KL loss materializes, at the last pipeline stage, 7 fp32-equivalent logits buffers, each $\in\mathbb{R}^{L/\mathrm{cp}\times V/\mathrm{tp}}$: the bf16 student and teacher logits (together 1 fp32-equivalent buffer), their 2 fp32 upcasts, the exponentials of the student logits that autograd retains for the backward pass, the student and teacher log-probabilities, and the elementwise KL tensor that is summed over the vocabulary. We fitted the count from the 2 colocated OOM events of the Qwen3.5 pair on 4 nodes (Table~\ref{tab:brackets}), whose OOM events fail at the 4th and the 7th buffer. The cp1 event failed on the fourth buffer, as did the 3-node event outside the fit, and the cp2 event on the seventh with 5 to 6 live, which gives the 7 live buffers under the allocation order listed. An eighth buffer stages the loss-averaging all-reduce, set from the Qwen3-235B 8-node cp2 OOM at its first call (Table~\ref{tab:brackets}). A run that reaches the backward pass also holds the gradient of the fp32 student logits and its bf16 copy, 1.5 buffers: fitting that count to the peak of each cp4 run gives 1.46 for both runs, and the cp8 run, outside the fit, is predicted within 0.7\gib{} (Table~\ref{tab:brackets}). Together they are the 9.5 buffers of \eqref{eq:colbuf}. The LM loss adds 1 more buffer, since the torch-allocated peak of the \ballk{} run is 3.8\gib{} above the KD-only run's and 1 buffer is 3.8\gib{}. The fused loss peaks 19.2\gib{} below the \sqlk{} run at cp4, 5.1 buffers, and is outside the count (Table~\ref{tab:main}). The streamed loss retains $\kappa_{\mathrm{str}}$ of \eqref{eq:kstr} instead, which per microbatch holds $r(\dt{+}\ds)$ hidden elements where the colocated loss holds $2r\,\Vloc$ logit elements (teacher and student), a reduction of $2\Vloc/(\dt{+}\ds)$: 40$\times$ for the Qwen3.5 pair and 1.3$\times$ for the dense Llama pair at $\mathrm{tp}{=}8$.

\paragraph{Units and parameter counts.}
Every memory term is in GiB ($2^{30}$ bytes): a parameter term is the count in billions times the bytes per parameter times $10^9/2^{30}$. A measured peak is quoted as the quantity that was logged (Megatron's decimal-GB readings converted to GiB): torch-allocated (the maximum allocated by PyTorch on the loss-stage ranks), device (the peak device memory in use, which adds the non-PyTorch memory), or, at an OOM error, the bytes allocated plus the bytes requested. The trainer's loss-stage parameter count sums the blocks of the stage, with the layers split evenly over the pipeline stages, the output layer ($V\ds/\mathrm{tp}$) on the last stage (a copy of the embedding when the embeddings are tied and $\mathrm{pp}>1$), and the Gated-DeltaNet, gated-attention, router, and Mamba parameters of each layer type; it reproduces the logged per-rank count of each of 7 trainers within 0.1\%. Totals $D_s$ and $E_s$ taken from the configuration alone undercount the Qwen3.5 student by 9\% (8.2 against 9.0 billion parameters, because a Gated-DeltaNet layer and the gated attention output count as plain attention) and the Qwen3-4B student by 1.1\gib{}, because its tied embedding is held on both end stages, so the counts come from the checkpoint tensor index and the block sums. The optimizer term shards over the combined DP$\times$CP group, as Megatron-Core's distributed optimizer does, so at $(\mathrm{dp},\mathrm{cp})=(1,4)$ the optimizer state per rank is 4 times smaller than a DP-only form gives. The Adam moments are allocated at the first optimizer step, so a run that stops in its first iteration holds $S$ without them, and its non-PyTorch memory is 3.1\gib{} (the mean of 4 OOM events, 2.5, 2.7, 3.2, and 4.0, at the failing rank with the most memory in use; the 4.0 is the Nemotron 4-node run with tp8/pp1/cp4/ep32), against 12.27\gib{} at steady state (the device peak minus the torch-allocated peak of the 2 cp4 runs, 12.65 and 11.88\gib{}). The student's routed experts are sharded over $\mathrm{pp}\cdot\mathrm{ep}$ only because the bridge pins their expert TP to 1, and the Qwen3.5 teacher's 2 key-value heads limit TP to 2 without head replication.

\paragraph{Producer-side terms.}
The producer holds the teacher's whole checkpoint, the vision tower and the multi-token-prediction block where the teacher has them. The shard term of \eqref{eq:producer} counts dense tensors at 2 bytes and routed experts at 1 byte when fp8 and 2 bytes when bf16, and reproduces the per-rank parameter count in the producer log to 0.15\% for the Qwen3.5, Qwen3-235B, Llama, and Qwen3-32B teachers and to 4.2\% for the Nemotron teacher, whose Mamba mixers are not fully tensor-parallel. The device peak lies 17.7 to 32.1\gib{} above the shard for the MoE teachers and 5.9 to 18.2\gib{} for the dense ones (Table~\ref{tab:producerruns}). The forward-pass term $\varphi_p$ is the mean of these remainders after the 1\gib{} of NCCL buffers $\omega_p$: 22.70\gib{} for the 6 MoE producers, and for the 2 dense producers the line through them, 4.4\gib{} plus 12.8 bf16 buffers of $L\times d_t$ ($L{=}32768$ for Llama, 4096 for Qwen3-32B). The line predicts the Qwen3-32B producers at $L{=}8192$ and 16384 (peaks 37.56 and 39.84\gib{}) to $-0.6$ and $-1.9$\gib{}; a single buffer count fitted on Llama underestimates all 3 Qwen3-32B runs by 4.3 to 5.6\gib{}. The 2 runs of each MoE family differ by at most 1.1\gib{}, and the remainder differs between families (21.4 for Qwen3.5, 18.1 for Qwen3-235B, and 31.6 for Nemotron), which a constant does not follow, so the estimate is too high for the Qwen3-235B producers and too low for the Nemotron producers.

\begin{table*}[t]
\centering
\footnotesize
\setlength{\tabcolsep}{4pt}
\begin{tabular}{L{3.2cm}L{2.4cm}rrrrrr}
\toprule
Run (T+P nodes) & Replica & Shard & Pred. & Meas. & Err. & Resid. & Err., family left out \\
\midrule
Qwen3.5 (2+1) & TP2, fp8 & 18.1 & 41.8 & 39.4 & $+2.4$ & 21.3 & $+3.5$ \\
Qwen3.5 (3+1) & TP2$\times$EP2, fp8 & 10.4 & 34.1 & 32.0 & $+2.1$ & 21.6 & $+3.2$ \\
Qwen3-235B (2+1) & TP8, bf16 & 54.7 & 78.4 & 72.4 & $+6.1$ & 17.7 & $+8.9$ \\
Qwen3-235B (4+4) & TP8, bf16 & 54.7 & 78.4 & 73.3 & $+5.2$ & 18.6 & $+8.0$ \\
Nemotron (2+2) & TP8$\times$EP2, fp8 & 38.6 & 62.3 & 69.6 & $-7.3$ & 31.0 & $-11.2$ \\
Nemotron (4+4) & TP8$\times$EP2, fp8 & 38.6 & 62.3 & 70.7 & $-8.4$ & 32.1 & $-12.3$ \\
Llama (8+8) & TP16, bf16 & 47.3 & 65.4 & 65.4 & $0.0$ & 18.2 & $-$ \\
Qwen3-32B (1+1) & TP2, bf16 & 30.5 & 36.4 & 36.4 & $0.0$ & 5.9 & $-$ \\
\bottomrule
\end{tabular}
\caption{Producer memory per GPU (GiB) of 8 of the 10 runs of Table~\ref{tab:runs}. Shard: the whole checkpoint's per-rank weights, dense tensors in bf16 and routed experts in fp8 or bf16 as listed (Appendix~\ref{app:memory}); Pred.: shard plus the forward-pass term (22.70 for MoE teachers; 4.4 plus 12.8 bf16 buffers of $L\times d_t$ for dense ones, at the sequence length of the run: 32768, and 4096 for Qwen3-32B) plus the 1 for NCCL buffers; Meas.: device peak; Err.: predicted minus measured; Resid.: measured minus shard; Err., family left out: the error when the forward-pass term is fitted on the other families (not defined for the 2 dense teachers, whose 2 parameters the 2 dense runs fix). The Qwen3-235B (2+1) row uses the 72.4 of an earlier run of that configuration; the run of Table~\ref{tab:runs} reads 72.9.}
\label{tab:producerruns}
\end{table*}

\paragraph{Fitted and held-out configurations.}
The colocated constants are set on 4 peak readings and 4 OOM points (Table~\ref{tab:brackets}). The readings are the device and torch-allocated peaks of the Qwen3.5 cp4 runs with the KD-only and \ballk{} losses; they set the non-PyTorch memory $\sigma_{\mathrm{col}}$. The OOM points are the 2 Qwen3.5 4-node events, which fix the 7-buffer count, and the 2 Nemotron 8-node events, which set the teacher's forward-pass terms and the 10.9\gib{} staging of the Hugging Face load. The trainer remainder $\sigma_{\mathrm{str}}$ is the mean remainder of the device peaks (the maximum over a run's trainer GPUs) of the 8 trainers with CR $-$ after the other terms of \eqref{eq:trainer} are subtracted, and the producer's $\varphi_p$ is the mean remainder of the 6 MoE producer peaks, with the 2 dense producers setting the intercept and the $L\times d_t$ buffer count (Table~\ref{tab:producerruns}). Table~\ref{tab:validation} compares prediction and measurement on the Qwen3.5 pair under both layouts, with the use of each row (fit or held out), and Table~\ref{tab:trainerruns} lists the terms, prediction, and measurement of each trainer. Outside the trainer fit are the 2 Qwen3-235B trainers with CR all, whose device readings are allocator ceilings, 7 Qwen3.5 trainers on other layouts (Appendix~\ref{app:primary}), the Qwen3-4B student at 3 further sequence lengths, and the Qwen3-30B student with the garbage collected after every microbatch, whose prediction was recorded before its run (Table~\ref{tab:trainerruns}).

\begin{table*}[t]
\centering
\footnotesize
\setlength{\tabcolsep}{4pt}
\begin{tabular}{L{4.4cm}lrrrrrrrrrl}
\toprule
Trainer & Grid & $n_{\mathrm{mb}}$ & $S$ & $\kappa_{\mathrm{str}}$ & $A_{\mathrm{gc}}$ & $A_{\mathrm{bwd}}$ & $A_{\mathrm{moe}}$ & Pred. & Meas. & Err. & Use \\
\midrule
Qwen3.5 (2+1) & 2/2/2/2 & 64 & 18.8 & 1.1 & 12.0 & 8.6 & 0.0 & 55.8 & 57.1 & $-$1.3 & fit \\
Qwen3.5 (3+1) & 2/2/2/3 & 42 & 16.7 & 1.1 & 7.9 & 8.6 & 0.0 & 49.6 & 50.3 & $-$0.7 & fit \\
Qwen3.5 (2+1), tp4 & 4/2/2/1 & 128 & 12.5 & 0.7 & 12.0 & 4.3 & 0.0 & 44.9 & 45.1 & $-$0.2 & fit \\
Qwen3-32B (1+1), seq 4096 & 2/2/1/2 & 64 & 12.3 & 1.2 & 1.9 & 10.1 & 0.0 & 40.8 & 40.3 & +0.5 & fit \\
Llama (8+8) & 8/2/2/2 & 64 & 37.0 & 2.0 & 6.0 & 3.2 & 0.0 & 63.5 & 61.0 & +2.5 & fit \\
Nemotron (2+2) & 2/2/1/4/4 & 32 & 47.4 & 2.8 & 7.9 & 4.5 & 2.5 & 80.4 & 79.6 & +0.8 & fit \\
Nemotron (4+4) & 2/2/1/8/4 & 16 & 35.9 & 2.8 & 3.9 & 4.5 & 2.5 & 65.0 & 65.4 & $-$0.4 & fit \\
Qwen3-235B (4+4), ep8 & 2/2/1/8/8 & 16 & 23.0 & 1.6 & 3.0 & 9.5 & 3.8 & 56.1 & 57.4 & $-$1.3 & fit \\
Qwen3-235B (2+1) & 2/2/1/4/4 & 32 & 43.8 & 1.3 & 6.0 & 9.5 & 3.8 & 79.6 & 79.6 & 0.0 & ceiling \\
Qwen3-235B (4+4) & 2/2/1/8/4 & 16 & 33.1 & 1.3 & 3.0 & 9.5 & 3.8 & 66.0 & 67.8 & $-$1.8 & ceiling \\
Qwen3.5 (4+2) & 2/2/2/4 & 32 & 15.6 & 1.1 & 6.0 & 8.6 & 0.0 & 46.7 & 47.5 & $-$0.8 & held out \\
Qwen3.5 (4+2), pp4 & 2/4/2/2 & 64 & 11.5 & 1.1 & 12.0 & 8.6 & 0.0 & 48.6 & 49.5 & $-$0.9 & held out \\
Qwen3.5 (4+2), cp4 & 2/2/4/2 & 64 & 15.6 & 0.9 & 6.0 & 8.1 & 0.0 & 46.0 & 48.3 & $-$2.4 & held out \\
Qwen3.5 (3+1), seq 8192 & 2/2/2/3 & 168 & 16.7 & 0.8 & 7.9 & 7.8 & 0.0 & 48.5 & 48.3 & +0.2 & held out \\
Qwen3.5 (4+2), pp4/cp4, seq 8192 & 2/4/4/1 & 512 & 11.5 & 0.8 & 12.0 & 7.7 & 0.0 & 47.3 & 49.1 & $-$1.8 & held out \\
Qwen3.5 (2+1), collected & 2/2/2/2 & 64 & 18.8 & 1.1 & 0.2 & 8.6 & 0.0 & 44.0 & 44.8 & $-$0.8 & held out \\
Qwen3.5 (4+1), pp4/cp4 & 2/4/4/1 & 128 & 11.5 & 0.9 & 12.0 & 8.1 & 0.0 & 47.8 & 51.8 & $-$4.0 & held out \\
Qwen3-32B (1+1), seq 8192 & 2/2/1/2 & 32 & 12.3 & 1.3 & 1.9 & 11.4 & 0.0 & 42.3 & 42.6 & $-$0.4 & held out \\
Qwen3-32B (1+1), seq 16384 & 2/2/1/2 & 16 & 12.3 & 1.6 & 1.9 & 14.1 & 0.0 & 45.2 & 45.2 & 0.0 & held out \\
Qwen3-32B (1+1), seq 32768 & 2/2/1/2 & 8 & 12.3 & 2.0 & 1.9 & 19.4 & 0.0 & 50.9 & 49.0 & +2.0 & held out \\
Qwen3-235B (2+1), collected & 2/2/1/4/4 & 32 & 43.8 & 1.6 & 0.2 & 9.5 & 3.8 & 74.1 & 75.3 & $-$1.2 & held out \\
\bottomrule
\end{tabular}
\caption{Terms of the trainer memory model \eqref{eq:trainer} and the measured device peak of the trainer runs the model uses (GiB). Grid: tp/pp/cp/dp, with ep appended for MoE students; sequence length 32768 unless stated; $n_{\mathrm{mb}}$: microbatches per DP replica. Pred.: the 5 terms shown plus $\sigma_{\mathrm{str}}{=}15.32$; Meas.: device peak, the maximum over the trainer GPUs; Err.: predicted minus measured. Use: fit (the 8 runs whose mean remainder is $\sigma_{\mathrm{str}}$), held out (outside that mean; the term forms were chosen with their residuals visible), or ceiling (CR all, Table~\ref{tab:cachesplit}: the device reading is an allocator ceiling). Collected: Python's garbage collected after every microbatch. The Qwen3-235B pair rows are the 30B student; the Qwen3-235B (2+1), collected row was predicted at 73.8 before its run, 0.3 below this value, which adds the fp8 head copy of its configuration. Teacher precision: the Qwen3-235B (4+4), ep8 and (2+1), collected rows are fp8-teacher runs outside Table~\ref{tab:runs}; the 2 ceiling rows are bf16.}
\label{tab:trainerruns}
\end{table*}

\begin{table}[t]
\centering
\footnotesize
\setlength{\tabcolsep}{3pt}
\begin{tabular}{L{2.75cm}rl@{\hspace{4pt}}l}
\toprule
Configuration & Pred. & Measured & Use \\
\midrule
Coloc., 3 nodes, cp1 & 86.8 & OOM, 72.9+15.2 & held out \\
Coloc., 4 nodes, cp1 & 86.4 & OOM, 72.4+15.2 & fit \\
Coloc., 4 nodes, cp2 & 78.3 & OOM, 71.4+7.6 & fit \\
Coloc., 4 nodes, cp4 & 75.8 & 76.1 & fit \\
Coloc., 4 nodes, cp4, \ballk{} & 79.6 & 79.1 & fit \\
Coloc., 4 nodes, cp8 & 77.3 & 77.0 & held out \\
Coloc., 4 nodes, cp4, \sqlk{} & $-$ & 76.8 & outside \\
Coloc., 4 nodes, cp4, \sqlk{} fused & $-$ & 58.7 & outside \\
Coloc., 3 nodes, cp1, \sqlk{} fused & $-$ & OOM, 57.7+15.2 & outside \\
Coloc., 3 nodes, cp1, \sqlk{} fused-W & $-$ & 78.9 & outside \\
Coloc., 2 nodes, cp2, \sqlk{} fused-W & $-$ & 65.8 & outside \\
Coloc., 2 nodes, cp1, \sqlk{} fused-W & $-$ & OOM, 64.9+0.5 & outside \\
Disagg. (2+1), trainer & 55.8 & 57.1 & fit \\
Disagg. (3+1), trainer & 49.6 & 50.3 & fit \\
Disagg. (tp4), trainer & 44.9 & 45.1 & fit \\
Disagg. (4+1, pp4, cp4), trainer & 47.8 & 51.8 & held out \\
Disagg. (2+1, collected), trainer & 44.0 & 44.8 & held out \\
Disagg. Qwen3-30B (2+1, collected), trainer & 74.1 & 75.3 & held out \\
Disagg. (2+1), producer & 41.8 & 39.4 & fit \\
Disagg. (3+1), producer & 34.1 & 32.0 & fit \\
\bottomrule
\end{tabular}
\caption{Predicted (Pred.) memory demand and measured peak per GPU (GiB) on the Qwen3.5 pair (and 1 Qwen3-30B trainer); Use: whether the row set a constant of the model (fit), not (held out), or is outside the model because its loss is not ModelOpt's (outside). OOM rows compare the predicted demand at the failing allocation with the measured bytes allocated plus requested; all other rows are device peaks (torch-allocated trainer peak 39.2). The colocated cp4 rows fix the non-PyTorch memory and the logits-buffer count, the trainer rows the trainer remainder $\sigma_{\mathrm{str}}$, and the producer rows the forward-pass term $\varphi_p$ (Appendix~\ref{app:memory}); ``collected'' is the (2+1) trainer with Python's garbage collected after every microbatch. Tables~\ref{tab:brackets}, \ref{tab:trainerruns}, and~\ref{tab:producerruns} list all calibration points.}
\label{tab:validation}
\end{table}

\paragraph{Colocated verdicts and peaks.}
The model classifies 17 of the 18 colocated configurations outside its fit correctly and 5 of the 6 that set constants (OOM or fits; Table~\ref{tab:brackets}). The 2 exceptions are the Qwen3.5 \ballk{} run, a fit row that peaks at 79.1\gib{} and is predicted 79.6, and the Nemotron run on 4 nodes at tp8/pp1/cp4, which ran OOM in the teacher's first forward pass although the model predicts a device peak of 63.0\gib{}. The measured demand at the failing allocation is 75.7\gib{} against 59.9 predicted, and the model has no second tp 8 teacher run to calibrate the teacher's forward-pass terms. Outside the fit, the model predicts the peaks of 9 runs to within 3.0\gib{} (Table~\ref{tab:brackets}); the largest errors, $-3.0$ and $-2.8$\gib{}, are the Qwen3-235B runs on 2 and 4 nodes, which ended with a NaN gradient in their first iteration, before the optimizer step: the model evaluates them without Adam moments and with the 3.1\gib{} non-PyTorch memory of that iteration.

\paragraph{The Llama colocated bound.}
For the 405B-to-70B pair the model evaluates every colocated grid from 8 to 32 nodes with TP at most 8, the student's number of key-value (KV) heads, and PP dividing 2 (the layer counts are 80 and 126). The lowest-peak grid at 8, 16, 24, 28, and 32 nodes is predicted at 89 to 102\gib{}, over the budget, and the other node counts have higher minima or no legal grid, so none was run (Table~\ref{tab:brackets}). Megatron-Core also allows TP above the KV head count by replicating the heads, as the producer does at 16. With $\mathrm{tp}{=}32$ the colocated layout fits on 16 nodes and on 8, and the model predicts both within 1.2\gib{} of the torch-allocated peak without any Llama measurement in the colocated fit (the Llama trainer and producer are in their fits); node counts below 8 were not run.

\paragraph{Cache-release ceilings.}
The Qwen3-235B (2+1) trainer with CR all reserves 68.1\gib{} and holds 11.5\gib{} outside the allocator, with a device peak of 79.6 (Table~\ref{tab:trainerledger}), so its device reading is a ceiling; the (4+4) reading with CR all, 67.8, is a feasibility check for the same reason.

\paragraph{Trainer terms.}
3 terms account for most of the spread of the trainer's remainder, the device peak after $S$ and $\kappa_{\mathrm{str}}$ (Table~\ref{tab:trainerterms}); Table~\ref{tab:trainerledger} gives the measured memory of each trainer and Table~\ref{tab:trainerprobes} the probes of the terms. (1) Reference-cycle garbage. The trainer collects Python garbage at the iteration boundary only, so the allocation of the loss stage grows with every microbatch (Ramp in Table~\ref{tab:trainerledger}). The growth per microbatch is 0.70 to 0.97 of 3 sequence-parallel shards of the bf16 hidden states, $3\,(r/\mathrm{tp})\,\ds\, b$ bytes, on the 6 trainers where it was logged. On the Qwen3.5 (2+1) trainer, collecting after every microbatch lowers the end-of-iteration allocation by 11.8\gib{}, as the model gives, and each iteration takes 57\% longer (Table~\ref{tab:trainerprobes}). (2) Temporary memory of the loss backward and of the recomputed layers. The fp32 head gradient has $4\ds\Vloc$ bytes, and the fused backward holds it in 2 to 3 copies (the accumulator, the matmul temporary, and the bf16 cast) next to 5 to 8 fp32 window tensors $\in\mathbb{R}^{C\times\Vloc}$ (the forward scan holds 1 per distribution), depending on the order of operations; we use 2.5 and 6, and over that range the standard deviation (SD) of the remainder is 1.1 to 1.6\gib{}. The temporary memory measured on the Qwen3-4B student at TP 8 agrees with the terms (Table~\ref{tab:trainerprobes}). Recomputing $\ell_{\mathrm{rc}}$ layers at once holds $34\,(r/\mathrm{tp})\ds$ bytes per layer \citep{korthikanti2023reducing}, the count of a dense transformer, used for every student ($\ell_{\mathrm{rc}}{=}1$ except in the 4 Qwen3-32B-pair trainer runs, where $\ell_{\mathrm{rc}}{=}8$). (3) The expert-layer block of MoE students. Across the optimizer step their first pipeline stage holds more than its statics (Stage 1 in Table~\ref{tab:trainerledger}). An allocation dump of the Qwen3-30B trainer (1 run) attributes this to tensors that stay live across the boundary: the retained autograd graph, the bias-dropout activations, and the dispatch row maps and workspaces (Table~\ref{tab:trainerprobes}), about 2.5 bf16 copies of $[r/\mathrm{tp},\ds]$ per expert layer. The model applies 2.5 such copies to the expert layers of the loss stage ($A_{\mathrm{moe}}$); the 2.5 is calibrated on 1 Qwen3-30B dump, at tp 2 and cp 1. The remaining $\sigma_{\mathrm{str}}$ is memory outside PyTorch and unallocated memory that the caching allocator holds reserved (Table~\ref{tab:trainerledger}), and it contains the layer inputs saved for the recomputation, which no other term counts (1.0\gib{} on the loss stage at (2+1)). On the Qwen3.5 tp-4 trainer, turning NVLink SHARP (NVLS) multicast off changes the device peak by $-$0.3\gib{} and loading kernels eagerly by $+$1.7 (Table~\ref{tab:trainerprobes}); neither probe was run on another family. Memory outside PyTorch differs between families by more than the residuals do; the 8 trainers come from 4 student families, which do not separate family from model dimensions. At cp 4 the held-out Qwen3.5 trainers hold more memory outside PyTorch than at cp 2 (Table~\ref{tab:trainerledger}) and are the 3 most under-predicted of the 7 (Table~\ref{tab:trainerruns}).

\begin{table}[t]
\centering
\footnotesize
\setlength{\tabcolsep}{4pt}
\begin{tabular}{L{3.6cm}rrrr}
\toprule
Remainder after subtracting & Mean & SD & LOO & LOFO \\
\midrule
$S$, $\kappa_{\mathrm{str}}$ & 29.91 & 4.7 & 5.1 & 5.7 \\
\quad and $A_{\mathrm{gc}}$ & 23.09 & 4.1 & 4.4 & 4.5 \\
\quad and $A_{\mathrm{bwd}}$ & 16.41 & 2.3 & 2.4 & 2.4 \\
\quad and $A_{\mathrm{moe}}$ & 15.32 & 1.3 & 1.4 & 1.4 \\
\bottomrule
\end{tabular}
\caption{Remainder of the device peaks (GiB) of the 8 fitted trainers after the terms listed are subtracted cumulatively: its mean, SD, and the RMS error of predicting each trainer by the mean of the others (LOO) or of the other model families (LOFO). The last row is $\sigma_{\mathrm{str}}$ and its spread.}
\label{tab:trainerterms}
\end{table}

\begin{table*}[t]
\centering
\footnotesize
\setlength{\tabcolsep}{4pt}
\begin{tabular}{L{3.4cm}rrrrrrrrr}
\toprule
Trainer & Device & Torch & Reserved & Non-PyTorch & Unallocated & End of iter. & Stage 1 & Ramp & $A_{\mathrm{gc}}$ \\
\midrule
Qwen3.5 (2+1) & 57.1 & 39.2 & 43.1 & 13.9 & 3.8 & 31.8 & +0.3 & 11.6 & 12.0 \\
Qwen3.5 (3+1) & 50.3 & 33.0 & 36.6 & 13.4 & 3.6 & 25.6 & $-$ & $-$ & 7.9 \\
Qwen3.5 (2+1), tp4 & 45.1 & 29.0 & 31.1 & 14.0 & 2.1 & 25.1 & $-$ & 11.7 & 12.0 \\
Qwen3-32B (1+1) & 40.3 & 25.1 & 28.8 & 11.5 & 3.7 & 15.6 & +0.2 & 1.7 & 1.9 \\
Llama (8+8) & 61.0 & 50.3 & 52.7 & 7.8 & 2.4 & 43.8 & +0.2 & $-$ & 6.0 \\
Nemotron (2+2) & 79.6 & 67.5 & 69.8 & 9.7 & 3.6 & 59.3 & +2.2 & $-$ & 7.9 \\
Nemotron (4+4) & 65.4 & 52.5 & 54.2 & 9.5 & 4.2 & 43.8 & +2.2 & $-$ & 3.9 \\
Qwen3-235B (4+4), ep8 & 57.4 & 40.7 & 46.0 & 11.3 & 5.3 & 30.5 & $-$ & 2.6 & 3.0 \\
Qwen3-235B (2+1), CR all & 79.6 & 64.3 & 68.1 & 11.5 & 3.8 & 54.1 & +3.7 & 5.1 & 6.0 \\
Qwen3-235B (4+4), CR all & 67.8 & 50.7 & 56.1 & 11.2 & 5.5 & 40.4 & +3.7 & 2.1 & 3.0 \\
\midrule
Qwen3.5 (4+2) & 47.5 & 30.1 & 33.6 & 13.9 & 3.5 & 22.7 & $-$ & $-$ & 6.0 \\
Qwen3.5 (4+2), pp4 & 49.5 & 31.5 & 35.2 & 14.3 & 3.7 & 24.6 & $-$ & $-$ & 12.0 \\
Qwen3.5 (4+2), cp4 & 48.3 & 29.5 & 32.4 & 15.9 & 2.9 & 22.7 & $-$ & $-$ & 6.0 \\
Qwen3.5 (3+1), seq 8192 & 48.3 & 32.1 & 34.9 & 13.4 & 2.8 & 25.6 & $-$ & $-$ & 7.9 \\
Qwen3.5 (4+2), pp4/cp4, seq 8192 & 49.1 & 30.8 & 33.2 & 15.8 & 2.4 & 24.6 & $-$ & $-$ & 12.0 \\
Qwen3.5 (2+1), collected & 44.8 & 27.7 & 30.8 & 13.9 & 3.1 & 20.0 & $-$ & $-$ & 0.2 \\
Qwen3.5 (4+1), pp4/cp4 & 51.8 & 31.1 & 36.7 & 15.2 & 5.6 & 24.6 & $-$ & $-$ & 12.0 \\
\bottomrule
\end{tabular}
\caption{Measured memory of the 8 fitted trainers, the 2 runs with CR all, and the 7 Qwen3.5 trainers outside the $\sigma_{\mathrm{str}}$ fit (below the rule; GiB), on the last pipeline stage. Device: peak device memory, the maximum over the trainer GPUs; Torch: peak allocated (the maximum over the trainer GPUs for the 2 Nemotron rows); Reserved: peak reserved; Non-PyTorch: device memory minus reserved on the last-stage rank; Unallocated: reserved minus peak allocated (Device is a maximum over the trainer GPUs and the other columns are logged on 1 last-stage rank, so a row need not add up); End of iter.: allocation after the iteration; Stage 1: steady allocation of the first pipeline stage minus its statics (not logged for 3 runs and the held-out trainers); Ramp: growth of the allocation over the microbatches of an iteration, from the per-microbatch memory log (not logged for 4 runs and the held-out trainers); $A_{\mathrm{gc}}$: the model's garbage term.}
\label{tab:trainerledger}
\end{table*}

\begin{table*}[t]
\centering
\footnotesize
\setlength{\tabcolsep}{4pt}
\begin{tabular}{L{3.5cm}L{2.4cm}rrrrrL{2.6cm}}
\toprule
Probe & Trainer grid & s/it & Torch & Device & NCCL & Comms & Measured; model \\
\midrule
Qwen3.5 (2+1), garbage kept & tp2/pp2/cp2/dp2 & 74.3 & 39.2 & 57.1 & $-$ & $-$ & end of iter. 31.8 \\
Qwen3.5 (2+1), collected & tp2/pp2/cp2/dp2 & 116.9 & 27.7 & 44.8 & $-$ & $-$ & end of iter. 20.0; $-$11.8 \\
Qwen3.5 (2+1), tp4, reference & tp4/pp2/cp2/dp1 & 97.0 & 29.0 & 45.1 & 59.6 (3.73) & 17 & $-$ \\
\quad NVLS multicast off & tp4/pp2/cp2/dp1 & 98.2 & 29.0 & 44.8 & 59.6 (3.72) & 17 & $-$ \\
\quad eager kernel loading & tp4/pp2/cp2/dp1 & 100.2 & 29.0 & 46.8 & 59.6 (3.72) & 17 & $-$ \\
Qwen3-32B (1+1), tp2 & tp2/pp2/cp1/dp2 & $-$ & 25.1 & $-$ & 38.2 (4.78) & 19 & $-$ \\
Qwen3-32B (1+1), tp8 & tp8/pp1/cp1/dp1 & 34.7 & 11.8 & 23.2 & 23.2 (2.90) & 19 & temporary memory 2.56; 2.52 \\
Qwen3-235B (2+1), collected & tp2/pp2/cp1/dp4, ep4 & 402--405 & 59.6 & 75.3 & $-$ & $-$ & retained set 3.57; 3.75; statics 43.75; 43.76 \\
\bottomrule
\end{tabular}
\caption{Probes of the trainer memory terms (GiB; $-$: not measured), all with CR $-$ (Table~\ref{tab:cachesplit}). Torch: peak allocated; Device: peak device memory; NCCL: all communicator buffers over the trainer ranks, per rank in parentheses; Comms: communicators per rank. Collected: Python's garbage collected after every microbatch, which lowers the end-of-iteration allocation by 11.8 GiB against the garbage-kept run, equal to $A_{\mathrm{gc}}$ of the same grid (calibrated on its ramp). Temporary memory: peak minus end-of-iteration allocation against the term $A_{\mathrm{bwd}}$ (Qwen3-4B student); retained set: tensors alive across the optimizer step (autograd graph 1.50, bias-dropout activations 1.44, dispatch maps and workspaces 0.63) against $A_{\mathrm{moe}}$ (the family that calibrates its 2.5 copies); statics: the allocation dump against $S$. Qwen3-235B (2+1), collected is the 30B student with the 235B teacher.}
\label{tab:trainerprobes}
\end{table*}

\subsection{Traffic Model}
\label{app:traffic}
\paragraph{Data-movement derivations.}
Table~\ref{tab:traffic} lists the volume of each parallelism dimension with its Qwen3.5 values ($\dt{=}2048$, $\ds{=}4096$, $V{=}248320$, $\ell_t{=}40$, $\ell_s{=}32$, $\tau{=}2^{20}$), and Table~\ref{tab:estimates} sums them with the transfer. The tables hide 3 points: (1) in the Qwen3.5 architecture every 4th layer is full attention: CP exchanges the keys and values of those layers around a ring \citep{liu2024ringattention}, and the remaining linear-attention layers redistribute their input projection from sequence to head sharding, and their output back, by all-to-all, as Megatron-Bridge implements it; (2) gradients are reduced over the DP$\times$CP group, so the student's gradient traffic is nonzero at $\mathrm{dp}{=}1$ when $\mathrm{cp}>1$; (3) a ring over a group that spans several nodes sends $(n{-}1)/n$ of its bytes over InfiniBand at each node boundary of the ring, and the transfer multiplies the hidden-state volume by $\mathrm{tp}$ because the sender transmits the rows of each cp rank to its $\mathrm{tp}$ last-stage ranks.

\begin{table*}[t]
\centering
\footnotesize
\setlength{\tabcolsep}{2.5pt}
\renewcommand{\arraystretch}{1.2}
\begin{tabular}{L{1.7cm}L{4.4cm}L{4.9cm}rr}
\toprule
 & & & \multicolumn{2}{c}{Qwen3.5 (GB; student; teacher)} \\
Dimension & Teacher (forward) & Student (training) & colocated & disaggregated \\
\midrule
Tensor & $2\tfrac{\ell_t}{\mathrm{pp}}\tfrac{2(\mathrm{tp}-1)}{\mathrm{tp}}\,\tau\,\dt b\bytes$ & $6\tfrac{\ell_s}{\mathrm{pp}}\tfrac{2(\mathrm{tp}-1)}{\mathrm{tp}}\,\tau\,\ds b\bytes$ & 412.3; 85.9 & 824.6; 343.6 \\
Pipeline, per boundary and direction & $n_{\mathrm{mb}}\tfrac{L}{\mathrm{cp}}\tfrac{\dt b}{\mathrm{tp}}\bytes$, both directions colocated & $n_{\mathrm{mb}}\tfrac{L}{\mathrm{cp}}\tfrac{\ds b}{\mathrm{tp}}\bytes$, both directions & 4.29; 2.15 & 4.29; 0 \\
Context, attention & $\tfrac{\ell^{\mathrm{att}}_t}{\mathrm{pp}}(\mathrm{cp}{-}1)\,\tau\,2\tfrac{h_{\mathrm{kv},t}}{\mathrm{tp}}d_{\mathrm{head}} b\bytes$ & $4\tfrac{\ell^{\mathrm{att}}_s}{\mathrm{pp}}(\mathrm{cp}{-}1)\,\tau\,2\tfrac{h_{\mathrm{kv},s}}{\mathrm{tp}}d_{\mathrm{head}} b\bytes$ & 51.5; 8.1 & 34.4; 0 \\
Context, linear attention & $\tfrac{\ell^{\mathrm{lin}}_t}{\mathrm{pp}}\tfrac{\mathrm{cp}-1}{\mathrm{cp}}\,\tau\,\tfrac{w}{\mathrm{tp}}b\bytes$ & $3\tfrac{\ell^{\mathrm{lin}}_s}{\mathrm{pp}}\tfrac{\mathrm{cp}-1}{\mathrm{cp}}\,\tau\,\tfrac{w}{\mathrm{tp}}b\bytes$ & 232.8; 97.0 & 310.5; 0 \\
Data and context & none & $\tfrac{\mathrm{dp}\,\mathrm{cp}-1}{\mathrm{dp}\,\mathrm{cp}}P_{\mathrm{loc}}\,(4{+}2)\bytes$ & 5.0; 0 & 10.1; 0 \\
Expert & $\tfrac{\ell^{\mathrm{moe}}_t}{\mathrm{pp}}\,\tau\, k_{\mathrm{e}}\dt\bigl[\tfrac{2(\mathrm{ep}-1)}{\mathrm{ep}}b+\tfrac{\mathrm{etp}-1}{\mathrm{etp}}(b{+}4)\bigr]\bytes$ & $3\tfrac{\ell^{\mathrm{moe}}_s}{\mathrm{pp}}\tfrac{2(\mathrm{ep}-1)}{\mathrm{ep}}\,\tau\, k_{\mathrm{e}}\,\ds b\bytes$ & 0; 0 & 0; 2061.6 \\
\bottomrule
\end{tabular}
\caption{Bytes sent per GPU and iteration by parallelism dimension, with the values on the Qwen3.5 pair (4 nodes colocated, 3 disaggregated; the disaggregated teacher is the producer). A dimension of degree $n$ and rank stride $\Delta$ spans $n\Delta$ ranks and uses NVLink if $n\Delta\le8$ (the GPUs of a node) and InfiniBand at its node boundaries otherwise; the strides are 1 (tensor), tp (context, and data at ep${=}1$), $\mathrm{tp}\,\mathrm{cp}$ (expert), and $\mathrm{tp}\,\mathrm{cp}\,\mathrm{ep}\,\mathrm{dp}$ (pipeline). The factors 2 and 6 are 2 all-reduces per layer times the teacher's 1 forward pass or the student's 3 passes (forward, recompute, backward); 3 and 4 count passes (the ring attention \citep{liu2024ringattention} backward exchanges keys and values and their gradients, hence 4). $\ell^{\mathrm{att}}$, $\ell^{\mathrm{lin}}$, and $\ell^{\mathrm{moe}}$ count full-attention, linear-attention, and MoE layers, $h_{\mathrm{kv}}$ the KV heads, $d_{\mathrm{head}}$ the head width, and $w{=}16{,}448$ the per-token width of the linear-attention exchange of the Qwen3.5 layers (12,352 for the input projection of $q,k,v$, gate, and 2 scalars per head, 4,096 for the output). The expert term counts every routed copy of a token: the all-to-all over the EP group and, at $\mathrm{etp}>1$, the dispatch all-gather in bf16 and the combine reduce-scatter in fp32 over the expert tensor-parallel group; it is 0 at $\mathrm{ep}{=}1$ and $\mathrm{etp}{=}1$. Gradients are reduced in fp32 (4 bytes) and parameters gathered in bf16 (2 bytes).}
\label{tab:traffic}
\end{table*}

\paragraph{Traffic validation.}
Table~\ref{tab:nodes} compares the calculated inter-node bytes of every node with its measured InfiniBand traffic (for the disaggregated runs the node's total rate over its adapters, averaged over the 5\,s samples from iteration 3, checkpoint iterations excluded, times the mean iteration time; for the colocated run the byte counts of the 8 adapters that carry the traffic, read at the ends of iterations 2 and 12; Table~\ref{tab:estimates} divides the bytes by 90.5\,s for colocated KD-only, the median iteration time of its 200-iteration run, and by 76.0\,s, iterations 3 to 400, for the 400-iteration run of the (2+1) split that carried the counters (not a row of Table~\ref{tab:runs}), whose measured rate is the mean received rate from iteration 3), placing ranks in Megatron's order (tp, cp, dp, pp) on 8-GPU nodes. On the colocated run each node is 1 pipeline stage, only the pipeline crosses InfiniBand, and every counter is within 1.4\% of the calculation; ModelOpt concatenates the teacher's hidden width to the student's in the pipeline tensor, so the teacher's activation-sized bytes cross each boundary in both directions. At 3 nodes the tensor, context, and data groups stay inside the nodes, so only the pipeline activation transfers and the transfer cross InfiniBand, and every counter is within 2\% of the calculation. At 4 nodes the ring of each DP$\times$CP group also spans nodes, and its gradient and parameter traffic is 1.0 to 2.0 times the pipeline's; the calculation stays within 2\% and below every counter. NVLink counters (bytes sent per GPU over iterations 3 to 12, and 3 to 10 for the disaggregated trainer and producer) lie above the calculation on the colocated run (4.4\%, with the 4 pipeline stages within 3\% of each other; the counter snapshots lie within 1 iteration of the stated boundaries), the disaggregated trainer, and the producer (Table~\ref{tab:estimates}; Table~\ref{tab:accuracy} gives the deviations). The producer's tensor term, 344\,GB, is within 1.4\% of the volume of its small all-gather and reduce-scatter pairs (339\,GB); the other 86\% of the calculated bytes are the expert term, the dispatch all-gather and the fp32 combine reduce-scatter over the expert tensor-parallel group, which move the top-8-replicated tokens of every MoE layer, and the combine reduce-scatter alone carries 57\% of the calculated collective bytes. The NVLink counter reproduced a 2-GPU all-reduce, a 4-GPU all-to-all, and a send/receive of known size within 0.4\% on 1 node; the InfiniBand counters were not calibrated this way.

\begin{table}[t]
\centering
\footnotesize
\setlength{\tabcolsep}{3pt}
\begin{tabular}{L{2.0cm}rrrrr}
\toprule
 & \multicolumn{2}{c}{Sent} & \multicolumn{2}{c}{Received} & Sent: \\
Node & calc. & meas. & calc. & meas. & pipe.+ring \\
\midrule
\multicolumn{6}{l}{3 nodes (dp2)} \\
Stage 0 & 34.4 & 34.8 & 34.4 & 34.8 & 34.4+0 \\
Last stage & 34.4 & 34.9 & 68.7 & 69.6 & 34.4+0 \\
Producer & 34.4 & 34.8 & 0 & 0.1 & \\
\multicolumn{6}{l}{4 nodes (dp3)} \\
Node 0 & 44.9 & 45.5 & 44.9 & 45.4 & 22.5+22.4 \\
Node 1 & 67.3 & 68.0 & 78.6 & 79.4 & 22.5+44.8 \\
Node 2 & 44.9 & 45.1 & 67.5 & 67.9 & 22.5+22.4 \\
Producer & 33.8 & 34.2 & 0 & 0.1 & \\
\multicolumn{6}{l}{4 nodes colocated (pp4, dp1)} \\
Stage 0 & 51.5 & 52.1 & 51.5 & 52.2 & 51.5+0 \\
Stage 1 & 103.1 & 104.3 & 103.1 & 104.2 & 103.1+0 \\
Stage 2 & 103.1 & 104.2 & 103.1 & 104.2 & 103.1+0 \\
Last stage & 51.5 & 52.1 & 51.5 & 52.1 & 51.5+0 \\
\bottomrule
\end{tabular}
\caption{Inter-node traffic per node and iteration (GB): calculated and measured, and the calculated split of the sent bytes into pipeline activation transfers and DP$\times$CP rings. Received bytes at the trainer's last-stage nodes include the transfer. Colocated: 1 pipeline stage per node, iterations 3 to 12.}
\label{tab:nodes}
\end{table}

\section{Method Details}

\subsection{Streaming Pipeline}
\label{app:pipeline}
\paragraph{Receiver groups and replication.}
A receiver group is the $\mathrm{tp}\times\mathrm{cp}$ last-stage ranks of one DP replica, so a trainer has $\mathrm{dp}$ receiver groups. Rows are divided over the cp ranks in Megatron's load-balanced layout for causal attention \citep{grattafiori2024llama3}: cp rank $q$ owns pieces $q$ and $2\,\mathrm{cp}-1-q$ of $2\,\mathrm{cp}$ equal pieces of the sequence (the layout that ring-attention implementations call zigzag), and slots are sent in bf16. The producer serves receiver groups in 2 layouts. In the \emph{dedicated} layout (replicas $\ge$ groups, a multiple of dp) each replica serves exactly 1 group and each group is served by $R/\mathrm{dp}$ replicas; in the \emph{shared} layout (replicas $<$ groups) every replica joins every group and sends 1 message per slot per destination rank. Slot $\eta$ goes to group $\eta\bmod n_{\mathrm{groups}}$ from replica $\eta\bmod R$ at microbatch size 1. The Qwen3.5 3-node run is dedicated and its 4-node run shared.

\paragraph{Communication protocol.}
Each receiver group has its own NCCL communicator containing the group's last-stage ranks and the producer ranks mapped to it, so each communicator contains only the ranks of 1 receiver group and its producer ranks. A slot is an int64 header carrying the slot id followed by the bf16 payload $\in\mathbb{R}^{r\times d_t}$, already sliced to the cp rank's share of the rows; per-pair FIFO order and the header check make ordering structural, and the trainer raises on any slot-id mismatch. The producer runs at most $\delta$ iterations ahead of the trainer's consumption counter, receive buffers are posted at request creation, and the prefetch ring holds up to 4 slots per last-stage rank by default. Only a 128-byte communicator id per receiver group crosses the rendezvous store, and every hidden-state byte flows over InfiniBand P2P. The lm\_head weights are written once to a file in the stream directory, and $V$ must be divisible by $\mathrm{tp}$. The trainer's step for 1 microbatch is Algorithm~\ref{alg:step}, and the producer loop is Algorithm~\ref{alg:producer}.

\paragraph{Resume.}
The producer takes a start slot so that it neither recomputes nor re-sends the prefix the trainer has consumed, and logs the start slot; unit tests check the slot arithmetic of a resume. The window width $C$ does not change the loss in exact arithmetic; it sets the size of the per-window buffers, which scale with $C\,\Vloc$.

\paragraph{Pace-limiting side.}
\label{app:pacing}
Each producer replica logs its cumulative token count; the mean rate between the 5th and the last report is the \emph{consumed} rate. Because a windowed pipeline equalizes the mean rates of producer and trainer, the consumed rate agrees with $N_{\mathrm{tok}}/(R\cdot\text{iteration time})$ within 4\% in all runs and does not show which side limits. We therefore also measure the forward-only rate $\rho_{\mathrm{rep}}$ of one replica and compare the producer-only time $N_{\mathrm{tok}}/(R\rho_{\mathrm{rep}})$ with the measured iteration time (Table~\ref{tab:rate}). For the Qwen3.5 runs we measured $\rho_{\mathrm{rep}}$ with the producer writing to a sink on idle nodes with a warm page cache (the steady rate of slots after the first); for the Qwen3-235B runs we took the steady forward time per slot that each replica logs, and for the other pairs the mean forward time of the first slots of the run. The duty, the ratio of the 2 times, is 1 when the producers limit and below 1 when the trainer does, because the transfer of the Qwen3.5 pair takes 0.09\,s per iteration at the link limits (Table~\ref{tab:estimates}). The producers limit the Qwen3-235B and Nemotron runs and the Llama (8+8) training run; for Nemotron and Llama $\rho_{\mathrm{rep}}$ is the first-slot forward rate, so their duty of 1 is an upper bound, and the Llama training run's $\rho_{\mathrm{rep}}$ was read on a shared network (the pace check's 6.1 gives it a duty of 0.92). At constant $\rho_{\mathrm{rep}}$ the iteration time falls with the replica count $R$. The trainer limits every run with a duty below 1, among them the Qwen3.5 (2+1) run, the first (3+1) run (duty 0.90), the Qwen3-32B run, the Llama (4+2) run, and the Llama (8+8) pace check, whose replicas consumed less than their forward-only rate; the (3+1) rerun has duty 0.97 and the seed-43 run with CR $-$ duty 1, so the (3+1) split is balanced within 10\%. The forward-only rate of the TP2$\times$EP2 fp8 replica of the Qwen3.5 producers is the same on all 8 nodes measured (32.6 to 32.9k tokens/s). The seed-43 run with CR all is limited by the trainer, and with CR $-$ by the producers (Table~\ref{tab:rate}).

\begin{table*}[t]
\centering
\footnotesize
\setlength{\tabcolsep}{4pt}
\begin{tabular}{L{4.0cm}rrrrrrlc}
\toprule
Pair (T+P nodes) & $R$ & Consumed & $\rho_{\mathrm{rep}}$ & $N_{\mathrm{tok}}/(R\rho_{\mathrm{rep}})$ & Meas. & Duty & Limit & CR \\
\midrule
Qwen3.5 (1+1), 15 it. & 4 & 3.7 & 28.1 & 37 & 283.7 & 0.13 & trainer & $-$ \\
Qwen3.5 (2+1) & 4 & 13.8 & 28.1 & 37 & 74.3 & 0.50 & trainer & $-$ \\
Qwen3.5 (3+1) & 2 & 29.5 & 32.8 & 63 & 69.9 & 0.90 & trainer & $-$ \\
Qwen3.5 (3+1), rerun & 2 & 31.4 & 32.8 & 63 & 65.0 & 0.97 & trainer & $-$ \\
Qwen3.5 (3+1), seed 43 & 2 & 20.1 & 32.8 & 63 & 99.4 & 0.63 & trainer & all \\
Qwen3.5 (3+1), seed 43, 30 it. & 2 & 33.2 & 32.8 & 63 & 61.9 & 1.0 & producer & $-$ \\
Qwen3.5 (3+1), seed 43, 30 it. & 2 & $-$ & 32.8 & 63 & 94.4 & 0.67 & trainer & all \\
Qwen3.5 (2+1), bf16 teacher & 2 & 13.6 & 34.7 & 60 & 153.0 & 0.40 & trainer & all \\
Qwen3-235B (2+1) & 1 & 10.6 & 10.7 & 393 & 392.9 & 1.0 & producer & all \\
Qwen3-235B (4+4) & 4 & 9.3 & 9.6 & 109 & 108.5 & 1.0 & producer & all \\
Nemotron (2+2) & 1 & 6.7 & 6.6 & 631 & 630.0 & 1.0 & producer & $-$ \\
Nemotron (4+4) & 2 & 6.6 & 6.6 & 316 & 315.9 & 1.0 & producer & $-$ \\
Llama (8+8), training run & 4 & 5.6 & 5.6 & 188 & 186.1 & 1.0 & producer & $-$ \\
Llama (8+8), pace check & 4 & 4.5 & 6.1 & 173 & 234.8 & 0.74 & trainer & all \\
Llama (4+2) & 1 & 5.1 & 6.7 & 627 & 818 & 0.77 & trainer & all \\
Qwen3-32B (1+1) & 4 & 9.6 & 10.4 & 13 & 13.5 & 0.93 & trainer & $-$ \\
\bottomrule
\end{tabular}
\caption{Which side limits the iteration time (T+P: trainer and producer nodes; $R$: producer replicas). Consumed: mean per-replica rate between the 5th and the last report of the producer logs; $\rho_{\mathrm{rep}}$: forward-only rate of one replica, both in $10^3$ tokens/s; $N_{\mathrm{tok}}/(R\rho_{\mathrm{rep}})$: the iteration time if the trainer were instantaneous; Meas.: measured median iteration time, both in seconds; Duty: $N_{\mathrm{tok}}/(R\rho_{\mathrm{rep}})$ over Meas., capped at 1; CR: the parts of the cache release that were on (Table~\ref{tab:cachesplit}). The producers limit a run when the duty is 1 (the trainer is then at least as fast) and the trainer limits it when the duty is below 1. Rerun: the 3+1 run of Table~\ref{tab:main} repeated for 12 iterations (65.0\,s against 69.9\,s, 7\%). Each row is 1 run.}
\label{tab:rate}
\end{table*}

\begin{algorithm}[t]
\footnotesize
\begin{tabular}{@{}r@{\hspace{5pt}}p{6.7cm}@{}}
\multicolumn{2}{@{}p{7.2cm}@{}}{\textbf{Input:} teacher replica, corpus order, look-ahead $\delta$ iterations, start slot $\eta_0$} \\[2pt]
1 & load the teacher shard (routed experts in fp8 for the Qwen3.5 and Nemotron teachers, bf16 otherwise); write the lm\_head shards once \\
2 & \textbf{for} $\eta=\eta_0,\eta_0{+}1,\dots$ \textbf{do} \\
3 & \ind{\textbf{if} $\eta\bmod R\ne$ my replica \textbf{then continue}} \\
4 & \ind{$\bm{H}_t\leftarrow$ forward of microbatch $\eta$ (final hidden states)} \\
5 & \ind{wait until $\eta<(\text{iteration}+\delta)\cdot\text{slots per iteration}$} \\
6 & \ind{slice rows per last-stage rank; send header and payload to the receiver group} \\
7 & \textbf{end for} \\
\end{tabular}
\caption{Producer loop of one replica.}
\label{alg:producer}
\end{algorithm}

\begin{algorithm}[t]
\footnotesize
\begin{tabular}{@{}r@{\hspace{5pt}}p{6.7cm}@{}}
\multicolumn{2}{@{}p{7.2cm}@{}}{\textbf{Input:} student hidden states $\bm{H}_s^{(q)}$, prefetch ring, shards $\bm{W}_t^{(j)},\bm{W}_s^{(j)}$, labels $y$, window size $C$} \\[2pt]
1 & $\bm{H}_t^{(q)}\leftarrow$ pop the next slot from the ring; post receives for up to 4 following slots of the same iteration \\
2 & \textbf{for} $k=0,\dots,K{-}1$ \textbf{do} \\
3 & \ind{$\bm{Z}_{x,k}^{(q,j)}\leftarrow\bm{H}_{x,k}^{(q)}\bm{W}_x^{(j)}$ for $x\in\{t,s\}$} \\
4 & \ind{kernel: for each row, one pass over the entries $u=0,\dots,\Vloc{-}1$ in tiles gives $m_x^{(j)},a_x^{(j)}$ ($x\in\{t,s\}$) and $c^{(j)}$} \\
5 & \ind{MAX all-reduce of $m_x^{(j)}$; rescale; SUM all-reduce of $a_x^{(j)},c^{(j)}$ over the tp ranks; the label logit of $\mathrm{CE}_i$ comes from a separate vocabulary-parallel pass} \\
6 & \ind{$\mathcal{L}\leftarrow\mathcal{L}+\sum_i[(1{-}\lambda)\mathrm{CE}_i+\lambda\,\mathrm{KD}_i]$ over the rows of window $k$; free $\bm{Z}_{x,k}^{(q,j)}$} \\
7 & \textbf{end for} \\
8 & backward: recompute each window; accumulate $\partial\mathcal{L}/\partial\bm{H}_s^{(q)}$ and $\partial\mathcal{L}/\partial\bm{W}_s^{(j)}$ \\
\end{tabular}
\caption{Streamed KD step for one microbatch at cell $(q,j)$ (\S\ref{sec:loss}).}
\label{alg:step}
\end{algorithm}

\subsection{Teacher Precision}
\label{app:fp8}
\paragraph{In the producer.}
The producer runs the teacher forward only, without gradients, optimizer state, or backward activations, so its weights and a fitted remainder $\varphi_p$ (forward-pass workspace, CUDA context, allocator) fill its memory. It holds the routed experts of the Qwen3.5 and Nemotron teachers in fp8 and every other module (attention, mamba mixers, latent projections, norms, embeddings) in bf16; the shared experts are bf16 on GPT-path teachers and fp8 on the hybrid Nemotron teacher. The routed experts are most of an MoE teacher's parameters (546.7B of 560.4B for the Nemotron teacher), so the shard is 51\% of its bf16 size. For the 560B teacher the model predicts that bf16 everywhere does not fit a 16-GPU replica: about 65\gib{} per GPU for weights alone, plus the 31 to 32\gib{} remainder measured with fp8 (no bf16 run of this teacher exists). The experts are stored as fp8 primary weights (e4m3 with tensor-wise scales), their grouped GEMMs run under Transformer Engine's current-scaling fp8 autocast, and activations, including the hidden states sent to the trainer, stay bf16. The Qwen3-235B producers and those of the dense Llama and Qwen3-32B teachers are bf16 throughout, which is why the shard term of \eqref{eq:producer} counts 1 byte per routed-expert parameter of an fp8 teacher and 2 otherwise. On the trainer, the teacher-logit matrix product can also run in fp8 (e4m3 inputs with per-tensor scales), which leaves the hidden states sent to the trainer in bf16; it was on in the Qwen3.5 runs with an fp8 teacher, the Nemotron and Qwen3-32B runs, and the 2 Qwen3-235B trainers of Table~\ref{tab:trainerruns} with $\kappa_{\mathrm{str}}$ of 1.6, and off in the Qwen3.5 run with a bf16 teacher, the other Qwen3-235B runs, and the Llama runs.

The fp8 build changes the stock provider in 4 ways: (1) the quantization recipe is set on the teacher's model provider and matched to the expert modules by name; (2) the mixed-precision wrapper is skipped, because it casts the fp8 parameters back to bf16; (3) the expert modules of GPT-path models are given module names, because the recipe matches modules by name and an unnamed build constructs every expert in bf16; (4) the checkpoint is cast into the fp8 parameters on load, with the scale tensors of a native-fp8 release applied first, and the CPU copies of the initial values that the fp8 parameters keep are released after the load. No run isolates a single change.

\paragraph{In the colocated stack.}
The colocated runs use none of this engineering. Megatron-Bridge's distillation provider builds their teacher in bf16, and loading a native-fp8 release through it drops the scale tensors in the families we ran. Giving the colocated stack an fp8 teacher means doing the same on the teacher's provider: setting the recipe and skipping the wrapper, naming the experts, and casting the pre-converted bf16 checkpoint into fp8 parameters on load. We implemented it for the routed experts: the build engages and the peak allocated memory falls by 5.8\gib{}, but the colocated run's KD loss leaves the bf16 run's at iteration 3 (Table~\ref{tab:fp8colo}). The recipe covers routed experts only, so the dense Llama and Qwen3-32B teachers stay in bf16.

\begin{table}[t]
\centering
\footnotesize
\setlength{\tabcolsep}{3pt}
\begin{tabular}{L{1.0cm}rrL{2.1cm}L{1.9cm}}
\toprule
Experts & s/it & Torch & KD loss, it.\ 1, 2, 3 & Grad norm, it.\ 1, 2, 3 \\
\midrule
bf16 & 100.3 & 61.7 & 0.215, 0.207, 0.207 & 0.70, 0.70, 1.04 \\
fp8 & 119.1 & 55.9 & 0.215, 0.207, 0.795 & 0.70, 0.70, 14.5 \\
\bottomrule
\end{tabular}
\caption{Colocated KD-only runs of the Qwen3.5 pair on 4 nodes (tp2/pp4/cp4/dp1, 32768 tokens, 12 iterations) with the teacher's routed experts in bf16 or fp8 (Appendix~\ref{app:fp8}). s/it: median over iterations 3 to 12; Torch: peak allocated in GiB after iteration 1. From iteration 4 to 12 the KD loss is 0.505 to 0.665 with fp8 against 0.249 to 0.290 with bf16, so the 2 s/it values do not compare precisions.}
\label{tab:fp8colo}
\end{table}

\subsection{Proof of Exactness}
\label{app:loss-equiv}
We prove Theorem~\ref{thm:exact} for $C$ dividing $r$, $2\,\mathrm{cp}$ dividing $L$, and $\mathrm{tp}$ dividing $V$. A row has logits $\bm{z}_t,\bm{z}_s\in\mathbb{R}^V$ (\S\ref{sec:loss}). For a set $\mathcal{S}$ of its columns define the statistics
\begin{equation}
\label{eq:stat}
\begin{gathered}
m_{x,\mathcal{S}}=\max_{v\in\mathcal{S}}z_{x,v},\quad a_{x,\mathcal{S}}=\sum_{v\in\mathcal{S}}e^{z_{x,v}-m_{x,\mathcal{S}}},\\
c_{\mathcal{S}}=\sum_{v\in\mathcal{S}}e^{z_{t,v}-m_{t,\mathcal{S}}}\,z_{s,v},
\end{gathered}
\end{equation}
with $(-\infty,0,0)$ for the empty set. Then \eqref{eq:stats} is the case of all $V$ columns, $\log\sum_{v\in\mathcal{S}}e^{z_{x,v}}=m_{x,\mathcal{S}}+\log a_{x,\mathcal{S}}$, and $1\le a_{x,\mathcal{S}}\le|\mathcal{S}|$ for nonempty $\mathcal{S}$.

\begin{lemma}[Merge]
\label{lem:merge}
Let $\mathcal{P}_0,\dots,\mathcal{P}_{n-1}$ be $n$ disjoint sets of columns with union $\mathcal{S}$, and let $l$ index the parts. Then
\begin{equation}
\label{eq:merge}
\begin{aligned}
m_{x,\mathcal{S}}&=\max_l m_{x,\mathcal{P}_l},\\
a_{x,\mathcal{S}}&=\sum_{l}a_{x,\mathcal{P}_l}\,e^{m_{x,\mathcal{P}_l}-m_{x,\mathcal{S}}},\\
c_{\mathcal{S}}&=\sum_{l}c_{\mathcal{P}_l}\,e^{m_{t,\mathcal{P}_l}-m_{t,\mathcal{S}}}.
\end{aligned}
\end{equation}
\end{lemma}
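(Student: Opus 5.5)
The plan is to verify each identity of \eqref{eq:merge} directly from the definitions \eqref{eq:stat}. Disjointness of the parts gives $\sum_{v\in\mathcal{S}}=\sum_l\sum_{v\in\mathcal{P}_l}$ and $\max_{v\in\mathcal{S}}=\max_l\max_{v\in\mathcal{P}_l}$. The empty-set convention $(-\infty,0,0)$ is used so that empty parts contribute nothing.

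\emph{The maximum.} This identity is immediate. A maximum over a disjoint union is the maximum of the part maxima. An empty part has $m_{x,\mathcal{P}_l}=-\infty$ and does not change the maximum. If $\mathcal{S}$ itself is empty, both sides are $-\infty$.

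\emph{The sum $a_{x,\mathcal{S}}$.} Split $\sum_{v\in\mathcal{S}}e^{z_{x,v}-m_{x,\mathcal{S}}}$ over the parts. Inside part $l$, write the exponent as $(z_{x,v}-m_{x,\mathcal{P}_l})+(m_{x,\mathcal{P}_l}-m_{x,\mathcal{S}})$. Then factor the second, $v$-independent term out of the inner sum. What remains inside is exactly $a_{x,\mathcal{P}_l}$, which gives the stated formula.

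\emph{The sum $c_{\mathcal{S}}$.} The same rescaling applies. Only the teacher exponent carries a maximum, and the factor $z_{s,v}$ is untouched, so the rescaling factor is $e^{m_{t,\mathcal{P}_l}-m_{t,\mathcal{S}}}$.

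The only point needing care is the degenerate case of an empty part. There $e^{m_{x,\mathcal{P}_l}-m_{x,\mathcal{S}}}$ is $e^{-\infty}=0$ when $\mathcal{S}$ is nonempty, so the term is $0\cdot 0=0$, consistent with the part contributing no columns. If $\mathcal{S}$ is empty, every part is empty and the exponent is $-\infty-(-\infty)$, which is undefined. I would handle this by interpreting each empty-part term as $0$, since $a_{x,\mathcal{P}_l}=c_{\mathcal{P}_l}=0$ regardless of the factor. Equivalently, I would restrict the sums to the nonempty parts, or state the lemma for nonempty $\mathcal{S}$, which is the only case used.

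Everything else is a single line of algebra, $e^{\alpha-\gamma}=e^{\alpha-\beta}e^{\beta-\gamma}$. The factors are finite and at most $1$ because $m_{x,\mathcal{P}_l}\le m_{x,\mathcal{S}}$. This bound also matches the paper's remark that rescaling uses a factor of at most $1$.
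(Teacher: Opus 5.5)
Your proof is correct and follows the paper's own argument. The maximum of a union is the largest of the part maxima, and splitting each sum by part and inserting $e^{\pm m_{x,\mathcal{P}_l}}$ (with the teacher's maximum for $c_{\mathcal{S}}$) gives the stated formulas; your handling of empty parts and of an empty $\mathcal{S}$ only makes explicit a convention the paper leaves implicit.
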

\begin{proof}
The maximum of a union is the largest of the maxima. Splitting the sum by part and inserting $e^{\pm m_{x,\mathcal{P}_l}}$ gives
\begin{align*}
a_{x,\mathcal{S}}&=\sum_l\sum_{v\in\mathcal{P}_l}e^{z_{x,v}-m_{x,\mathcal{S}}}\\
&=\sum_l e^{m_{x,\mathcal{P}_l}-m_{x,\mathcal{S}}}\sum_{v\in\mathcal{P}_l}e^{z_{x,v}-m_{x,\mathcal{P}_l}},
\end{align*}
and the inner sum is $a_{x,\mathcal{P}_l}$. The same computation with the weight $z_{s,v}$ and the teacher's maximum gives $c_{\mathcal{S}}$.
\end{proof}

\begin{lemma}[Scan]
\label{lem:scan}
Let the columns of $\mathcal{S}$ be read in consecutive tiles $\mathcal{U}_0,\dots,\mathcal{U}_{U-1}$, and let the state after tile $u$ be the merge \eqref{eq:merge} of the state after tile $u{-}1$ with the statistics \eqref{eq:stat} of $\mathcal{U}_u$, starting from $(-\infty,0,0)$. Then the state after tile $u$ equals the statistics of $\mathcal{U}_0\cup\dots\cup\mathcal{U}_u$. For tiles of 1 entry, write $z_{x,u}$ for $z_{x,u}^{(j)}$, $u=0,\dots,\Vloc{-}1$, and let $m_{x,u}$, $a_{x,u}$ and $c_u$ be the state after entries $0,\dots,u$, starting from $m_{x,-1}=-\infty$, $a_{x,-1}=0$ and $c_{-1}=0$; this update is the recurrence
\begin{equation}
\label{eq:running}
\begin{aligned}
m_{x,u}&=\max(m_{x,u-1},z_{x,u}),\\
a_{x,u}&=a_{x,u-1}e^{m_{x,u-1}-m_{x,u}}+e^{z_{x,u}-m_{x,u}},\\
c_u&=c_{u-1}e^{m_{t,u-1}-m_{t,u}}\\
&\quad+e^{z_{t,u}-m_{t,u}}\,z_{s,u}.
\end{aligned}
\end{equation}
Whenever the maximum rises, the exponential factor on the previous sum rescales it to the new maximum, and this factor is at most 1.
\end{lemma}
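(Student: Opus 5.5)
The statement to prove is Lemma~\ref{lem:scan}. I will argue by induction on the tile index $u$, with Lemma~\ref{lem:merge} as the only tool. For the base case, the initial state $(-\infty,0,0)$ is by definition the statistics of the empty set. The merge of the empty-set statistics with those of $\mathcal{U}_0$ returns the statistics of $\mathcal{U}_0$. This is Lemma~\ref{lem:merge} with parts $\emptyset$ and $\mathcal{U}_0$, under the convention $e^{-\infty}=0$ and $0\cdot e^{-\infty}=0$.

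For the inductive step, assume the state after tile $u{-}1$ equals the statistics of $\mathcal{S}_{u-1}=\mathcal{U}_0\cup\dots\cup\mathcal{U}_{u-1}$. The tiles are consecutive and therefore disjoint, so $\mathcal{S}_{u-1}$ and $\mathcal{U}_u$ are disjoint with union $\mathcal{S}_u$. Lemma~\ref{lem:merge} with $n=2$ then says that the merge \eqref{eq:merge} of their statistics is exactly the statistics of $\mathcal{S}_u$. This closes the induction. The argument does not depend on how the columns are grouped into tiles, which is the ``every grouping'' clause of Theorem~\ref{thm:exact}.

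Next I specialize to tiles of one entry $\{u\}$. The statistics of a single entry are $m=z_{x,u}$, $a=e^{0}=1$, and $c=z_{s,u}$, the last with the teacher's maximum $z_{t,u}$. Substituting these into \eqref{eq:merge} gives the following.
\begin{itemize}
\item $m_{x,u}=\max(m_{x,u-1},z_{x,u})$.
\item $a_{x,u}=a_{x,u-1}e^{m_{x,u-1}-m_{x,u}}+1\cdot e^{z_{x,u}-m_{x,u}}$.
\item $c_u=c_{u-1}e^{m_{t,u-1}-m_{t,u}}+z_{s,u}e^{z_{t,u}-m_{t,u}}$.
\end{itemize}
These are the three lines of \eqref{eq:running}.

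For the final claim, $m_{x,u}\ge m_{x,u-1}$ by the max, so the exponent $m_{x,u-1}-m_{x,u}$ is at most $0$. The rescaling factor is therefore at most $1$, and it is strictly below $1$ exactly when the maximum rises.

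The only delicate point is the $-\infty$ convention at $u=0$. There the expression $m_{x,-1}-m_{x,0}$ is $-\infty$, and I must check that $0\cdot e^{-\infty}$ is read as $0$, so that the first update yields $(z_{x,0},1,z_{s,0})$. I will verify this case directly rather than through the lemma, to avoid any undefined arithmetic.
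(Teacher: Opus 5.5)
Your proof is correct and takes essentially the same approach as the paper: induction on the tile index, with Lemma~\ref{lem:merge} at $n=2$ as the inductive step, then substituting the single-column statistics $(z_{x,u},1,z_{s,u})$ into \eqref{eq:merge} to obtain \eqref{eq:running}. You also spell out two details the paper leaves implicit: the $e^{-\infty}=0$ convention in the first update, and the monotonicity of the running maximum, which bounds the rescaling factor by 1.
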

\begin{proof}
We use induction on $u$. The empty prefix has statistics $(-\infty,0,0)$. If the state after tile $u{-}1$ is the statistics of $\mathcal{U}_0\cup\dots\cup\mathcal{U}_{u-1}$, Lemma~\ref{lem:merge} with $n=2$ shows that merging it with $\mathcal{U}_u$ gives the statistics of $\mathcal{U}_0\cup\dots\cup\mathcal{U}_u$. A column $v$ has statistics $(z_{x,v},1,z_{s,v})$, and inserting them into \eqref{eq:merge} with $n=2$ gives \eqref{eq:running}.
\end{proof}

\begin{proof}[Proof of Theorem~\ref{thm:exact}]
\emph{Statistics.} Tp rank $j$ scans the columns of its shard in any tiling, so by Lemma~\ref{lem:scan} its final state $(m_x^{(j)},a_x^{(j)},c^{(j)})$ is the statistics of the shard. Lemma~\ref{lem:merge} applied to the $\mathrm{tp}$ shards yields the combination
\begin{equation}
\label{eq:combine}
\begin{aligned}
m_x&=\max_j m_x^{(j)},\\
a_x&=\sum_j a_x^{(j)}e^{m_x^{(j)}-m_x},\quad c=\sum_j c^{(j)}e^{m_t^{(j)}-m_t},
\end{aligned}
\end{equation}
so the all-reduced $(m_x,a_x,c)$ are the statistics of all $V$ columns, that is, \eqref{eq:stats}.

\emph{Loss values.} With $p_{t,v}=e^{z_{t,v}-m_t}/a_t$ we have $\sum_v p_{t,v}z_{s,v}=c/a_t$, and $\mathrm{lse}_s=m_s+\log a_s$, so
\begin{align*}
-\sum_v p_{t,v}\log p_{s,v}&=\mathrm{lse}_s-\sum_v p_{t,v}z_{s,v}\\
&=\mathrm{lse}_s-\frac{c}{a_t},\\
-\log p_{s,y_i}&=\mathrm{lse}_s-z_{s,y_i}.
\end{align*}
These are \eqref{eq:rowloss} and the definitions in \eqref{eq:objective}. Exactly 1 tp rank holds $z_{s,y_i}$ and the others contribute 0. (The implementation evaluates the LM term in a separate vocabulary-parallel log-softmax pass over the same window, which yields the same normalizer.)

\emph{Gradients.} For a row, $\partial\mathrm{KD}_i/\partial z_{s,v}=p_{s,v}-p_{t,v}$ and $\partial\mathrm{CE}_i/\partial z_{s,v}=p_{s,v}-[v{=}y_i]$, hence
\[
\frac{\partial\mathcal{L}}{\partial\bm{z}_s}=(1-\lambda)\,(p_s-\bm{e}_{y_i})+\lambda\,(p_s-p_t).
\]
This depends on the row only through the statistics $(m_s,a_s)$ and $(m_t,a_t)$, which the backward pass regenerates (a MAX and a SUM all-reduce per distribution and window), so it obtains $p_s$ and $p_t$ from recomputed logits and the same values as a full-row evaluation. For window $k$, tp rank $j$ holds the block $\bm{G}^{(j)}\in\mathbb{R}^{C\times\Vloc}$ of this gradient, and the chain rule through \eqref{eq:window} gives
\begin{align*}
\frac{\partial\mathcal{L}}{\partial\bm{W}_s^{(j)}}&=\sum_q\sum_k\bm{H}_{s,k}^{(q)\top}\bm{G}_k^{(q,j)},\\
\frac{\partial\mathcal{L}}{\partial\bm{H}_{s,k}^{(q)}}&=\sum_j\bm{G}^{(j)}\bm{W}_s^{(j)\top},
\end{align*}
the latter being an all-reduce over the tp ranks.

\emph{Windows and cp ranks.} $\mathcal{L}$ is a sum of per-row terms \eqref{eq:objective}. Partitioning the rows first over the cp ranks and then into windows only regroups a finite sum, so the loss as a function of the hidden states and head weights, and its gradients, depend on neither $C$ nor the sharding; the student's forward pass, the loss mask, and the normalization lie outside the identity.
\end{proof}

\begin{remark}
Theorem~\ref{thm:exact} is an identity in exact arithmetic. It does not cover the floating-point summation order, the trainer's fp8 teacher-logit product, the producer's fp8 expert weights, or the bf16 hidden-state transfer. ModelOpt and KDFlow minimize the full-vocabulary forward KL, which equals $\mathrm{KD}_i$ up to the teacher's entropy, under different loss weights, and NeMo-RL takes it over 8192 columns (\S\ref{sec:setup}), so the identity equates per-row loss functions and not recipes.
\end{remark}

\section{Experimental Setup}
\label{app:setup}

\paragraph{Search space.}
Both layouts are searched as the optimal setting of \S\ref{sec:estimation} defines, over the same axes: tp over the sizes that the heads, key-value heads, hidden size, and Mamba groups admit and that we ran, pp over the divisors of the layer count, cp over 1 to 16 in powers of 2, ep over the divisors of the expert count, and dp the remaining factor, which divides the global batch (128, or 126 where dp is 3 or 7: the 3-, 6-, 24-, and 28-node colocated cells and the 4-node disaggregated run). A colocated cell (a pair on a number of nodes) uses 1 grid for both models, because ModelOpt requires the teacher at the student's tp, pp, and cp and has no interleaved pipeline; the search also gives both models one ep, the largest common divisor of the expert counts, with ep$\cdot$pp dividing the GPUs. tp and pp are common to the 2 models (Qwen3.5 tp 1 to 2, Qwen3-235B 1 to 4, Nemotron 1 to 8, Llama 1 to 32). Across the 12 searched colocated cells of Table~\ref{tab:brackets}, a cell has 12 (Qwen3.5 on 3 nodes) to 305 (Nemotron on 8 nodes) grids. A disaggregated cell has a trainer grid over the student alone, replicas of the teacher spanning $\mathrm{tp}_p\times\mathrm{ep}_p$ GPUs (tp$_p$ over the divisors of its heads and Mamba groups, up to 16, and 8 for Nemotron), and a split of the nodes between the 2 roles. The wire requires the number of replicas $R$ to be a multiple of dp, or below dp at microbatch 1; the trainer and producer inequalities of \S\ref{sec:memory} bound each role, and $R\,\rho_{\mathrm{rep}}\ge N_{\mathrm{tok}}/T_{\mathrm{iter}}$ bounds $R$ below. The Megatron-Bridge runs use microbatch 1, full activation recomputation, and 32768-token sequences unless the text or a table states otherwise.

\paragraph{Sharded data parallelism.}
The search of both layouts excludes fully sharded data parallelism (FSDP). In the Megatron-Bridge release we use, the Megatron-FSDP device mesh reshapes the world as (dp$\cdot$cp, ep, tp) with no pipeline axis, and setup raised an error at pp 2 and 4 on MoE students. Every trainer grid of Table~\ref{tab:runs} has pp 2, and the only colocated grids of the primary pair on 4 nodes that the model predicts under the budget (2 of 31) have pp 4 and pp 2 (Table~\ref{tab:brackets}); these counts come from the memory model without a sharded-state term, and FSDP at pp 1 was not run. The producer holds weights alone, split over $\mathrm{tp}_p\times\mathrm{ep}_p$ ranks. KDFlow trains the student with FSDP2 and NeMo-RL with its DTensor policy, which wraps it with FSDP2, and on the primary pair KDFlow's device reading is at the limit and NeMo-RL's 2.1\gib{} below it (79.1 and 77.0\gib{}, sampled device memory; Table~\ref{tab:stacks}): KDFlow with the idle teacher offloaded and the loss chunked, NeMo-RL at a quarter of the batch, because its full-vocabulary logits buffer, 121\gib{} at the full batch, does not fit. At cp 1 the logits buffers are 144 of the 186\gib{} of the colocated peak on 3 nodes (Figure~\ref{fig:overview}), memory that sharding the states does not reduce (the buffers shrink with cp).

\subsection{Disaggregated Runs}
\label{app:runs}
\paragraph{Training recipe.}
Model sizes in Table~\ref{tab:pairs} are config-derived (the Nemotron release name reads 550B). Each GPU reports 79.65\gib{} to the driver, of which 79.11\gib{} is allocatable by CUDA (the capacity PyTorch reports, the budget $M$); in-run device readings include the remainder, so a full card reads 79.6\gib{} and the readings of 79.6 are at the limit. All runs train on FineWeb-Edu at sequence length 32{,}768 (4096 for the Qwen3-32B pair unless a table states otherwise) with full activation recomputation, disaggregated teachers with fp8 routed-expert weights and bf16 elsewhere (the Qwen3-235B, Llama, and Qwen3-32B teachers are bf16 throughout), and bf16 students with fp32 master weights, at global batch 128 (4.19M tokens per iteration, 0.52M for the Qwen3-32B pair; 126 for the 4-node Qwen3.5 run so that $\mathrm{dp}{=}3$ divides it). The recipe is AdamW at learning rate $4\times10^{-4}$ ($5\times10^{-5}$ for the Llama pair) with cosine decay, 32 warmup iterations, and, in the \sqlk{} runs, KD weight $\lambda$ decaying from 1.0 to 0.75 ($\lambda{=}1$ in the KD-only runs). On the Qwen3.5, Qwen3-235B, Nemotron, and Qwen3-32B pairs the colocated runs decay the learning rate to $10^{-5}$ where the disaggregated runs decay to $3\times10^{-5}$ (both layouts decay to $3.75\times10^{-6}$ on Llama), and in the \ballk{} runs ModelOpt's rescaling makes its total loss 2$\times$ its LM loss at every iteration. The colocated runs hold the teacher in bf16; one disaggregated run matches the learning-rate floor and the teacher precision and multiplies its total loss by 2 in the same way (below). MMLU is scored 4-shot by generative exact match with FlexEval over 14{,}042 items. The window width is $C{=}1024$ to $4096$, and \system{} is built on Megatron-Bridge.

\paragraph{Runs.}
Table~\ref{tab:runs} lists the disaggregated runs of the main configurations with their node allocation and grid, and Figure~\ref{fig:curves} plots their KD loss; Appendices~\ref{app:primary} and~\ref{app:otherpairs} give the primary-pair and other-pair measurements. For each run of Table~\ref{tab:runs} the trainer's token rate is within 4\% of the producers' mean rate in the steady state (Table~\ref{tab:rate}). Median iteration times are over iterations 3 to the last, except 8 to the last for the Qwen3.5 (2+1) run and 6 to the last for the seed-43 colocated runs; the captions of Tables~\ref{tab:cachesplit}, \ref{tab:speedup}, \ref{tab:otherpairs}, and~\ref{tab:fp8colo} state the windows of their runs.

\begin{table*}[t]
\centering
\footnotesize
\setlength{\tabcolsep}{4pt}
\begin{tabular}{L{2.2cm}L{2.6cm}L{2.3cm}rrrccc}
\toprule
Run (T+P nodes) & Trainer grid & Producer replicas & Iter. & s/it & $10^3$ tok/s & KD loss & Peak & CR \\
\midrule
Qwen3.5 (1+1) & tp2/pp2/cp2/dp1 & 4$\times$TP2 & 15 & 283.7 & 14.8 & 2.228 $\to$ 2.488 & 57.5; 39.4 & $-$ \\
Qwen3.5 (2+1) & tp2/pp2/cp2/dp2 & 4$\times$TP2 & 200 & 74.3 & 56.4 & 2.228 $\to$ 2.227 & 39.2; 39.4 & $-$ \\
Qwen3.5 (3+1) & tp2/pp2/cp2/dp3 & 2$\times$TP2$\times$EP2 & 200 & 69.9 & 59.1 & 2.226 $\to$ 2.225 & 33.0; 32.0 & $-$ \\
Qwen3-235B (2+1) & tp2/pp2/ep4 & 1$\times$TP8 (bf16) & 200 & 392.9 & 10.7 & 2.020 $\to$ 1.962 & 64.3; 72.9 & all \\
Qwen3-235B (4+4) & tp2/pp2/ep4, dp8 & 4$\times$TP8 (bf16) & 200 & 108.5 & 38.7 & 2.020 $\to$ 1.962 & 50.7; 73.3 & all \\
Qwen3-235B (4+4), KD-only & tp2/pp2/ep4, dp8 & 4$\times$TP8 & 60 & 105.0 & 40.0 & 2.066 $\to$ 2.029 & 51.0; 72.9 & all \\
Nemotron (2+2) & tp2/pp2/ep4 & 1$\times$TP8$\times$EP2 & 200 & 630.0 & 6.7 & 2.19 $\to$ 1.87 & 67.7; 69.6 & $-$ \\
Nemotron (4+4) & tp2/pp2/ep4, dp8 & 2$\times$TP8$\times$EP2 & 200 & 315.9 & 13.3 & 2.19 $\to$ 1.87 & 52.5; 70.7 & $-$ \\
Llama (8+8) & tp8/pp2/cp2/dp2 & 4$\times$TP16 (bf16) & 400 & 186.1 & 22.5 & 1.91 $\to$ 1.81 & 50.3; 65.4 & $-$ \\
Qwen3-32B (1+1) & tp2/pp2/dp2 & 4$\times$TP2 & 400 & 13.5 & 38.8 & 2.139 $\to$ 2.099 & 25.1; 36.4 & $-$ \\
\bottomrule
\end{tabular}
\caption{Disaggregated runs (T+P: trainer and producer nodes): trainer grid, producer replicas (fp8 experts; the Qwen3-235B, Llama, and Qwen3-32B teachers are bf16), iterations completed (Iter.), median iteration time (s/it), throughput ($10^3$ tokens/s), KD loss at the first and last iteration, and peak memory in GiB (trainer torch-allocated on the last pipeline stage; producers' device). CR: the parts of the cache release that were on (Table~\ref{tab:cachesplit}).}
\label{tab:runs}
\end{table*}

\begin{figure*}[t]
\centering
\begin{tikzpicture}
\pgfplotsset{
  compat=1.17,
  cpanel/.style={
    width=5.6cm, height=3.3cm, scale only axis,
    title style={yshift=-2pt},
    grid=major, grid style={gray!25, line width=0.3pt},
    axis line style={line width=0.4pt}, tick style={line width=0.4pt},
    unbounded coords=jump, xmin=0, xmax=400, xtick={0,100,200,300,400},
    xlabel={iteration}, ylabel={KD loss},
  },
  rawl/.style={line width=0.35pt, opacity=0.28, forget plot},
  main/.style={blue!65!black, line width=1.1pt},
  alt/.style={orange!85!red, line width=1.1pt, dashed},
}
\tikzset{lab/.style={font=\footnotesize, inner sep=1pt}}
\begin{groupplot}[group style={group size=2 by 2, horizontal sep=1.6cm, vertical sep=2.1cm}]
\nextgroupplot[cpanel, title={Qwen3.5-35B $\rightarrow$ 9B}, ymin=2.1, ymax=2.5, xmax=200, xtick={0,50,100,150,200},
  legend to name=legendcurves, legend columns=2,
  legend style={draw=none, /tikz/every even column/.append style={column sep=8pt}}]
\addplot[rawl, main] table[x=it, y=kd, col sep=comma] {figs/p1_ours3n.csv};
\addplot[main] table[x=it, y=kd_s, col sep=comma] {figs/p1_ours3n.csv};
\addlegendentry{smaller node set}
\addplot[rawl, alt] table[x=it, y=kd, col sep=comma] {figs/p1_ours4n.csv};
\addplot[alt] table[x=it, y=kd_s, col sep=comma] {figs/p1_ours4n.csv};
\addlegendentry{equal node count}
\node[lab, blue!65!black] at (axis cs:150,2.43) {200 iter};
\node[lab, orange!85!red] at (axis cs:150,2.155) {200 iter};
\nextgroupplot[cpanel, title={Qwen3-235B $\rightarrow$ 30B}, ymin=1.8, ymax=2.3, xmax=200, xtick={0,50,100,150,200}]
\node[lab, blue!65!black] at (axis cs:105,2.14) {200 iter};
\node[lab, orange!85!red] at (axis cs:150,1.91) {200 iter};
\addplot[rawl, main] table[x=it, y=kd, col sep=comma] {figs/p2_ours3n.csv};
\addplot[main] table[x=it, y=kd_s, col sep=comma] {figs/p2_ours3n.csv};
\addplot[rawl, alt] table[x=it, y=kd, col sep=comma] {figs/p2_ours8n.csv};
\addplot[alt] table[x=it, y=kd_s, col sep=comma] {figs/p2_ours8n.csv};
\nextgroupplot[cpanel, title={Nemotron-Ultra $\rightarrow$ Nano}, ymin=1.7, ymax=2.3, xmax=200, xtick={0,50,100,150,200}]
\node[lab, blue!65!black] at (axis cs:120,2.06) {200 iter};
\node[lab, orange!85!red] at (axis cs:150,1.77) {200 iter};
\addplot[rawl, main] table[x=it, y=kd, col sep=comma] {figs/p4_ours4n.csv};
\addplot[main] table[x=it, y=kd_s, col sep=comma] {figs/p4_ours4n.csv};
\addplot[rawl, alt] table[x=it, y=kd, col sep=comma] {figs/p4_ours8n.csv};
\addplot[alt] table[x=it, y=kd_s, col sep=comma] {figs/p4_ours8n.csv};
\nextgroupplot[cpanel, title={Llama-3.1-405B $\rightarrow$ 70B}, ymin=1.7, ymax=2.0]
\node[lab, blue!65!black] at (axis cs:300,1.955) {400 iter};
\addplot[rawl, main] table[x=it, y=kd, col sep=comma] {figs/p5_ours.csv};
\addplot[main] table[x=it, y=kd_s, col sep=comma] {figs/p5_ours.csv};
\end{groupplot}
\node[anchor=south] at ($(group c1r1.north west)!0.5!(group c2r1.north east)+(0,0.72cm)$) {\pgfplotslegendfromname{legendcurves}};
\end{tikzpicture}
\caption{KD-loss trajectories of the disaggregated runs on the 4 pairs of Table~\ref{tab:pairs} (per-iteration values, thin; 17-iteration moving averages, thick). Students start from released checkpoints, so levels differ across pairs. Blue: the smallest disaggregated run of the pair; orange: the run on the node count of the colocated cell (4 nodes for Qwen3.5, 8 for Qwen3-235B and Nemotron); the Llama panel shows its 16-node run (blue). The labels give the number of iterations of each run, in the color of its curve.}
\label{fig:curves}
\end{figure*}

\paragraph{MMLU evaluation.}
MMLU \citep{hendrycks2021mmlu} is scored with the public FlexEval release \citep{flexeval} on the public test set: a fixed 4-shot prompt for each of the 57 subjects, greedy decoding of at most 256 new tokens that stops at a newline, and exact match after NFKC normalization, lowercasing, and stripping.

\subsection{Compared Stacks}
\label{app:stacks}
\paragraph{Colocated configurations.}
Table~\ref{tab:brackets} lists the colocated configurations behind Table~\ref{tab:pairs}. The rows that ended with NaN in their first iteration (Qwen3-235B on 2 and 4 nodes, Nemotron with cp8 and with tp4) stopped before the first optimizer step and are evaluated without Adam moments and with the 3.1\gib{} non-PyTorch memory of that iteration. The rows that set constants of the model are the 2 Qwen3.5 cp4 rows, the OOM points of the 2 Qwen3.5 4-node rows, and those of the 2 Nemotron 8-node rows with cp4 (Appendix~\ref{app:memory}); the other rows are outside the fit, except that the 3.1\gib{} first-iteration constant is the mean of the 3 Qwen3.5 OOM points (including the 3-node one) and the Nemotron 4-node OOM point.

\begin{table*}[tp]
\centering
\footnotesize
\setlength{\tabcolsep}{2pt}
\renewcommand{\arraystretch}{0.9}
\begin{tabular}{L{1.85cm}L{4.0cm}rrrrrrL{4.4cm}}
\toprule
 & & \multicolumn{2}{c}{Device} & \multicolumn{2}{c}{Torch} & \multicolumn{2}{c}{OOM point} & \\
\cmidrule(lr){3-4}\cmidrule(lr){5-6}\cmidrule(lr){7-8}
Pair & Configuration & Pred. & Meas. & Pred. & Meas. & Pred. & Meas. & Outcome \\
\midrule
Qwen3.5 & 3 nodes, tp2/dp3, best grid tp2/pp2/cp2, global batch 126 & 133.1 & $-$ & 120.8 & $-$ & $-$ & $-$ & not run \\
 & 3 nodes, tp2/pp4/cp1/dp3 & 185.9 & $-$ & 173.6 & $-$ & 86.8 & 88.0 & OOM at the 4th buffer (student divide) \\
 & 4 nodes, cp1 & 184.6 & $-$ & 172.3 & $-$ & 86.4 & 87.6 & OOM at the 4th buffer \\
 & 4 nodes, cp2 & 112.1 & $-$ & 99.8 & $-$ & 78.3 & 79.0 & OOM at the 7th buffer \\
 & 4 nodes, cp4/dp1 & 75.8 & 76.1 & 63.6 & 63.4 & $-$ & $-$ & trains 200 iterations \\
 & 4 nodes, cp4/dp1, \ballk{} & 79.6 & 79.1 & 67.4 & 67.2 & $-$ & $-$ & trains 200 iterations \\
 & 4 nodes, cp4/dp1, \sqlk{} & $-$ & 76.8 & $-$ & 59.9 & $-$ & $-$ & trains 200 iterations \\
 & 4 nodes, cp4/dp1, \sqlk{} fused & $-$ & 58.7 & $-$ & 40.7 & $-$ & $-$ & trains 15 iterations \\
 & 3 nodes, tp2/pp4/cp1/dp3, \sqlk{} fused & $-$ & $-$ & $-$ & $-$ & $-$ & 72.9 & OOM in the first forward \\
 & 3 nodes, tp2/pp4/cp1/dp3, \sqlk{} fused-W & $-$ & 78.9 & $-$ & 64.1 & $-$ & $-$ & trains 15 iterations, batch 126 \\
 & 2 nodes, tp2/pp4/cp2/dp1, \sqlk{} fused-W & $-$ & 65.8 & $-$ & 50.3 & $-$ & $-$ & trains 15 iterations \\
 & 2 nodes, tp2/pp4/cp1/dp2, \sqlk{} fused-W & $-$ & $-$ & $-$ & $-$ & $-$ & 65.4 & OOM after iteration 1 (windowed pick) \\
 & 4 nodes, tp2/pp2/cp8/dp1 & 77.3 & 77.0 & 65.0 & 65.7 & $-$ & $-$ & fits, trains 3 iterations \\
 & 4 nodes, cp4/dp1, 8192 tokens & 48.7 & $-$ & 36.4 & 37.4 & $-$ & $-$ & fits, 12 iterations \\
 & 4 nodes, tp2/pp2/cp2/dp4, 8192 tokens & 77.3 & $-$ & 65.0 & 65.7 & $-$ & $-$ & fits, 12 iterations \\
Qwen3-235B & 3 nodes & 210.4 & $-$ & 198.1 & $-$ & $-$ & $-$ & OOM in model construction \\
 & 3 nodes, ep4, \sqlk{} fused-W & $-$ & $-$ & $-$ & $-$ & $-$ & 85.8 & OOM in model construction \\
 & 6 nodes, cp2 & 107.2 & $-$ & 94.9 & $-$ & $-$ & $-$ & OOM in the loss computation \\
 & 6 nodes, tp4/pp1/cp4/ep16/dp3 & 73.5 & $-$ & 61.2 & $-$ & $-$ & $-$ & not run \\
 & 2 nodes, tp4/pp1/cp4/ep16 & 64.3 & 67.3 & 61.2 & $-$ & $-$ & $-$ & NaN in the gradient norm of the first backward pass \\
 & 4 nodes, tp4/pp1/cp8/ep32 & 36.7 & 39.5 & 33.6 & $-$ & $-$ & $-$ & NaN in the gradient norm of the first backward pass \\
 & 8 nodes, cp2 & 87.1 & $-$ & 74.8 & $-$ & $-$ & $-$ & OOM in the loss computation (74.4 in use, 4.6 requested) \\
 & 8 nodes, cp4 & 64.5 & $-$ & 52.2 & $-$ & $-$ & $-$ & fits; diverges with the all-to-all dispatcher \\
 & 8 nodes, cp4, all-gather dispatcher, \ballk{} & 64.5 & 66.3 & 52.2 & 54.4 & $-$ & $-$ & trains 60 iterations, 75 s/it \\
Nemotron & 4 nodes, tp2/pp4/cp1 & 334.8 & $-$ & 322.5 & $-$ & $-$ & $-$ & OOM during model construction (78.7 in use, 7.1 requested) \\
 & 4 nodes, ep4, \sqlk{} fused-W & $-$ & $-$ & $-$ & $-$ & $-$ & 77.0 & OOM in model construction \\
 & 4 nodes, tp8/pp1/cp4/ep32 & 63.0 & $-$ & 59.9 & $-$ & 59.9 & 75.7 & OOM in the teacher forward \\
 & 8 nodes, cp4, Hugging Face teacher & $-$ & $-$ & $-$ & $-$ & 96.7 & 96.6 & OOM in the teacher forward \\
 & 8 nodes, cp4, distributed-checkpoint teacher & 93.8 & $-$ & 81.5 & $-$ & 77.6 & 77.7 & OOM in the teacher forward \\
 & 8 nodes, tp2/pp4/cp8 & 71.0 & $-$ & 67.9 & $-$ & $-$ & $-$ & memory fits; NaN in the gradient norm of the first backward pass \\
 & 8 nodes, tp4/pp4/cp4 & 52.0 & $-$ & 48.9 & $-$ & $-$ & $-$ & memory fits; NaN in the forward loss of the first iteration \\
 & 8 nodes, tp4/pp4/cp4, all-gather dispatcher & 52.0 & 72.4 & 48.9 & $-$ & $-$ & $-$ & NaN in the gradient norm of the first backward pass \\
Qwen3-32B & 1 node, tp2/pp2/dp2, 4096 tokens & 51.0 & $-$ & 38.8 & 38.9 & $-$ & $-$ & trains 12 iterations \\
 & 1 node, same grid, \ballk{} & 52.2 & $-$ & 39.9 & 40.0 & $-$ & $-$ & trains 12 iterations \\
Llama & 6 nodes, tp16/pp1/cp1/dp3, global batch 126 & 112.4 & $-$ & 100.1 & $-$ & $-$ & $-$ & not run with \ballk{} \\
 & 6 nodes, tp16/dp3, \sqlk{} fused-W & $-$ & $-$ & $-$ & $-$ & $-$ & 80.1 & OOM in model construction \\
 & 8, 16, 24, 28, 32 nodes, tp$\le$8 (lowest-peak grid) & \multicolumn{2}{c}{89 to 102} & \multicolumn{2}{c}{76 to 90} & $-$ & $-$ & not run \\
 & 8 nodes, tp32/pp2/cp1/dp1 & 47.8 & $-$ & 35.6 & 36.7 & $-$ & $-$ & trains 5 iterations \\
 & 16 nodes, tp32/pp2/cp2/dp1 & 39.0 & $-$ & 26.8 & 27.5 & $-$ & $-$ & trains 200 iterations \\
\bottomrule
\end{tabular}
\caption{Colocated configurations for each pair (Megatron-Bridge ModelOpt; KD-only unless a row says \ballk{}, \sqlk{}, \sqlk{} fused, or all-gather; W: the model's cross-entropy fusion off), 32768-token sequences unless stated. Pred. and Meas. compare the model with the measurement of the same quantity in GiB: the device peak, the peak allocated by PyTorch (torch), or, at an OOM error, the bytes allocated plus the bytes requested (OOM point); $-$: not predicted or not measured. The model's MoE dispatch term is that of the all-to-all dispatcher, and the all-gather rows and the \sqlk{} rows, whose loss is not the modeled one, are not counted in Table~\ref{tab:accuracy}.}
\label{tab:brackets}
\end{table*}

\paragraph{Fused and fused-W losses.}
All colocated \sqlk{} runs of the primary pair evaluate $\mathcal{L}_{\mathrm{SQ}}$ of \eqref{eq:3losses} on the logits that ModelOpt's forward pass materializes; they differ in how $\mathrm{KD}_i$ and $\mathrm{CE}_i$ are computed. The fused and fused-W runs install the loss module of \system{} in ModelOpt, the kernel of Section~\ref{sec:loss} and the label pick of the trainer, so their results are a contribution of this work and not a competing baseline: the module removes the loss-side memory limit of the colocated stack, and the layout comparison is between 2 stacks that share it. (1)~\sqlk{}: the criterion forms the fp32 log-probabilities of the whole vocabulary and accumulates $\mathrm{KD}_i$ in tiles of 16384 columns, and the model forward computes $\mathrm{CE}_i$ with Megatron's fused vocabulary-parallel cross-entropy. (2)~\sqlk{} fused: one criterion computes both terms per token and returns $(1-\lambda_\iota)\,\mathrm{CE}_i+\lambda_\iota\,\mathrm{KD}_i$. It evaluates $\mathrm{KD}_i=\mathrm{lse}_s-\sum_v p_{t,v}z_{s,v}$ with the one-pass kernel of \system{} (an online running maximum and sum of exponentials over tiles, and the gradient $p_s-p_t$ of \eqref{eq:rowgrad} re-derived from the saved per-token statistics, so no fp32 tensor of size tokens${\times}V/\mathrm{tp}$ is saved for the backward pass; the kernel saves the logits that ModelOpt already holds and per-token statistics), and $\mathrm{CE}_i=\mathrm{lse}_s-z_{s,y_i}$ by the vocabulary-parallel label pick, in windows of 1024 rows under activation checkpointing. The cross-entropy of the model forward is still computed and logged, with zero gradient weight. Unlike \system{}, which forms the logits window by window from hidden states, this criterion reads the full materialized logits. (3)~\sqlk{} fused-W (W: the model's cross-entropy fusion off): (2) with that fusion turned off, so that the logged-only cross-entropy takes the unfused vocabulary-parallel path, which normalizes in place on its single fp32 copy of the logits. The 3-node fused run ran OOM allocating an fp32 tile of tokens${\times}V/\mathrm{tp}$ values (15.16\gib{} at cp1 with 32768 rows per rank) inside the fused cross-entropy (Appendix~\ref{app:primary}). \system{} evaluates no model cross-entropy: its forward step returns the pre-head hidden states, and the loss module computes the LM and KD terms on windows of 1024 to 4096 rows (Appendix~\ref{app:runs}), so the fusion option does not apply to it.

\paragraph{Colocated-side enablers.}
Keeping the colocated runs on Megatron-Bridge used 3 enablers: a text-extracted copy of the Qwen3.5 teacher (the release's vision wrapper cannot ride PP), a pre-converted sharded checkpoint for the Qwen3-235B teacher, and communicator pre-warming at training start; the converted and extracted teachers were not compared with the originals.

\begin{table*}[t]
\centering
\footnotesize
\setlength{\tabcolsep}{4pt}
\begin{tabular}{L{2.2cm}L{1.6cm}rrrrrrL{3.6cm}}
\toprule
Stack & Pair & Nodes & Steps & Tokens ($10^3$) & s/step & Tok/s/GPU & Peak & Setting \\
\midrule
ModelOpt, KD-only & Qwen3.5 & 4 & 200 & 4194 & 90.5 & 1450 & 63.4 & tp2/pp4/cp4/dp1 \\
ModelOpt, \ballk{} & Qwen3.5 & 4 & 200 & 4194 & 176.8 & 741 & 67.2 & tp2/pp4/cp4/dp1 \\
ModelOpt, \sqlk{} & Qwen3.5 & 4 & 200 & 4194 & 96.3 & 1361 & 59.9 & tp2/pp4/cp4/dp1 \\
ModelOpt, \sqlk{} fused & Qwen3.5 & 4 & 15 & 4194 & 100.6 & 1303 & 40.7 & tp2/pp4/cp4/dp1 \\
ModelOpt, \sqlk{} fused-W & Qwen3.5 & 3 & 15 & 4129 & 72.0 & \textbf{2389} & 64.1 & tp2/pp4/cp1/dp3 \\
ModelOpt, \sqlk{} fused-W & Qwen3.5 & 2 & 15 & 4194 & 146.8 & \textbf{1786} & 50.3 & tp2/pp4/cp2/dp1 \\
\system{} (ours) & Qwen3.5 & 2 & 15 & 4194 & 283.7 & 924 & 57.5 & 1+1 nodes, tp2/pp2/cp2/dp1 \\
KDFlow & Qwen3.5 & 4 & 12 & 4183 & 335 & 390 & 79.1 & sleep mode, chunked loss \\
NeMo-RL & Qwen3.5 & 4 & 12 & 1047 & 50.7 & 645 & 77.0 & batch 256 $\times$ 4096 \\
\system{} (ours) & Qwen3.5 & 3 & 200 & 4194 & 74.3 & 2352 & 39.2 & 2+1 nodes, tp2/pp2/cp2/dp2, CR $-$ \\
\system{} (ours) & Qwen3.5 & 4 & 200 & 4129 & 69.9 & \textbf{1846} & 33.0 & 3+1 nodes, tp2/pp2/cp2/dp3, CR $-$ \\
\midrule
ModelOpt, KD-only & Qwen3-32B & 1 & 12 & 524 & 18.4 & 3570 & 38.9 & tp2/pp2/dp2 \\
ModelOpt, \ballk{} & Qwen3-32B & 1 & 12 & 524 & 18.7 & 3500 & 40.0 & tp2/pp2/dp2 \\
KDFlow & Qwen3-32B & 1 & 12 & 515 & 9.8 & \textbf{6570} & 78.9 & sleep mode off \\
NeMo-RL & Qwen3-32B & 1 & 11 & 524 & 36.8 & 1780 & 77.5 & batch 128 $\times$ 4096 \\
\system{} (ours) & Qwen3-32B & 2 & 400 & 524 & 13.5 & 2426 & 25.1 & 1+1 nodes, tp2/pp2/dp2, CR $-$ \\
\bottomrule
\end{tabular}
\caption{Measured runs of the compared stacks: nodes of 8 GPUs, steps run, tokens per step as counted, median step time (a 9-step mean for KDFlow on Qwen3-32B; steps 3 to 15 for the 15-step run), tokens/s per GPU, and peak memory per GPU in GiB (torch-allocated for ModelOpt and \system{}, device memory for KDFlow and NeMo-RL, whose torch peaks were not logged; the producers' device peaks of \system{} are in Table~\ref{tab:runs}). KDFlow on Qwen3.5 used sleep mode, and its step includes the teacher offload cycle; NeMo-RL on Qwen3.5 ran at a quarter of the batch without the grouped-GEMM kernel. CR (in the Setting column): the parts of the cache release that were on (Table~\ref{tab:cachesplit}). Bold: highest tokens/s per GPU in each pair and node count; the runs differ in loss, sequence length, steps, tokens per step, and teacher precision.}
\label{tab:stacks}
\end{table*}

\paragraph{Token accounting.}
Our first KDFlow and NeMo-RL inputs held 1 document per sample, and FineWeb-Edu documents average 840 to 900 tokens, so a global batch of 128 would hold 0.11M tokens against 4.19M (Qwen3.5) or 0.52M (Qwen3-32B) for ModelOpt and \system{}, which concatenate documents into full-length sequences. We therefore ran KDFlow and NeMo-RL on the corpus packed into samples of the same length (documents concatenated, the remainder dropped; 32{,}768 tokens for Qwen3.5 and 4096 for Qwen3-32B, except NeMo-RL on Qwen3.5, which ran 4096-token samples), and counted tokens by re-tokenizing the packed samples. KDFlow receives the packed text as a response to a fixed prompt without a chat template, because its 2 chat-template paths disagreed by 2 tokens per sample; its samples render to 32{,}682 tokens on average on Qwen3.5 (at most 32{,}723, under its limit of 32{,}768) and 4026 on Qwen3-32B. NeMo-RL's samples average 4091 tokens. The tokens per step are in Table~\ref{tab:stacks}.

NeMo-RL's x-token recipe (its off-policy distillation) exports the teacher's full-vocabulary logits into a persistent per-rank buffer whose size is proportional to the tokens per step on the Qwen3.5 pair: 30\,GiB at 1.05M tokens and 121\,GiB at 4.19M, which does not fit an 80\,GiB GPU beside the models, as 2 failed 4.19M-token runs showed. The step of 1.05M tokens in Table~\ref{tab:stacks} was the largest that ran on Qwen3.5; 4.19M tokens ran OOM and 2.1M was not attempted. KDFlow at 4.18M tokens ran OOM without sleep mode (the student's forward, with the teacher's weights resident) and without the chunked loss (a teacher-head chunk wanted 15\,GiB with 8\,GiB free); with sleep mode and a chunk of 4096 tokens its device reading is at the limit (Table~\ref{tab:stacks}).

\paragraph{Systems comparison.}
Table~\ref{tab:stacks} lists the measured runs behind Table~\ref{tab:systems}. The global batch is 128 for every stack, except the 4-node \system{} run and the 3-node fused-W run on Qwen3.5, whose global batch is 126 so that $\mathrm{dp}{=}3$ divides it, and NeMo-RL on Qwen3.5, whose batch is 256 sequences of 4096 tokens. \system{} trains on Qwen3-32B with \sqlk{} on 1 trainer node and 1 producer node of 4 TP2 replicas, so 2 nodes against 1. KDFlow and NeMo-RL are summarized by the median step (NeMo-RL on Qwen3-32B in 2 runs with medians 36.9 and 36.7\,s), except KDFlow on Qwen3-32B, whose 1\,s step log gives the mean of its 9 steady steps (1 step of 36\,s excluded). KDFlow's optional sleep mode, which offloads the idle model, was off on Qwen3-32B, as in its example script, so its teacher's weights stay in GPU memory beside the student's, whereas NeMo-RL offloads the teacher after its forward pass. On Qwen3.5, NeMo-RL's expert layers ran without the grouped-GEMM kernel. In the rows on the teacher's output and the loss in Table~\ref{tab:systems}, columns are vocabulary entries held per TP rank, tokens are the microbatch's tokens on a GPU, and the KL is the forward KL between teacher and student next-token distributions; NeMo-RL adds a CE term (scale 0.1) and KDFlow weights KL and CE 0.5 each at learning rate $2\times10^{-5}$. NeMo-RL's same-tokenizer path takes it over the 8192 teacher columns with the largest logits in the microbatch (its default, of $V{\approx}152{,}000$ for Qwen3-32B and $248{,}000$ for Qwen3.5), renormalized within that subset, whereas the other 3 use all $V$ columns.

\section{Additional Results}

\subsection{Primary Pair}
\label{app:primary}
\paragraph{The \sqlk{} and \ballk{} losses.}
For row $i$ with label $y_i$ and one-hot vector $\bm{e}_{y_i}$, the label loss, the soft cross-entropy, and the KL divergence satisfy
\begin{equation}
\label{eq:klkd}
\begin{aligned}
\mathrm{CE}_i&=-\log p_{s,y_i},\\
\mathrm{KD}_i&=-\sum_v p_{t,v}\log p_{s,v},\\
H_i&=-\sum_v p_{t,v}\log p_{t,v},\\
\mathrm{KL}_i&=\sum_v p_{t,v}\log\tfrac{p_{t,v}}{p_{s,v}}=\mathrm{KD}_i-H_i,
\end{aligned}
\end{equation}
with $\mathcal{L}_{\mathrm{KL}}=\sum_i\mathrm{KL}_i$. The teacher entropy $H_i$ has no student parameters, so $\nabla\mathcal{L}_{\mathrm{KL}}=\nabla\mathcal{L}_{\mathrm{KD}}$, and the gradients with respect to the student logits $\bm{z}_s$ of row $i$ are
\begin{equation}
\label{eq:rowgrad}
\tfrac{\partial\mathrm{CE}_i}{\partial\bm{z}_s}=p_s-\bm{e}_{y_i},\qquad
\tfrac{\partial\mathrm{KD}_i}{\partial\bm{z}_s}=\tfrac{\partial\mathrm{KL}_i}{\partial\bm{z}_s}=p_s-p_t.
\end{equation}
With $\iota\in[0,1]$ the completed fraction of the iterations, the losses of the primary-pair runs are
\begin{equation}
\label{eq:3losses}
\begin{aligned}
\mathcal{L}_{\mathrm{SQ}}&=(1-\lambda_\iota)\,\mathcal{L}_{\mathrm{LM}}+\lambda_\iota\,\mathcal{L}_{\mathrm{KD}},\\
\lambda_\iota&=1-0.25\,\iota,\\
\mathcal{L}_{\mathrm{Bal}}&=\mathcal{L}_{\mathrm{LM}}+\gamma\,\mathcal{L}_{\mathrm{KL}},\quad \gamma=\tfrac{\mathcal{L}_{\mathrm{LM}}}{\mathcal{L}_{\mathrm{KL}}},\\
\mathcal{L}_{\mathrm{res}}&=\mathcal{L}_{\mathrm{LM}}+\gamma_{\mathrm{res}}\,\mathcal{L}_{\mathrm{KD}},\quad \gamma_{\mathrm{res}}=\tfrac{\mathcal{L}_{\mathrm{LM}}}{\mathcal{L}_{\mathrm{KD}}}.
\end{aligned}
\end{equation}
$\mathcal{L}_{\mathrm{SQ}}$ is our objective \eqref{eq:objective} (\sqlk{}): the disaggregated runs, except the rescaled run, and the colocated run of Table~\ref{tab:main} that installs it in ModelOpt with the cross-entropy form of the KD term. $\mathcal{L}_{\mathrm{Bal}}$ is ModelOpt's loss balancer (\ballk{}) of the colocated runs, and $\mathcal{L}_{\mathrm{res}}$ is the rescaled disaggregated run of Table~\ref{tab:parity}, which applies the rescaling to our cross-entropy value; $\gamma$ and $\gamma_{\mathrm{res}}$ are held constant in the backward pass. The rescaled losses are constant in value, $\mathcal{L}_{\mathrm{Bal}}=\mathcal{L}_{\mathrm{res}}=2\,\mathcal{L}_{\mathrm{LM}}$, so their totals carry no KD value, and the KD term acts through the gradient. By \eqref{eq:rowgrad}, the gradients with respect to $\bm{z}_s$ of row $i$ are
\begin{equation}
\label{eq:balgrad}
\begin{aligned}
\tfrac{\partial\mathcal{L}_{\mathrm{SQ}}}{\partial\bm{z}_s}&=p_s-(1-\lambda_\iota)\,\bm{e}_{y_i}-\lambda_\iota\,p_t,\\
\tfrac{\partial\mathcal{L}_{\mathrm{Bal}}}{\partial\bm{z}_s}&=(1+\gamma)\,p_s-\bm{e}_{y_i}-\gamma\,p_t,\\
\tfrac{\partial\mathcal{L}_{\mathrm{res}}}{\partial\bm{z}_s}&=(1+\gamma_{\mathrm{res}})\,p_s-\bm{e}_{y_i}-\gamma_{\mathrm{res}}\,p_t.
\end{aligned}
\end{equation}
Each gradient moves $p_s$ toward the target $(1-\omega)\,\bm{e}_{y_i}+\omega\,p_t$ with KD weight $\omega=\lambda_\iota$, $\gamma/(1{+}\gamma)$, and $\gamma_{\mathrm{res}}/(1{+}\gamma_{\mathrm{res}})$, at scale 1, $1{+}\gamma$, and $1{+}\gamma_{\mathrm{res}}$. Per token, $\mathcal{L}_{\mathrm{KL}}$ is 0.2 to 0.35 against $\mathcal{L}_{\mathrm{LM}}$ of 2.1 to 2.2, so $\gamma$ is 6.5 to 9.6 at the iterations of Table~\ref{tab:kdshare}, the weight of \ballk{} averages 0.887 over the run against 0.875 for $\lambda_\iota$, and the logged gradient norm at iteration 100 is 6.2 for \ballk{} against 0.29 for the colocated \sqlk{} run and 0.63 for the disaggregated run. The logged KD values differ by $H_i$, about 2 nats. For the rescaled run $\gamma_{\mathrm{res}}=0.96$ (2.2 to 2.3 against 2.2), a constant weight of 0.49 (Table~\ref{tab:kdshare}); its mixture is not ModelOpt's. At $\lambda_\iota=1$ the KD-only losses $\mathcal{L}_{\mathrm{KD}}$ and $\mathcal{L}_{\mathrm{KL}}$ have the same gradient.

\paragraph{Loss parity.}
Table~\ref{tab:parity}a gives the gap between the LM loss of a disaggregated run and that of the colocated \ballk{} run, over moving averages of $w$ iterations. We smooth over 17 iterations ($\pm 8$) because the loss changes by 0.014 to 0.018 nats between consecutive iterations, the size of the gap at $w{=}1$; the mean absolute gap shrinks with the window, and the signed mean stays at $+$0.001 for the 4-node run and rises from $+$0.005 to $+$0.008 for the rescaled run. Against the colocated \sqlk{} run the 4-node gap at $w{=}17$ is 0.008 (signed $+$0.005, largest 0.028). The rescaled run applies the colocated teacher precision (bf16), learning-rate floor ($10^{-5}$), and global batch (128), and rescales the KD term to the LM loss so that its total loss is 2$\times$ the LM loss, as in ModelOpt. It runs on 3 nodes (2+1) against 4 for the colocated run, at 153.0\,s per iteration against 176.8 (Tables~\ref{tab:rate} and~\ref{tab:main}), and differs from the colocated run in the grid, the kernels, the KD share (Table~\ref{tab:kdshare}), the document order, and CR all. Table~\ref{tab:parity}b gives the KD losses: the logged values differ by 2.00 nats on average (SD 0.02) from the ModelOpt runs and by 0.02 from the \sqlk{} run, and after subtracting each run's first 3 iterations they rise by 0.14 to 0.16 and end within 0.01 of their start.

\begin{table}[t]
\centering
\footnotesize
\setlength{\tabcolsep}{1.5pt}
\begin{tabular}{L{0.75cm}rr@{\hspace{5pt}}rr@{\hspace{5pt}}rr}
\toprule
\multicolumn{7}{l}{\emph{(a) LM loss gap to the colocated run}} \\
 & \multicolumn{4}{c}{\ballk{}} & \multicolumn{2}{c}{\sqlk{}} \\
\cmidrule(lr){2-5}\cmidrule(lr){6-7}
 & \multicolumn{2}{c}{Disagg., 4 nodes} & \multicolumn{2}{c}{Disagg., rescaled} & \multicolumn{2}{c}{Disagg., 4 nodes} \\
\cmidrule(lr){2-3}\cmidrule(lr){4-5}\cmidrule(lr){6-7}
$w$ & $|$gap$|$ & signed & $|$gap$|$ & signed & $|$gap$|$ & signed \\
\midrule
1 & 0.020 & $+$0.001 & 0.021 & $+$0.005 & 0.017 & $+$0.005 \\
5 & 0.014 & $+$0.001 & 0.014 & $+$0.005 & 0.010 & $+$0.005 \\
9 & 0.012 & $+$0.001 & 0.012 & $+$0.006 & 0.009 & $+$0.005 \\
17 & 0.012 & $+$0.001 & 0.010 & $+$0.007 & 0.008 & $+$0.005 \\
33 & 0.011 & $+$0.001 & 0.009 & $+$0.008 & 0.006 & $+$0.006 \\
Max & 0.056 & & 0.043 & & 0.028 & \\
\bottomrule
\end{tabular}

\vspace{3pt}
\begin{tabular}{L{3.0cm}rrrr}
\toprule
\multicolumn{5}{l}{\emph{(b) KD loss, first 3 iterations subtracted}} \\
Run & Rise & End & $|$gap$|$ & signed \\
\midrule
Disagg., 4 nodes & 0.16 & 0.00 & -- & -- \\
Coloc., KD-only & 0.15 & $+$0.01 & 0.011 & $-$0.010 \\
Coloc., \ballk{} & 0.14 & $+$0.01 & 0.019 & $-$0.015 \\
Coloc., \sqlk{} & 0.16 & $+$0.01 & 0.011 & $-$0.010 \\
\bottomrule
\end{tabular}
\caption{Loss parity on the primary pair. (a)~Mean absolute and mean signed gap between the LM loss of a disaggregated run and a colocated run (\ballk{} or \sqlk{}), over moving averages of $w$ iterations, at the iterations at which both have a full window (183 at $w{=}17$ for the 4-node run, 184 for the rescaled 2+1 run against the \ballk{} run). (b)~Largest rise of each run's KD loss, its value at the last full window (iteration 192), and the gap to the disaggregated curve (disaggregated minus colocated; mean absolute and mean signed over iterations 9 to 192). Max: the maximum absolute gap at $w{=}17$.}
\label{tab:parity}
\end{table}

\begin{table}[t]
\centering
\footnotesize
\setlength{\tabcolsep}{3pt}
\begin{tabular}{L{3.4cm}rrrr}
\toprule
Iteration & 20 & 50 & 100 & 200 \\
\midrule
Colocated \ballk{} & 0.88 & 0.87 & 0.89 & 0.91 \\
\sqlk{} (disagg. and coloc.) & 0.98 & 0.94 & 0.88 & 0.75 \\
Disaggregated, rescaled & 0.49 & 0.49 & 0.49 & 0.49 \\
Colocated KD-only & 1 & 1 & 1 & 1 \\
\bottomrule
\end{tabular}
\caption{Weight of the KD term in the loss of the primary-pair runs: $\lambda$ for the mixture, $\gamma/(1{+}\gamma)$ with $\gamma=\mathrm{LM}/\mathrm{KL}$ for \ballk{} and $\gamma_{\mathrm{res}}=\mathrm{LM}/\mathrm{KD}$ for the rescaled run.}
\label{tab:kdshare}
\end{table}

\paragraph{Fused loss in the colocated stack.}
The colocated runs \sqlk{} fused and fused-W (Appendix~\ref{app:stacks}; 15 iterations each, otherwise the \sqlk{} recipe) differ from \system{} in reading the materialized logits. Its torch peak is 40.7\gib{} and its sampled device peak 58.7\gib{}, against 59.9 and 76.8 for \sqlk{}, and its median iteration takes 100.6\,s over iterations 3 to 15, against 110.3\,s for \sqlk{} over the same iterations (96.3\,s over iterations 3 to 200). At iteration 1, with the same weights and data, the total loss of a rank is 18207.279 against 18207.285 (sums over the rank's tokens), and the model-forward LM sum agrees with the fused label pick to $2\times10^{-7}$ (the printed precision). The 15-iteration runs differ from the 200-iteration ones in $\lambda_\iota$ and in the learning rate, whose warmup covers iterations 1 to 14 of the 15, so only iteration 1 is compared, and the 15-iteration KD losses of different layouts are not comparable. At 3 nodes (tp2/pp4/cp1/dp3, global batch 126) the fused run runs OOM in the first forward, with 66.23\gib{} in use and a request of 15.16\gib{}: the fp32 tile that the model's own fused cross-entropy forms at 32768 rows per rank, to which the fused criterion gives no gradient weight. With that fusion off (fused-W) the 3-node run trains 15 iterations with the iteration-1 values of the 4-node runs (LM 2.0945, KD 2.1907), at 72.0\,s per iteration (4.13M tokens, 2389 tokens/s per GPU, against 74.3\,s and 2352 for \system{} on 3 nodes), with a torch peak of 64.1\gib{} and a sampled device peak of 78.9\gib{} against the 79.11\gib{} limit. At 2 nodes the fused-W run (tp2/pp4/cp2/dp1, global batch 128) trains 15 iterations at 146.8\,s per iteration, with a torch peak of 50.3\gib{} and a sampled device peak of 65.8\gib{}; with cp1 and dp2 it ran OOM after the first iteration, with 78.9\gib{} in use and a request of 0.5\gib{} at the windowed label pick. \system{} on 2 nodes (1 trainer node with tp2/pp2/cp2/dp1, 1 producer node with 4 TP2 replicas) trains 15 iterations at 283.7\,s per iteration with a trainer torch peak of 57.5\gib{}; the student is the bottleneck at dp1 and the producers are idle most of the time (duty 0.13, Table~\ref{tab:rate}). At 2 nodes the colocated fused-W run is therefore faster and holds less memory per GPU than \system{}, at 3 nodes the 2 runs have a similar speed and \system{} holds less memory per GPU, and \system{} is faster than the colocated runs with ModelOpt's loss. On the 3 larger pairs the fused-W cells at the node counts and student grids of \system{} run OOM during model construction, before the first forward pass: Qwen3-235B on 3 nodes (tp2/pp2/cp1/ep4, global batch 126; 74.9\gib{} in use and a request of 13.5\gib{}), Nemotron on 4 nodes (tp2/pp2/cp1/ep4; 79.09\gib{} in use while the teacher's expert layers are built), and Llama on 6 nodes (tp16/pp1/cp1/dp3, global batch 126; 66.0\gib{} in use and a request of 16.4\gib{} for the gradient buffer of the student). The colocated memory model of Appendix~\ref{app:memory} describes ModelOpt's loss and does not predict these runs.

\paragraph{Fused-W on 2 nodes.}
2 fp32 buffers with tokens${\times}V/\mathrm{tp}$ values each decide the peak of a colocated run of the primary pair, 15.16\gib{} each at 32768 rows per rank ($V/\mathrm{tp}{=}124160$) and 7.58\gib{} at cp2. (1)~The intermediates of ModelOpt's logits-KL loss, on the KD side: the stock criterion computes fp32 log-probabilities of this size and saves the student-side tensors for the backward pass. The fused criterion removes them: the kernel of \system{} saves no fp32 tensor of this size (only the logits that ModelOpt already holds and per-token scalars), and the label pick materializes 1 window of 1024 rows at a time. (2)~The model's own cross-entropy, on the student side: Megatron's fused vocabulary-parallel cross-entropy in the model forward forms 1 such tensor, although the fused criterion gives that loss zero gradient weight. Turning the fusion off (W) routes it through the unfused path, which normalizes in place on its single fp32 copy of the logits. Table~\ref{tab:fusedw} lists the 3-node cells, which are 3 different runs (the KD-only run has no LM term, hence no buffer (2)): the failing allocation moves from the loss to the model forward, and with the fusion off the fused run fits. At 2 nodes the same student grid with cp1 and dp2 holds 32768 rows per rank and ran OOM after the first iteration at a request of 0.5\gib{} (64.9\gib{} allocated, 78.9\gib{} in use), whereas cp2 and dp1 halves every buffer of this size, fits at 50.3\gib{} allocated (65.8\gib{} sampled device peak), and trains at 146.8\,s per iteration. \system{} on the same 2 nodes trains the student on 8 GPUs while the producer node waits for the trainer, and takes 283.7\,s (Table~\ref{tab:rate}).

\begin{table}[t]
\centering
\footnotesize
\setlength{\tabcolsep}{3pt}
\begin{tabular}{L{2.3cm}rrrL{1.7cm}}
\toprule
Loss & Alloc. & Req. & In use & Failed in \\
\midrule
KD-only, ModelOpt & 72.9 & 15.2 & 75.5 & the loss \\
\sqlk{} fused & 57.7 & 15.2 & 66.2 & model forward \\
\sqlk{} fused-W & 64.1 & $-$ & 78.9 & trains \\
\bottomrule
\end{tabular}
\caption{Colocated 3-node cell of the primary pair (tp2/pp4/cp1/dp3) at its first failing allocation or, for the run that trains, at its peak (GiB): PyTorch-allocated bytes (Alloc.), bytes requested by the failing allocation (Req.), and device memory in use (device peak for fused-W).}
\label{tab:fusedw}
\end{table}

\begin{table}[t]
\centering
\footnotesize
\setlength{\tabcolsep}{3pt}
\begin{tabular}{L{3.3cm}rrrr}
\toprule
Run & Nodes & s/it & GPU-s & Peak \\
\midrule
Colocated, cp4/dp1 & 4 & 96.3 & 3082 & 59.9 \\
Colocated fused, cp4/dp1 & 4 & 100.6$^\dagger$ & 3219 & 40.7 \\
Colocated fused-W, cp1/dp3 & 3 & 72.0$^\dagger$ & 1728 & 64.1 \\
Colocated fused-W, cp2/dp1 & 2 & 146.8$^\dagger$ & 2349 & 50.3 \\
Disaggregated, cp2/dp1 & 1+1 & 283.7$^\dagger$ & 4539 & 57.5 \\
Disaggregated, cp2/dp3 & 3+1 & 69.9 & 2237 & 33.0 \\
Disaggregated, cp2/dp2 & 2+1 & 74.3 & 1783 & 39.2 \\
\bottomrule
\end{tabular}
\caption{\sqlk{} runs of the primary pair at 32768 tokens (tp2; pp4 colocated, pp2 disaggregated): median s/it, GPU-seconds per iteration over all nodes (producers included), and torch-allocated peak in GiB (the trainer's for the disaggregated runs). $^\dagger$: 15 iterations; fused-W: the model's cross-entropy fusion off, global batch 126 at 3 nodes and 128 at 2 nodes.}
\label{tab:gpusec}
\end{table}

\paragraph{Loss parity under the offset.}
Figure~\ref{fig:parity}a plots the LM loss, the student's next-token cross-entropy, which every run defines in the same way whatever its training weights, so the curves measure one quantity. The weights change the trajectory, not the definition, so we measured how much: the 4-node disaggregated run (mean KD weight 0.875) and the rescaled 2+1 run (0.49, which differs from the 4-node disaggregated run in node split, global batch, learning-rate floor, teacher precision, and CR) lie 0.012 and 0.010 nats from the colocated run. Figure~\ref{fig:parity}b plots KD losses that are defined differently, cross-entropy against KL; their offset is the teacher's entropy (the logged difference is 2.00 nats; no run logged the entropy), which we remove by subtracting each run's first-3-iteration mean. The plotted disaggregated run is the mixture, whose weight is the closer to ModelOpt's. A comparison at equal share would apply the balancer to the KL value, which needs the teacher's entropy per token and is not part of these runs. The runs also differ in their data order: the colocated runs hold out 0.01\% of the documents for validation (a 9999:1:0 split) before shuffling the document order, and the disaggregated runs read the whole corpus with the same shuffle seed, so a reconstruction of both data pipelines from the corpus index gives different document sequences from the first iteration (Table~\ref{tab:dataorder}, seed-42 runs). The parity statistic therefore compares runs on the same corpus, not on the same samples.

\begin{table*}[t]
\centering
\footnotesize
\setlength{\tabcolsep}{4pt}
\begin{tabular}{L{4.0cm}rrcL{2.6cm}rrrrc}
\toprule
\multicolumn{3}{l}{\emph{(a) Profile of the colocated runs}} & & \multicolumn{6}{l}{\emph{(b) Streamed trainer on other grids}} \\
\cmidrule(r){1-3}\cmidrule(l){5-10}
 & KD-only & \ballk{} & & Trainer grid & Nodes & s/it & GPU-s & Relative & CR \\
\midrule
Loss stage, GPU-kernel share (\%) & 85.9 & 77.6 & & tp2/pp2/cp2/dp4 & 4+2 & 39.3 & 1886 & 1.00 & $-$ \\
Loss stage, NCCL collectives (s) & 2.3 & 18.3 & & tp2/pp4/cp2/dp2 & 4+2 & 50.8 & 2438 & 1.29 & $-$ \\
Loss stage, copy operators (s) & 7.1 & 23.4 & & tp2/pp2/cp4/dp2 & 4+2 & 50.0 & 2400 & 1.27 & $-$ \\
Loss stage, stream syncs (s) & 13.2 & 30.4 & & tp2/pp4/cp4/dp1 & 4+2 & 70.9 & 3403 & 1.80 & $-$ \\
Stage 1, GPU-kernel share (\%) & 98.0 & 98.5 & & tp2/pp2/cp2/dp3 & 3+1 & 65.0 & 2080 & $-$ & $-$ \\
\bottomrule
\end{tabular}

\vspace{4pt}
\begin{tabular}{L{2.2cm}L{2.7cm}rrrr}
\multicolumn{6}{l}{\emph{(c) KD-only at 8192-token sequences}} \\
\toprule
Stack & Grid & Nodes & s/it & GPU-s & Peak \\
\midrule
Colocated & tp2/pp4/cp4/dp1 & 4 & 137.2 & 4389 & 37.4 \\
Colocated & tp2/pp2/cp2/dp4 & 4 & 67.5 & 2161 & 65.7 \\
Streamed & tp2/pp2/cp2/dp3 & 3+1 & 114.7 & 3672 & 32.1 \\
Streamed & tp2/pp4/cp4/dp1 & 4+2 & 77.5 & 3720 & 30.8 \\
\bottomrule
\end{tabular}

\vspace{4pt}
\begin{tabular}{L{4.4cm}L{5.0cm}rr}
\multicolumn{4}{l}{\emph{(d) Colocated KD-only iteration at 32768 tokens (tp2/pp4/cp4/dp1)}} \\
\toprule
Run & Part of the iteration & s/it & GPU-s \\
\midrule
Colocated, seed 42 & all & 90.5 & 2894 \\
Colocated, seed 43 & all & 97.7 & 3126 \\
\quad forward-only & both forward passes and the loss & 77.9 & $-$ \\
\quad difference & training step minus evaluation step & 19.8 & $-$ \\
Streamed trainer, 4+2 nodes & all & 70.9 & 3403 \\
Producers, estimated & teacher forward & 16 & $-$ \\
\bottomrule
\end{tabular}

\caption{(a)~Profiles of iteration 3 of the colocated KD-only and \ballk{} runs (tp2/pp4/cp4/dp1), on 1 rank per pipeline stage. Times are seconds of traced time in the iteration, whose wall time on the loss stage is 111.3\,s (KD-only) and 120.0\,s (\ballk{}); iteration 3 is not steady (the median iterations take 90.5\,s and 176.8\,s, Table~\ref{tab:main}); the rows are overlapping event times, so differences and shares, not absolute times, are the result. (b)~Streamed trainer (\sqlk{}) on the Qwen3.5 pair at 4.19M tokens per iteration (4.13M in the 3+1 row), on 4 trainer and 2 producer nodes (Nodes), 12 iterations each; GPU-s: GPU-seconds per iteration over all nodes, producers included; Relative: s/it over the first row's; no row of the 4+2 nodes is limited by its producers, and the last row is a rerun of the 3+1 run of Table~\ref{tab:main} (12 iterations). (c)~KD-only runs at 8192-token sequences and 4.19M tokens per iteration (4.13M for the streamed 3+1 run), 12 iterations each, of the colocated stack (ModelOpt) and the streamed trainer with CR $-$; GPU-s: GPU-seconds per iteration over all nodes, producers included; Peak: torch-allocated GiB. The trainers of the streamed runs wait at most 0.004\,s per iteration for hidden states. (d)~Forward-only: an evaluation-mode run of the seed-42 configuration (median of steps 3 to 14), on different nodes, subtracted from the seed-43 training run; the teacher forward is estimated from the forward-only rate of one producer replica of the 3+1 run (32.8k tokens/s, Table~\ref{tab:rate}) over 8 replicas, the 4 nodes. CR: the parts of the cache release that were on (Table~\ref{tab:cachesplit}).}
\label{tab:speedup}
\end{table*}

\begin{table}[t]
\centering
\footnotesize
\setlength{\tabcolsep}{4pt}
\begin{tabular}{L{2.0cm}L{3.3cm}r}
\toprule
Runs & Documents read & Shared \\
\midrule
Colocated & 99.99\% of the corpus (9999:1:0), shuffled & 0 of 128 \\
Disaggregated & the whole corpus, same shuffle seed & $-$ \\
\bottomrule
\end{tabular}
\caption{Document order of the primary-pair runs. Shared: sequences of iteration 1 whose documents are the same in the colocated and the disaggregated run, reconstructed from the corpus index (seed-42 runs).}
\label{tab:dataorder}
\end{table}

\paragraph{Source of the speedup.}
On the 4-node Qwen3.5 pair the colocated \ballk{}, colocated \sqlk{}, colocated KD-only, and disaggregated runs cost 5656, 3082, 2894, and 2237 GPU-seconds per iteration (median iteration time times the 8 GPUs of each of 4 nodes, producer node included; the colocated \sqlk{} fused run costs 3219 and the 3-node fused-W run 1728, Table~\ref{tab:gpusec}). At seed 42 the 3419 GPU-second gap to the colocated \ballk{} run is the colocated LM term, 2762 GPU-seconds (81\%; 68\% with the seed-43 colocated runs and the 61.9\,s pace), plus the 657 (19\%) by which the colocated KD-only run still exceeds the disaggregated run, which also computes the LM term. With \sqlk{} the colocated LM term costs 188 GPU-seconds, and the gap to the disaggregated run is 845, of which 657 is the KD-only excess. Table~\ref{tab:speedup}a locates the added events: on the last pipeline stage, which evaluates the loss, the LM term adds NCCL collective events and a chain of small copy and conversion operators with their stream synchronizations, and the GPU-kernel share of traced time falls (85.9\% to 77.6\%), while stage 1 is unchanged (98.0\% and 98.5\%). The KD-only gap has a different source. The colocated stack executes the teacher forward pass and the student training step sequentially on the same 4 nodes, so the iteration time is their sum. The streamed \sqlk{} trainer, which executes only the student step on the same grid, takes 70.9\,s against the colocated 90.5\,s (seed 42) and 97.7\,s (seed 43, different nodes), 19.6 and 26.8\,s less, against a teacher-forward estimate of 16\,s from the producer rate (Table~\ref{tab:speedup}d); at 8192 tokens on the same grid the gap is 59.7\,s (Table~\ref{tab:speedup}c). In the 3+1 layout the producer node executes the teacher forward pass while the trainer nodes execute the student step. The producers' forward time is within 10\% of the iteration time (Table~\ref{tab:rate}), so the iteration time is the longer of the producers' forward time and the trainers' step time, not their sum.

Grid. With ModelOpt's loss, 3 trainer nodes cannot host the colocated grid, which needs 4 nodes per replica; with the fused-W loss the colocated stack runs the narrow grid tp2/pp4/cp1/dp3 on 3 nodes (Table~\ref{tab:gpusec}). The grid with the smaller cp is faster for the streamed trainer at fixed student GPUs (Table~\ref{tab:speedup}b) and keeps the trainers' step time within 10\% of the producers' forward time in the 3+1 runs with CR $-$ (Table~\ref{tab:rate}). At 8192 tokens the colocated stack fits the 4-node grids of Table~\ref{tab:speedup}c and needs fewer GPU-seconds than the split. This does not extend to 32768 tokens, where the model predicts ModelOpt's loss over the budget on the narrow grid (the measured cp2 OOM of Table~\ref{tab:brackets} is on pp 4), and the same 3+1 layout takes 65.0 to 69.9\,s for 4.13M tokens at 32768 tokens per sequence against 114.7\,s at 8192 (Tables~\ref{tab:speedup}c and~\ref{tab:rate}). The colocated runs hold the teacher in bf16 (Appendix~\ref{app:fp8}). Holding the routed experts in fp8 lowers the colocated stack's torch peak at its cp4 grid, but the KD loss and gradient norm of that run leave those of the bf16 run at iteration 3 (Table~\ref{tab:fp8colo}), and the model predicts the cp2 grid over the budget with fp8 experts as well (no fp8 run at cp2).

\begin{table}[t]
\centering
\footnotesize
\setlength{\tabcolsep}{4pt}
\begin{tabular}{L{2.0cm}rrrr}
\toprule
 & Seed 42 & Seed 43 & Change & $p$ \\
\midrule
Disagg. (ours), 4 nodes & 68.52 & \textbf{70.09} & $+1.57$ & $6{\times}10^{-6}$ \\
Disagg. (ours), 3 nodes & \textbf{70.14} & -- & -- & -- \\
Coloc., \ballk{} & 67.51 & 69.78 & $+2.26$ & $9{\times}10^{-13}$ \\
Coloc., KD-only & 67.97 & 69.22 & $+1.25$ & $10^{-4}$ \\
\midrule
\multicolumn{5}{l}{\emph{Between layouts, seed 42}} \\
4 nodes $-$ coloc. \ballk{} & -- & -- & $+1.01$ & 0.001 \\
4 nodes $-$ coloc. KD-only & -- & -- & $+0.56$ & 0.09 \\
3 nodes $-$ 4 nodes & -- & -- & $+1.62$ & $10^{-8}$ \\
\bottomrule
\end{tabular}
\caption{MMLU (\%) of the primary-pair students at iteration 200 for 2 seeds (the 3-node run only for seed 42), with the change over all 14{,}042 questions and its exact 2-sided McNemar $p$; the last block gives the differences between layouts at seed 42.}
\label{tab:seeds}
\end{table}

\paragraph{Seeds.}
Table~\ref{tab:seeds} gives the primary-pair runs with seed 43 for the data order and random state. The seed-43 runs score 1.25 to 2.26 points above the seed-42 runs, more than the seed-42 lead of the 4-node disaggregated student (+1.01, +0.56); the 3- and 4-node students, which differ in node split, global batch, and trainer grid, differ by 1.62 points, and 2 seeds do not establish a ranking of the layouts. The seed-43 colocated runs take 97.7\,s (KD-only) and 172.1\,s (\ballk{}) per iteration, on different nodes.

\begin{table}[t]
\centering
\footnotesize
\setlength{\tabcolsep}{4pt}
\begin{tabular}{L{3.0cm}rrr}
\toprule
CR & s/it & Effect & KD loss \\
\midrule
$-$ & 13.5 & -- & 2.42 \\
mb & 68.1 & $+54.6$ & 2.42 \\
iter & 14.9 & $+1.4$ & 2.41 \\
mb+iter & 69.3 & $+55.8$ & 2.42 \\
producer & 13.5 & $+0.0$ & 2.40 \\
all & 68.8 & $+55.3$ & 2.41 \\
\bottomrule
\end{tabular}
\caption{Parts of the cache release on Qwen3-32B to 4B (1+1 nodes, 4096-token sequences, 64 microbatches per DP rank), 30 iterations per row; s/it: median of iterations 8 to 30; Effect: s/it minus that of the row with no part on; KD loss: mean over iterations 8 to 30 (2.139 at iteration 1 in every row). CR: the parts of the cache release that were on: mb (the last pipeline stage releases after every microbatch), iter (once at each iteration boundary), producer (each producer replica after each of its first 400 slots), all for the 3, and $-$ for none; the release changes the pace, and the KD loss agrees within 0.02 nats (Appendix~\ref{app:cache}, Cache release).}
\label{tab:cachesplit}
\end{table}

\paragraph{Cache release.}
\label{app:cache}
The allocator's cached blocks can be returned to the driver (the cache release). The release has 3 parts, each with its own switch: mb, where the last pipeline stage of the trainer, which evaluates the loss, releases after every microbatch; iter, once at each iteration boundary; and producer, where each producer replica releases after each of its first 400 slots. 6 of the 8 cells over the 3 parts on the Qwen3-32B pair at 4096 tokens (Table~\ref{tab:cachesplit}, 1 run each) put 54.6 of the 55.3\,s on the per-microbatch part, 0.85\,s per microbatch; the iteration part adds 1.4\,s, and the producer part, active only in the first 12 iterations, adds 0.0\,s over iterations 8 to 30. The CR column of the run tables records which parts were on: $-$ for none, mb, iter, producer, combinations such as mb+iter, and all for the 3. It changes the pace; the mean KD loss over iterations 8 to 30 differs by at most 0.02 nats across the rows of Table~\ref{tab:cachesplit}. On the seed-43 configuration the release costs at least 0.77\,s per microbatch (42 per DP rank; the CR $-$ run is producer-limited; Table~\ref{tab:rate}). The 200-iteration seed-43 run has it on (Table~\ref{tab:rate}), and the seed-43 pace in Table~\ref{tab:main} is the run with CR $-$.

\subsection{Other Pairs}
\label{app:otherpairs}

\begin{table*}[t]
\centering
\footnotesize
\setlength{\tabcolsep}{3pt}
\begin{tabular}{L{2.7cm}L{1.45cm}rL{1.55cm}rrrL{2.2cm}}
\toprule
Run (30B student) & tp/pp/cp/ep & edp & Dispatcher & LM, it.\ 2 & Grad norm, it.\ 1 & Layer 0 & Outcome \\
\midrule
Torch forward & $-$ & $-$ & $-$ & 2.16 & $-$ & $-$ & reference \\
Disaggregated, 235B & 2/2/1/8 & 2 & flex & 2.11 & 2.2 & $-$ & stable, 20 it. \\
Disaggregated, 235B & 2/2/1/4 & 2 & flex & 2.11 & 1.7 & $-$ & stable, 200 it. \\
Colocated, 4B & 2/2/1/4 & 4 & all-to-all & 2.18 & 29 & 77.4 & finite, 2 it. \\
Colocated, 4B & 1/2/1/8 & 2 & all-to-all & 2.18 & 29 & $-$ & finite, 2 it. \\
Colocated, 4B & 2/2/1/8 & 2 & all-to-all & 2.18 & 29 & $-$ & finite, 2 it. \\
Colocated, 4B & 1/2/1/16 & 1 & all-to-all & 2.70 & $1.2\times10^5$ & 72.9 & above reference \\
Colocated, 4B & 2/1/1/16 & 2 & all-to-all & 3.15 to 3.22 & $1\times10^5$ to $1.4\times10^6$ & 110.9 & losses rise \\
Colocated, 4B & 2/1/1/16 & 2 & all-gather & 2.18 & 29 & $-$ & finite, 2 it. \\
Colocated, 4B, seq. experts & 2/1/1/16 & 2 & all-to-all & 3.11 & $7.8\times10^5$ & $-$ & above reference \\
Colocated, 4B, fp32 router & 2/1/1/16 & 2 & all-to-all & $-$ & $-$ & $-$ & NaN in the forward loss, first iteration \\
Colocated, 235B, 3 runs & 2/2/4/16 & 2 & all-to-all & 2.63 to 2.68 & $1.0$ to $1.7\times10^6$ & $-$ & Inf/NaN, 18 to 58 it. \\
Colocated, 235B & 2/2/4/16 & 2 & all-gather & 2.11 & 12 & $-$ & finite, 60 it. \\
\bottomrule
\end{tabular}
\caption{The Qwen3-30B-A3B student on different grids, 3 to 8 nodes (edp: expert data-parallel size, GPUs divided by ep$\cdot$pp; flex: the flex dispatcher with the DeepEP backend; seq. experts: sequential experts instead of grouped GEMM). Disaggregated: \system{} with the Qwen3-235B teacher; colocated: ModelOpt with the Qwen3-4B teacher (dense, same vocabulary; 4096-token sequences, learning rate $10^{-8}$ for the 2-iteration cells) or the Qwen3-235B teacher (32768 tokens). LM, it.\ 2: the student's LM loss at the second iteration, after 1 optimizer step (at learning rate $10^{-8}$ in the 4B-teacher cells); the Torch forward is the Hugging Face loss on the first iteration's 4096-token batch (the Megatron runs read 2.155 there; the second batch gives 2.178 in the healthy cells), and the 32768-token rows have none. Grad norm, it.\ 1: gradient norm of the first iteration; Layer 0: L2 norm of the first layer's output on the first batch (normalized by the data-parallel size); it.: iterations completed. The checked weight sums of the 3 runs with a layer-0 norm match the Hugging Face checkpoint to $5\times10^{-8}$ relative.}
\label{tab:moeprobe}
\end{table*}

\paragraph{Divergence control on Qwen3-235B.}
The 3 preserved colocated Qwen3-235B runs on 8 nodes (cp 4, ep 16, all-to-all) ended in Inf or NaN after 18, 44, and 58 completed iterations (Table~\ref{tab:moeprobe}). The table compares them with runs of the same student on other grids, with the dense Qwen3-4B teacher at 4096 tokens in the 4B cells. The student's LM loss at the second iteration is within 0.03 of the Hugging Face loss at ep 4 with tp 2 and at ep 8 with tp 1 and tp 2, and above it at ep 16 with tp 1 and tp 2; the gradient norm of the first iteration rises from 29 to $10^5$ and $10^6$. Replacing only the all-to-all token dispatcher by the all-gather dispatcher at the tp2/pp1/ep16 grid gives the loss (2.178) and gradient norm (29) of the ep-4 cell, whereas sequential experts and an fp32 router do not; on the 235B pair it gives a second-iteration loss of 2.11 and a first-iteration gradient norm of 12, against $1.0$ to $1.7\times10^{6}$ with all-to-all. The checked weights match the Hugging Face tensors, and the output of the first layer is 43\% above that of the healthy run at ep 16 in the tp2 run (6\% below at tp 1; layer 0 was not measured in the all-gather cell). At ep 16 the all-to-all dispatcher gives a wrong forward pass and the all-gather dispatcher on the same grid does not, and the tp1/pp2 (edp 1) and tp2/pp1 (edp 2) grids are both above the reference, as is the cp-4 grid with the 235B teacher. With the all-gather dispatcher the colocated 8-node run differs from the learning-rate-$4\times10^{-4}$ run that diverged after 18 iterations in the dispatcher and the learning-rate schedule (the 2 runs share their first 18 iterations, after which the 60-iteration schedule decays the learning rate sooner), and completes its 60 planned iterations with finite losses (the longest all-to-all run failed after 58) (Table~\ref{tab:moeprobe}; memory and pace in Table~\ref{tab:brackets}). Disaggregated on 4+4 nodes with the KD-only loss ($\lambda{=}1$, the LM term skipped), peak learning rate $4\times10^{-4}$, and the 200-iteration schedule, the pair completes the 60 iterations of a run stopped there with no NaN and no skipped iteration, at 105\,s per iteration (Table~\ref{tab:runs}). The disaggregated \sqlk{} runs of Table~\ref{tab:runs} on 4+4 and on 2+1 nodes run 200 iterations each without a NaN.

\begin{table}[t]
\centering
\footnotesize
\setlength{\tabcolsep}{1.5pt}
\begin{tabular}{L{1.0cm}L{1.0cm}L{1.15cm}rL{1.5cm}L{1.8cm}}
\toprule
Layout & Teacher & Grid & edp & Dispatcher & Outcome \\
\midrule
Disagg. & Ultra & 2/2/1/4 & 4 & flex & finite, 200 it. \\
Coloc. & Ultra & 2/4/8/16 & 1 & all-to-all & NaN, gradient norm \\
Coloc. & Ultra & 4/4/4/16 & 1 & all-to-all & NaN, forward loss \\
Coloc. & Ultra & 4/4/4/16 & 1 & all-gather & NaN, gradient norm \\
Coloc. & Nano & 4/4/4/16 & 1 & all-gather & NaN, gradient norm \\
Coloc. & Nano & 4/4/1/16 & 1 & all-gather & NaN, gradient norm \\
Coloc. & Nano & 4/4/4/4 & 4 & all-gather & finite, 3 it. \\
\bottomrule
\end{tabular}
\caption{Nemotron-3 Nano student on 8 nodes (Disagg.: \system{} on 4+4 nodes) with the Ultra teacher or, for the student alone, the Nano checkpoint as its own teacher. Grid: tp/pp/cp/ep; edp: expert data-parallel size. NaN: the first value that is not finite, in the first iteration; finite: iterations completed. The colocated runs share data, global batch 128, 32768 tokens, and learning rate $2\times10^{-5}$, except the 2/4/8/16 row ($4\times10^{-4}$).}
\label{tab:nemoprobe}
\end{table}

\paragraph{NaN on the Nemotron pair.}
Table~\ref{tab:nemoprobe} lists the colocated Nemotron runs next to the disaggregated one. The student gives NaN in its first iteration in each of the 5 colocated ep-16 cells (tp 2 or 4, pp 4, cp 1, 4, or 8), with the all-to-all and all-gather dispatchers and the Ultra and Nano teachers, and stays finite for 3 iterations at ep 4 (edp 4) with the Nano teacher, the all-gather dispatcher, and cp 4; every ep-16 cell has edp 1. The all-to-all dispatcher gives NaN already in the forward loss at tp4/cp4. The Nano-teacher rows run the Nano student with its own initial checkpoint as teacher, so ModelOpt's KL stays in the loss and only the Ultra teacher is removed.

\begin{table*}[t]
\centering
\footnotesize
\setlength{\tabcolsep}{3pt}
\begin{tabular}{L{3.0cm}rrrrcr@{\hspace{8pt}}L{1.8cm}rrrc}
\toprule
\multicolumn{7}{l}{\emph{(a) Llama}} & \multicolumn{5}{l}{\emph{(b) Qwen3-32B to 4B}} \\
\cmidrule(r){1-7}\cmidrule(l){8-12}
Layout (nodes) & Iter. & s/it & GPU-s & Peak & CR & MMLU & Seq.\ $\times$ batch & Microbatches & s/it & Tokens/s/GPU & CR \\
\midrule
Colocated (8) & 5 & 484 & 31.0k & 36.7 & -- & $-$ & $4096\times128$ & 64 & 13.5 & 2426 & $-$ \\
Colocated (16), training & 200 & 301 & 38.5k & 27.5 & -- & 76.98 & $4096\times128$ & 64 & 69.6 & 471 & all \\
\quad pace check & 20 & 305 & 39.0k & 27.4 & -- & $-$ & $8192\times64$ & 32 & 11.9 & 2755 & $-$ \\
Disagg. (ours, 4+2) & 5 & 818 & 39.3k & 68.6 & all & $-$ & $16384\times32$ & 16 & 13.5 & 2427 & $-$ \\
Disagg. (ours, 8+8), training & 400 & 186 & 23.8k & 50.3 & $-$ & 76.41 &  &  &  &  &  \\
\quad pace check & 20 & 235 & 30.1k & 50.3 & all & $-$ &  &  &  &  &  \\
Teacher &  &  &  &  &  & 85.21 &  &  &  &  &  \\
Initial student &  &  &  &  &  & 78.17 &  &  &  &  &  \\
\bottomrule
\end{tabular}
\caption{(a)~Llama runs at 4.19M tokens per iteration: iterations, median iteration time, GPU-seconds per iteration, and peak memory in GiB (colocated: per GPU; disaggregated: the trainer; torch-allocated). Colocated grids are tp32/pp2/cp1/dp1 (8 nodes) and tp32/pp2/cp2/dp1 (16); disaggregated trainers are tp8/pp2/cp2 with dp1 (4+2) and dp2 (8+8); MMLU (\%) is of the trained students, the initial student, and the teacher. The 4+2 median is over iterations 2 to 5. A training run is the full-length run; a pace check is 20 iterations of the same layout, run separately from the training run. (b)~Disaggregated Qwen3-32B to 4B at 0.52M tokens per iteration on 2 nodes (tp2/pp2/dp2, 4 producer replicas); Seq.\ $\times$ batch: sequence length times global batch; Microbatches: per DP replica. CR: the parts of the cache release that were on (Table~\ref{tab:cachesplit}). A dash marks the colocated runs, which do not use it.}
\label{tab:otherpairs}
\end{table*}

\paragraph{Llama pair.}
Table~\ref{tab:otherpairs}a lists the Llama runs. At 16 nodes each layout has a training run and a pace check. The pace of each layout was measured again in a 20-iteration pace check of the same layout. The colocated pace moves by 1.3\% between the 2. The pace check ran with CR all and is 48.7\,s slower, 0.76\,s per microbatch over 64 microbatches per DP rank; the release costs 0.77\,s per microbatch on the primary pair, and no Llama run isolates it. The speedup at 16 nodes is 1.6$\times$ from the training runs, with CR $-$, and 1.3$\times$ from the pace checks. At 8 nodes the colocated layout needs 31.0k GPU-seconds per iteration, 3\% more than the disaggregated pace check at 16 nodes (30.1k) and 30\% more than its training run (23.8k); the disaggregated layout runs on 6 nodes (39.3k GPU-seconds, 5 iterations) where the colocated layout needs 8. Both students score within 1.8 points of the initial student on MMLU (MMLU in Table~\ref{tab:otherpairs}a).

\paragraph{Qwen3-32B pair.}
On Qwen3-32B to 4B the disaggregated run with CR $-$ is 5.1 to 5.2$\times$ faster per GPU than with CR all (Tables~\ref{tab:cachesplit} and~\ref{tab:otherpairs}b), and stays below ModelOpt and KDFlow (Table~\ref{tab:systems}). Table~\ref{tab:otherpairs}b holds the tokens per step fixed at 0.52M and varies the sequence length: with CR $-$ the iteration time is 13.5, 11.9, and 13.5\,s at 64, 32, and 16 microbatches per replica, whereas with CR all the 4096-token configuration pays 0.86\,s per microbatch (0.85\,s for the per-microbatch part alone; Table~\ref{tab:cachesplit}). In the 4096-token CR $-$ run the producers' duty is 0.93 (Table~\ref{tab:rate}).

\section{Artifacts, Licenses, and Compute}
\label{app:artifacts}

The models are Qwen3.5, Qwen3, and Qwen3-4B-Base (Apache License 2.0), Nemotron 3 Nano (NVIDIA Nemotron Open Model License) and Ultra (OpenMDW License 1.1), and Llama 3.1 (Llama 3.1 Community License). The corpus is FineWeb-Edu (ODC-By 1.0) and the benchmark is MMLU (MIT License). The software is Megatron-Bridge, Megatron-Core, Model Optimizer, NeMo-RL, Transformer Engine, and the evaluation tool (Apache License 2.0), and KDFlow and Triton (MIT License). All were used for research distillation, as their licenses permit, and the trained students are not released. FineWeb-Edu is a filtered web corpus and MMLU a multiple-choice benchmark; neither is annotated by us. An anonymized codebase accompanies the submission: it holds the code, the run configuration behind each table, the memory and traffic models with their tests, the scripts that produce the tables and figure data, and a standalone MMLU protocol; it holds no run data, so the measured tables are reproduced by re-running the configurations on the stated hardware. The runs reported in this paper used about 15{,}000 GPU-hours on NVIDIA H100 80\,GB GPUs, including the baselines and the MMLU evaluations but not failed attempts or memory and NaN probes.

\section{Glossary of Symbols}
\label{app:glossary}

Subscripts $t$ and $s$ denote the teacher and the student, and $x\in\{t,s\}$ stands for either. Superscripts $(q)$, $(j)$, and $(q,j)$ denote the cp rank, the tp rank, and the cell of both. Indices start at 0. Quantities in bytes carry the unit explicitly, and GiB is $2^{30}$ bytes.

\newcommand{\gl}[2]{\noindent\parbox[t]{2.2cm}{\raggedright #1}\hspace{4pt}\parbox[t]{4.9cm}{\raggedright #2}\par\vspace{1.5pt}}
\newcommand{\glhead}[1]{\par\medskip\noindent{\raggedright\textbf{#1}\par}\nobreak\smallskip}
\begingroup\footnotesize\renewcommand{\arraystretch}{1.12}

\glhead{Models and sizes (\S\ref{sec:estimation}).}
\gl{$D_x$, $E_x$}{Dense and routed-expert parameters of model $x$; $P_t=D_t{+}E_t$}
\gl{$\ell_x$}{Layers of model $x$}
\gl{$\ell^{\mathrm{att}}_x$, $\ell^{\mathrm{lin}}_x$, $\ell^{\mathrm{moe}}_x$}{Full-attention, linear-attention, and MoE layers of model $x$}
\gl{$d_x$ ($\dt$, $\ds$)}{Hidden width of model $x$}
\gl{$V$, $\Vloc$}{Vocabulary size and its share $V/\mathrm{tp}$ per tp rank}
\gl{$h$, $h_{\mathrm{kv}}$}{Attention heads and key-value heads}
\gl{$d_{\mathrm{head}}$}{Width of one attention head}
\gl{$k_{\mathrm{e}}$}{Routed experts per token}
\gl{$w$}{Per-token width of the linear-attention context exchange (Table~\ref{tab:traffic}); in Table~\ref{tab:parity}, the width in iterations of the moving average}

\glhead{Parallelism and layout (\S\ref{sec:estimation}).}
\gl{tp, pp, cp, dp, ep, etp}{Tensor-, pipeline-, context-, data-, expert-, and expert-tensor-parallel degrees}
\gl{$\mathrm{edp}$}{Data-parallel size of the routed experts, $\mathrm{dp}\,\mathrm{cp}\,\mathrm{tp}/\mathrm{ep}$ at $\mathrm{etp}{=}1$}
\gl{$g$, $\mathcal{G}$, $g^\star$}{A parallelism grid, the set of legal grids, and the node-minimal grid that fits}
\gl{$N(g)$, $N$}{Nodes of grid $g$; number of nodes, each of 8 GPUs}
\gl{$N_{\mathrm{stu}}$, $N_{\mathrm{prod}}$}{Trainer nodes and producer nodes}
\gl{$R$}{Producer replicas of the teacher}
\gl{$\mathrm{tp}_p$, $\mathrm{ep}_p$}{Tensor and expert span of one producer replica}
\gl{$M$}{Per-GPU memory budget, 79.11\,GiB}
\gl{$L$}{Tokens (rows) per sequence; $L=\mathrm{cp}\cdot r$}
\gl{gbs}{Global batch size in sequences}
\gl{$N_{\mathrm{tok}}$}{Tokens per iteration, $\mathrm{gbs}\cdot L$}
\gl{$\tau$}{Tokens handled by one GPU per iteration, $N_{\mathrm{tok}}/(\mathrm{dp}\,\mathrm{cp})$}
\gl{$n_{\mathrm{mb}}$}{Microbatches per DP replica, $\mathrm{gbs}/\mathrm{dp}$}
\gl{$\mu$}{Microbatches in flight across the pipeline}
\gl{$b$}{Bytes per activation element, 2}
\gl{$n$}{Degree of a parallel dimension (Table~\ref{tab:traffic}); number of parts in Lemma~\ref{lem:merge}}
\gl{$\Delta$, $n\Delta$}{Distance between the ranks of a dimension in Megatron's rank order and the number of ranks it spans; the dimension uses NVLink if $n\Delta\le8$, the GPUs of a node, and InfiniBand otherwise}

\glhead{Memory model (\S\ref{sec:memory}).}
\gl{$S$}{Student weights, gradients, and optimizer state per GPU}
\gl{$P_{\mathrm{loc}}$}{Student parameters per GPU}
\gl{$A_{\mathrm{col}}$, $A_{\mathrm{str}}$}{Activation memory in the colocated stack and activation and temporary memory of the streamed trainer (both drawn as $A$ in Figure~\ref{fig:overview})}
\gl{$T_{\mathrm{col}}$}{Teacher weight shard per GPU in the colocated layout}
\gl{$\kappa_{\mathrm{col}}$}{Colocated logits buffers}
\gl{$\kappa_{\mathrm{str}}$}{Memory retained by the streamed loss, the windowed fused loss on streamed hidden states}
\gl{$\sigma_{\mathrm{col}}$, $\sigma_{\mathrm{str}}$}{Non-PyTorch memory (CUDA context, NCCL, allocator) of the colocated stack and the trainer}
\gl{$A_{\mathrm{gc}}$, $A_{\mathrm{bwd}}$, $A_{\mathrm{moe}}$}{Trainer reference-cycle garbage, loss-backward and recompute temporary memory, and expert-layer block}\gl{$\ell_{\mathrm{rc}}$}{Layers recomputed at once}
\gl{$\varphi_p$, $\omega_p$, $\beta_e$}{Producer forward-pass activation and workspace memory, its NCCL buffers, and bytes per routed-expert parameter (1 for fp8, 2 for bf16)}
\gl{$M_{\mathrm{col}}$, $M_{\mathrm{tr}}$, $M_{\mathrm{pr}}$}{Predicted peak memory per GPU of the colocated stack, the trainer, and the producer}
\gl{$\rho_{\mathrm{rep}}$}{Forward-only token rate of one producer replica (measurement per pair in Appendix~\ref{app:pipeline})}
\gl{$T_{\mathrm{iter}}$}{Iteration time}

\glhead{Traffic model (\S\ref{sec:movement}).}
\gl{$\Phi^{\mathrm{tp}}_x$, $\Phi^{\mathrm{pp}}_x$, $\Phi^{\mathrm{cp}}_x$, $\Phi^{\mathrm{ep}}_x$}{Bytes sent per GPU and iteration by TP collectives, pipeline activation transfers (per stage boundary and direction), CP exchanges, and expert dispatch and combine collectives of model $x$}
\gl{$\Phi^{\mathrm{dp}}_s$}{Bytes sent per GPU by the student's gradient reduction and parameter gather}
\gl{$\Phi^{\mathrm{xfer}}$}{Bytes per last-stage rank and iteration of the teacher-to-student transfer}
\gl{$\nu_t$, $\nu_s$}{All-reduces per layer: 2 for the teacher's forward pass, 6 for the student's forward, recompute, and backward passes}

\glhead{Algorithm (\S\ref{sec:algorithm}).}
\gl{$q$, $j$, $(q,j)$}{cp rank, tp rank, and the cell (last-stage rank) with both}
\gl{$i$}{Row (token) index}
\gl{$r$}{Rows per slot, one cp rank's share of a sequence}
\gl{$K$, $C$, $k$}{Windows per slot, rows per window ($r=K\cdot C$), and window index; $K$ also denotes top-$K$ in Sections~\ref{sec:related} and~\ref{sec:experiments}}
\gl{$u$, $v$}{Entry index within a tp rank's vocabulary shard and column index over the full vocabulary}
\gl{$\eta$, $\eta_0$}{Slot index in the producer's stream and the start slot of a resume}
\gl{$n_{\mathrm{groups}}$}{Receiver groups, one per DP replica ($\mathrm{dp}$)}
\gl{$\delta$}{Iterations the producer may run ahead of the trainer}
\gl{$\bm{H}_x^{(q)}$, $\bm{H}_{x,k}^{(q)}$, $\bm{H}_x$}{Final hidden states of model $x$ for the rows of cp rank $q$ (the slot), its window $k$, and all $L$ rows}
\gl{$\bm{W}_x$, $\bm{W}_x^{(j)}$}{lm\_head of model $x$ and its column shard on tp rank $j$}
\gl{$\bm{Z}_x^{(q,j)}$, $\bm{Z}_{x,k}^{(q,j)}$}{Block of the logits of model $x$ held by cell $(q,j)$, and its window $k$}
\gl{$\bm{z}_x$, $\bm{z}_x^{(j)}$, $z_{x,v}$}{Logits of one row, their shard on tp rank $j$, and the entry of column $v$}

\glhead{Loss and statistics (\S\ref{sec:loss}).}
\gl{$p_x$, $p_{x,v}$}{Softmax of $\bm{z}_x$ over the $V$ columns and its entry for column $v$}
\gl{$y_i$, $\bm{e}_{y_i}$}{Label of row $i$ and its one-hot vector}
\gl{$\mathcal{L}$, $\lambda$}{Training loss and the weight of the KD term}
\gl{$\gamma$, $\gamma_{\mathrm{res}}$}{Ratio of the LM loss to the KL loss (ModelOpt's balancer) or to the KD loss (rescaled run) in a batch}
\gl{$\mathcal{L}_{\mathrm{SQ}}$, $\mathcal{L}_{\mathrm{Bal}}$, $\mathcal{L}_{\mathrm{res}}$}{The \sqlk{}, \ballk{}, and rescaled-run losses \eqref{eq:3losses}; $\lambda_\iota$: KD weight at the completed fraction $\iota$ of the iterations}
\gl{$\mathcal{L}_{\mathrm{LM}}$, $\mathcal{L}_{\mathrm{KD}}$}{Language-modeling and KD terms of the loss, $\sum_i\mathrm{CE}_i$ and $\sum_i\mathrm{KD}_i$}
\gl{$\mathrm{KL}_i$, $\bm{e}_{y_i}$, $\omega$}{KL divergence of row $i$, the one-hot label vector, and the KD weight of the target $(1-\omega)\bm{e}_{y_i}+\omega p_t$}
\gl{$\mathcal{L}_{\mathrm{KL}}$, $H_i$}{$\sum_i\mathrm{KL}(p_t\,\|\,p_s)$ and the teacher's entropy of row $i$}
\gl{$\mathrm{CE}_i$, $\mathrm{KD}_i$}{Label loss and distillation loss (soft cross-entropy) of row $i$; $\mathrm{KL}(p_t\,\|\,p_s)$ differs from $\mathrm{KD}_i$ by the teacher's entropy}
\gl{$\mathrm{lse}_s$}{Log-sum-exp of the student's logits of a row}
\gl{$m_x$, $a_x$, $c$}{Row maximum of $\bm{z}_x$, sum of $e^{z-m}$, and teacher-weighted sum of student logits}
\gl{$m_x^{(j)}$, $a_x^{(j)}$, $c^{(j)}$}{The same statistics over the shard of tp rank $j$}
\gl{$m_{x,u}$, $a_{x,u}$, $c_u$}{Scan state after entry $u$ of a shard}
\gl{$m_{x,\mathcal{S}}$, $a_{x,\mathcal{S}}$, $c_{\mathcal{S}}$}{The statistics over a set $\mathcal{S}$ of columns}
\gl{$\mathcal{S}$, $\mathcal{P}_l$, $\mathcal{U}_u$}{A set of columns, its $n$ disjoint parts (index $l$), and the tiles read in one scan ($U$ tiles)}
\gl{$|\mathcal{S}|$}{Number of columns in $\mathcal{S}$}
\gl{$\bm{G}^{(j)}$}{Gradient of $\mathcal{L}$ with respect to the window's student logits on tp rank $j$}
\gl{$\mathbb{R}$, $\top$, $e$}{Real numbers, matrix transpose, and the base of the natural logarithm}
\gl{$[v{=}y_i]$}{1 if $v=y_i$ and 0 otherwise}
\gl{$p$}{$p$-value of the paired McNemar test (Table~\ref{tab:seeds}), not the softmax}

\endgroup

\end{document}